\documentclass[a4paper]{article} 
\usepackage[margin=1in]{geometry}

\usepackage{microtype}
\usepackage{graphicx}
\usepackage{subfigure}
\usepackage{booktabs} 

\usepackage{hyperref}




\usepackage{amsmath}
\usepackage{amssymb}
\usepackage{mathtools}
\usepackage{amsthm}

\theoremstyle{plain}
\newtheorem{theorem}{Theorem}[section]

\newtheorem{lemma}[theorem]{Lemma}
\newtheorem{corollary}[theorem]{Corollary}
\theoremstyle{definition}

\newtheorem{assumption}[theorem]{Assumption}
\theoremstyle{remark}
\newtheorem{remark}[theorem]{Remark}

\usepackage[textsize=tiny]{todonotes}

\usepackage{amsmath,amssymb,amsthm}
\usepackage{bm}
\usepackage{natbib}
\usepackage{graphicx,here,comment}
\usepackage{algorithmic}
\usepackage{algorithm}
\usepackage{multirow}
\usepackage{xcolor}
\usepackage{wrapfig}
\usepackage{footnote}
\makesavenoteenv{table}  

\usepackage{mathtools}

\usepackage{hyperref}

\newcommand{\rbr}[1]{\left(#1\right)}
\newcommand{\sbr}[1]{\left[#1\right]}
\newcommand{\cbr}[1]{\left\{#1\right\}}

\newcommand{\R}{\mathbb{R}}
\newcommand{\N}{\mathbb{N}}

\newcommand{\mH}{\mathcal{H}}

\newcommand{\mT}{\mathcal{T}}

\newcommand{\mN}{\mathcal{N}}
\newcommand{\mX}{\mathcal{X}}

\newcommand{\Ep}{\mathbb{E}}
\renewcommand{\Pr}{\mathbb{P}}

\renewcommand{\hat}{\widehat}
\renewcommand{\tilde}{\widetilde}

\newcommand{\argmax}{\operatornamewithlimits{argmax}}

\def\bX{\mathbf{X}}

\def\bK{\mathbf{K}}

\def\bk{\bm{k}}

\def\bI{\bm{I}}

\def\bx{\bm{x}}
\def\by{\bm{y}}

\def\bZ{\bm{Z}}


\usepackage{color}

\usepackage{newtxtext,newtxmath}

\newcommand{\secref}[1]{Section~\ref{#1}}
\newcommand{\appref}[1]{Appendix~\ref{#1}}
\newcommand{\algoref}[1]{Algorithm~\ref{#1}}
\newcommand{\asmpref}[1]{Assumption~\ref{#1}}
\newcommand{\thmref}[1]{Theorem~\ref{#1}}
\newcommand{\lemref}[1]{Lemma~\ref{#1}}
\newcommand{\corref}[1]{Corollary~\ref{#1}}

\newcommand{\remref}[1]{Remark~\ref{#1}}

\newcommand{\tabref}[1]{Table~\ref{#1}}








\newcommand{\sek}{k_{\mathrm{SE}}}
\newcommand{\matk}{k_{\mathrm{Mat\acute{e}rn}}}

\usepackage{authblk}

\title{Improved Regret Analysis in Gaussian Process Bandits: \\ Optimality for Noiseless Reward, RKHS norm, \\and Non-Stationary Variance}
\date{}
\author[1]{Shogo Iwazaki}
\author[2,3]{Shion Takeno}

\affil[1]{MI-6 Ltd.}
\affil[2]{Nagoya University}
\affil[3]{RIKEN AIP}

\affil[ ]{\texttt{{shogo.iwazaki@gmail.com}}}
\affil[ ]{\texttt{{takeno.s.mllab.nit@gmail.com}}}

\begin{document}
\maketitle

\begin{abstract}
We study the Gaussian process (GP) bandit problem, whose goal is to minimize regret under an unknown reward function lying in some reproducing kernel Hilbert space (RKHS). 
The maximum posterior variance analysis is vital in analyzing near-optimal GP bandit algorithms such as maximum variance reduction (MVR) and phased elimination (PE).
Therefore, we first show the new upper bound of the maximum posterior variance, which improves the dependence of the noise variance parameters of the GP. By leveraging this result, we refine the MVR and PE to obtain (i) a nearly optimal regret upper bound in the noiseless setting and (ii) regret upper bounds that are optimal with respect to the RKHS norm of the reward function. Furthermore, as another application of our proposed bound, we analyze the GP bandit under the time-varying noise variance setting, which is the kernelized extension of the linear bandit with heteroscedastic noise. For this problem, we show that MVR and PE-based algorithms achieve noise variance-dependent regret upper bounds, which matches our regret lower bound.
\end{abstract}

\section{Introduction}
\label{sec:intro}
The Gaussian process (GP) bandits~\citep{srinivas10gaussian} is a powerful framework for sequential decision-making tasks to minimize regret defined by a black-box reward function, which belongs to known reproducing kernel Hilbert space (RKHS). 
The applications include many fileds such as robotics~\cite{Berkenkamp2021bayesian}, experimental design~\cite{lei2021bayesian}, and hyperparameter tuning task~\cite{snoek2012practical}.

Many existing studies have been conducted to obtain the theoretical guarantee for the regret.
Establised work by \citet{srinivas10gaussian} has shown the upper bounds of the cumulative regret for the GP upper confidence bound (GP-UCB) algorithm.
Furthermore, \citet{valko2013finite} have shown the tighter regret upper bound for the SupKernelUCB algorithm.
\citet{scarlett2017lower} have shown the lower bound of the regret, which implies that the regret upper bound from \citep{valko2013finite} is near-optimal; that is, the regret upper bound matches the lower bound except for the poly-logarithmic factor.
Then, several studies further tackled obtaining a near-optimal GP-bandit algorithm.
\citet{vakili2021optimal} have proposed maximum variance reduction (MVR), which is shown to be near-optimal for the simple regret incurred by the last recommended action.
Furthermore, \citet{li2022gaussian} have shown that phased elimination (PE) is near-optimal for the cumulative regret.
The regret analysis of MVR and PE heavily depends on the upper bound for the maximum posterior variance.

We derive the upper bound of the maximum posterior variance in \secref{sec:uub_pv}, by which we tackle tightening the regret upper bound in the settings where room for improvement remains.
Our contributions are summarized as follows:
\begin{enumerate}
    \item In \secref{sec:uub_pv}, we obtain the upper bound of the maximum posterior variance (Lemma~\ref{lem:pvu_mvr} and Corollary~\ref{cor:pvu_mvr}). Our proposed bound is tighter than the existing bound when the noise variances approach zero. 
    %
    \item In \secref{sec:nls}, we analyze the GP bandit under the noiseless setting. We show a novel result that PE achieves the cumulative regret upper bound that matches the conjectured lower bound shown by \citet{vakili2022open} under common assumptions in the GP bandit literature. Furthermore, we prove that MVR achieves the exponentially converging and near-optimal simple regret upper bounds for squared exponential (SE) and Mat\'{e}rn kernels, respectively. These results are summarized in Tables~\ref{tab:nl_cr_compare}--\ref{tab:nl_sr_compare}.
    \item In \secref{sec:rkhs_od}, we show that the modified PE- and MVR-style algorithms achieve the near-optimal cumulative and simple regret upper bounds with respect to the RKHS norm upper bound of the reward function under several conditions. These results are summarized in Tables~\ref{tab:cr_rkhs_compare}--\ref{tab:sr_eps_rkhs_compare}.
    \item In \secref{sec:nsv}, we analyze the GP-bandit problem with the non-stationary noise variance, which is the kernelized extension of the linear bandit with heteroscedastic noise~\citep{zhou2021nearly}.
    We first study the regret lower bound.
    Then, we show that the modified PE- and MVR-style algorithms achieve the near-optimal cumulative and simple regret upper bounds, respectively. 
    To our knowledge, our analyses are the first for this setting, though the non-stationary noise is a frequently faced problem.
\end{enumerate}

\subsection{Related Works}

The theoretical assumption of the GP bandit is twofold: Bayesian setting~\citep{srinivas10gaussian,freitas2012exponential,Russo2014-learning,scarlett2018tight,takeno2023-randomized,takeno2024-posterior} where the reward function follows GPs, and the frequentist setting, where the reward function lies in known RKHS~\citep{srinivas10gaussian,chowdhury2017kernelized,vakili2021optimal,li2022gaussian}.
Although this paper concentrates on deriving the regret upper bound for the frequentist setting, our Lemma~\ref{lem:pvu_mvr} and Corollary~\ref{cor:pvu_mvr} are versatile and can be applied to the Bayesian setting.

Many GP bandit algorithms have been proposed in the frequentist setting~\citep[for example, ][]{srinivas10gaussian,valko2013finite,chowdhury2017kernelized,janz2020bandit,vakili2021optimal,li2022gaussian}.
Although several existing methods~\citep{valko2013finite,janz2020bandit,camilleri2021high,salgia2021domain,li2022gaussian} achieve near-optimal regret upper bounds for the ordinary GP bandit setting as summarized in \citep{li2022gaussian}, we develop PE- and MVR-style algorithms due to their simplicity.
On the other hand, although these existing methods are near-optimal regarding the time horizons, the optimality regarding the RKHS norm of the reward function has not been shown as summarized in Tables~\ref{tab:cr_rkhs_compare}--\ref{tab:sr_eps_rkhs_compare}.

The regret analyses are also conducted on the noiseless setting~\citep{bull2011convergence,lyu2019efficient,vakili2022open,salgiarandom,kim2024bayesian,flynn2024tighter}.
Regarding the cumulative regret, we obtained a tighter upper bound for both SE and Mat\'ern kernels than existing results without the additional assumption for the reward function like Assumption~4.2 in \citep{salgia2021domain}.
Regarding the simple regret, \citet{kim2024bayesian} have shown that the random sampling-based algorithm achieves the known-best regret upper bound in terms of the expectation regarding the algorithm's randomness.
Compared with this result, we show the regret upper bounds that always hold with the deterministic MVR-style algorithm.
In particular, the regret upper bound is tighter for the Mat\'ern kernel than that from \citep{kim2024bayesian}.
Tables~\ref{tab:nl_cr_compare}--\ref{tab:nl_sr_compare} summarize the comparison.

Compared with the regret upper bound, the analysis for the regret lower bound is limited~\citep{bull2011convergence,scarlett2017lower,cai2021on,vakili2022open}.
From these results, we will confirm the optimality of the GP bandit algorithms in Sections~\ref{sec:nls} and \ref{sec:rkhs_od}.
In \secref{sec:nsv}, our regret lower bound for the non-stationary noise variance setting is directly obtained from the proofs of~\citep{bull2011convergence,scarlett2017lower}.

The linear bandit with heteroscedastic noise, where the noise variance is non-stationary with respect to the time horizons, has been studied \citep{zhou2021nearly,zhang2021improved,kim2022improved,zhou2022computationally,zhao2023variance}.
These studies aim to obtain the noise variance-dependent regret upper bound, characterized by the sum of noise variances.
To our knowledge, the kernelized extension of this setting has not been investigated.
Furthermore, as discussed in \secref{sec:nsv}, the direct extension from the linear bandit methods is not near-optimal.


\begin{table*}[tb]
    \centering
    \caption{Comparison between existing noiseless algorithms' guarantees for cumulative regret and our result. 
    In all algorithms, the smoothness parameter of the Mat\'ern kernel is assumed to be $\nu > 1/2$.
    Furthermore, $d$, $\ell$, $\nu$, and $B$ are supposed to be $\Theta(1)$ here. ``Type'' column shows that the regret guarantee is  (D)eterministic or (P)robabilistic. Throughout this paper, the notation $\tilde{O}(\cdot)$ represents the order notation whose poly-logarithmic dependence is ignored.
    }
    \begin{tabular}{c|c|c|c|c|c|l}
    \multicolumn{1}{c|}{\multirow{2}{*}{Algorithm}} & \multicolumn{1}{|c|}{\multirow{2}{*}{Regret (SE)}} & \multicolumn{3}{|c|}{Regret (Mat\'ern)} & \multirow{2}{*}{Type} & \multirow{2}{*}{Remark} \\ \cline{3-5}
    \multicolumn{1}{c|}{}  & \multicolumn{1}{|c|}{}  & $\nu < d$  & $\nu = d$  & $\nu > d$ &  & \\ \hline \hline
     GP-UCB & \multirow{3}{*}{$O\rbr{\sqrt{T \ln^{d} T}}$} & \multicolumn{3}{|c|}{\multirow{3}{*}{$\tilde{O}\rbr{T^{\frac{\nu + d}{2\nu + d}}}$}} & \multirow{3}{*}{D} & \\ 
     \cite{lyu2019efficient} & & \multicolumn{3}{|c|}{} & & \\ 
     \cite{kim2024bayesian} & & \multicolumn{3}{|c|}{} & & \\ \hline
     Explore-then-Commit & \multirow{2}{*}{N/A} & \multicolumn{3}{|c|}{\multirow{2}{*}{$\tilde{O}\rbr{T^{\frac{d}{\nu + d}}}$}} & \multirow{2}{*}{P} & \\ 
     \cite{vakili2022open} &  & \multicolumn{3}{|c|}{} & & \\ \hline
         Kernel-AMM-UCB
      & \multirow{2}{*}{$O\rbr{\ln^{d+1} T}$} & \multicolumn{3}{|c|}{\multirow{2}{*}{$\tilde{O}\rbr{T^{\frac{\nu d + d^2}{2\nu^2 + 2\nu d + d^2}}}$}} & \multirow{2}{*}{D} & \\ 
      \cite{flynn2024tighter}
      &  & \multicolumn{3}{|c|}{} & & \\ \hline
     REDS & \multirow{2}{*}{N/A} & \multirow{2}{*}{$\tilde{O}\rbr{T^{\frac{d - \nu}{d}}}$} & \multirow{2}{*}{$O\rbr{\ln^{\frac{5}{2}} T}$} & \multirow{2}{*}{$O\rbr{\ln^{\frac{3}{2}} T}$} & \multirow{2}{*}{P} & Assumption for \\
     \cite{salgiarandom} &  &  &  &  & & level-set is required. \\ \hline
     \textbf{PE} & \multirow{2}{*}{$O\rbr{\ln T}$} & \multirow{2}{*}{$\tilde{O}\rbr{T^{\frac{d - \nu}{d}}}$} & \multirow{2}{*}{$O\rbr{\ln^{2 +\alpha} T}$} & \multirow{2}{*}{$O\rbr{\ln T}$} & \multirow{2}{*}{D} & $\alpha > 0$ is an arbitrarily  \\ 
     \textbf{(our analysis)} & & & & &  & fixed constant. \\ \hline
     Conjectured Lower Bound & \multirow{2}{*}{N/A} & \multirow{2}{*}{$\Omega\rbr{T^{\frac{d - \nu}{d}}}$} & \multirow{2}{*}{$\Omega(\ln T)$} & \multirow{2}{*}{$\Omega(1)$} & \multirow{2}{*}{N/A} & \\
     \cite{vakili2022open} & & & & & & \\
    \end{tabular}
    \label{tab:nl_cr_compare}
\end{table*}

\section{Preliminaries}
\label{sec:prelim}
\paragraph{Problem Setting.}
Let $f: \mX \rightarrow \R$ be an unknown reward function with compact input domain $\mX \subset \R^d$. At each step $t$, a learner chooses the query point $\bx_t \in \mX$; after that, the learner observes corresponding reward $y_t \coloneqq f(\bx_t) + \epsilon_t$, where $\epsilon_t$ is a mean-zero random variable. Under this setup, the learner's goal is to minimize the following cumulative regret $R_T$ or the simple regret $r_T$:
\begin{align}
    R_T &= \sum_{t \in [T]} f(\bx^{\ast}) - f(\bm{x}_t), \\
    r_T &= f(\bx^{\ast}) - f(\hat{\bm{x}}_T),
\end{align}
where $[T] = \{1, \ldots, T\}$ and $\bx^{\ast} \in \mathrm{arg~max}_{\bx \in \mX} f(\bx)$. 
Furthermore, $\hat{\bm{x}}_T \in \mX$ is the estimated maximizer, 
which is returned by the algorithm at the end of step $T$.
\paragraph{Regularity Assumptions.}
To construct an algorithm, we leverage the following assumptions.

\begin{assumption}[Smoothness of $f$]
\label{asmp:smoothness}
Assume that $f$ be an element of RKHS $\mH_k$ with bounded RKHS norm $\|f\|_{\mH_k} \leq B < \infty$. Here, $\mH_k$ and $\|f\|_{\mH_k}$ respectively denote RKHS and its norm endowed with known positive definite kernel $k: \mX \times \mX \rightarrow \R$. 
Furthermore, we assume $k(\bx, \bx) \leq 1$ holds for all $\bx \in \mX$.
\end{assumption}

\begin{assumption}[Assumption for noise]
\label{asmp:noise}
The noise sequence $(\epsilon_t)_{t \in \mathbb{N}_+}$ is mutually independent. 
Furthermore, assume that $\epsilon_t$ is a sub-Gaussian random variable with variance proxy 
$\rho_t \geq 0$; namely, $\Ep[\exp(\lambda \epsilon_t)] \leq \exp(\lambda^2 \rho_t^2 / 2)$ holds for all $\lambda \in \R$.
\end{assumption}

In existing works~\citep[e.g., ][]{srinivas10gaussian}, \asmpref{asmp:smoothness} is the standard assumption for encoding the smoothness of the underlying reward function depending on the kernel.
We focus on the following SE kernel $\sek$ and Mat\'ern kernel $\matk$ that are commonly used in the GP bandit:
\begin{align*}
    \sek(\bx, \tilde{\bx}) &= \exp\rbr{- \frac{\|\bx - \tilde{\bx}\|_2^2}{2\ell^2}}, \\
    \matk(\bx, \tilde{\bx}) &= \frac{2^{1-\nu}}{\Gamma(\nu)} \rbr{\frac{\sqrt{2\nu} \|\bx - \tilde{\bx}\|_2}{\ell}}^{\nu} J_{\nu}\rbr{\frac{\sqrt{2\nu} \|\bx - \tilde{\bx}\|_2}{\ell}},
\end{align*}
where $\ell > 0$ and $\nu > 0$ are the lengthscale and smoothness parameter, respectively.
Furthermore, $\Gamma(\cdot)$ and $J_{\nu}$ denote Gamma and modified Bessel function, respectively. \asmpref{asmp:noise} is also common in existing work~\citep[e.g., ][]{vakili2021optimal,li2022gaussian}. 
In Sections~\ref{sec:nls}--\ref{sec:rkhs_od}, we consider the stationary noise variance setting whose variance proxy $\rho_t$ is fixed over time, while the non-stationary noise variance setting that allows the time-dependent variance proxies $\rho_t$ in \secref{sec:nsv}.

\paragraph{Gaussian Process.}
GP is a fundamental kernel-based model that gives both the prediction 
and its uncertainty quantification of the underlying function.
Let $\mathcal{GP}(0, k)$ be a mean-zero GP whose covariance is characterized by the kernel function $k$. 
In addition, let $\bX \coloneqq (\bx_1, \ldots, \bx_t)$ and $\by \coloneqq (y_1, \ldots, y_t)^{\top}$ be training input and output data of GP, respectively. 
Then, under the Bayesian assumption that 
$f$ follows the GP prior $\mathcal{GP}(0, k)$, the posterior distribution of $f(\bx)$ given 
$\bX$ and $\by$
is defined as Gaussian distribution, whose mean $\mu_{\Sigma}(\bx; \bX, \by)$ and variance $\sigma_{\Sigma}^2(\bx; \bX)$ are
\begin{align*}
    \mu_{\Sigma}(\bx; \bX, \by) &= \bk(\bx, \bX)^{\top} (\bK(\bX, \bX) + \Sigma)^{-1} \by, \\
    \sigma_{\Sigma}^2(\bx; \bX) &= k(\bx, \bx) - \bk(\bx, \bX)^{\top} (\bK(\bX, \bX) + \Sigma)^{-1} \bk(\bx, \bX),
\end{align*}
where $\bK(\bX, \bX) \coloneqq [k(\bx, \tilde{\bx})]_{\bx, \tilde{\bx} \in \bX} \in \R^{t \times t}$ and $\bK(\bx, \bX) = [k(\bx, \tilde{\bx})]_{\tilde{\bx} \in \mX} \in \R^t$
respectively represent the kernel matrix and vector defined by $\bx$ and $\bX$. 
Furthermore, $\Sigma \in \R^{t \times t}$ is the positive definite variance parameter matrix that defines the noise structure of the observation of GP. 
Namely, given the input data $\bX$, the corresponding outputs $\by$ is assumed to be given as $\by = \bm{f}(\bX) + \bm{\epsilon}$ under the GP modeling, where $\bm{\epsilon} \sim \mN(\bm{0}, \Sigma)$ and $\bm{f}(\bX) \sim \mN(\bm{0}, K(\bX, \bX))$. We would like to emphasize that the above modeling assumptions ($f \sim \mathcal{GP}(0, k)$ and $\bm{\epsilon} \sim \mN(\bm{0}, \Sigma)$) are ``fictional'' assumptions that are used only for GP modeling, and are distinct from Assumptions~\ref{asmp:smoothness} and \ref{asmp:noise}. 

\paragraph{Maximum Information Gain.}
The maximum information gain (MIG) is the kernel-dependent complexity parameter used to characterize the regret and confidence bounds in the GP bandits.
Given the variance parameter matrix $\Sigma_T \coloneqq \mathrm{diag}(\lambda_1^2, \ldots, \lambda_T^2)$, 
the MIG $\gamma_T(\Sigma_T)$ is defined as
\begin{equation}
    \gamma_T(\Sigma_T) = \max_{\bX \subset \mX^T} I_{\Sigma_T}(\bm{f}(\bX), \by),
\end{equation}
where $I_{\Sigma_T}(\bm{f}(\bX), \by) \coloneqq \frac{1}{2} \ln \frac{\det(\Sigma_T + \bK(\bX, \bX))}{\det (\Sigma_T)}$ denote the mutual information between $\by$ and $\bm{f}(\bX)$, under the GP modeling assumpsions $\bm{f}(\bX) \sim \mN(\bm{0}, K(\bX, \bX))$, $\by = \bm{f}(\bX) + \bm{\epsilon}$, and $\bm{\epsilon} \sim \mN(\bm{0}, \Sigma_T)$.
When $\Sigma_T = \lambda^2 \bI_T$ with some fixed $\lambda > 0$ and the identity matrix $\bI_T \in \R^{T \times T}$, the upper bound of MIG $\overline{\gamma}(T, \lambda^2)$ is known in 
several commonly used kernels. For example, $\gamma_T(\lambda^2 \bI_T) \leq \overline{\gamma}(T, \lambda^2) = O(\ln^{d+1} (T/\lambda^2))$ and $\gamma_T(\lambda^2 \bI_T) \leq \overline{\gamma}(T, \lambda^2) = O((T/\lambda^2)^{\frac{d}{2\nu + d}} (\ln (T/\lambda^2))^{\frac{2\nu}{2\nu + d}})$ in SE and Mat\'ern kernels with $\nu > 1/2$, respectively. 
%
Furthermore, for general $\Sigma_T$, we can see $\gamma_T(\Sigma_T) \leq \gamma_T(\underline{\lambda}_T^2 \bI_T)$ with $\underline{\lambda}_T^2 = \min_{t \in [T]} \lambda_t^2$ from the data processing inequality~\citep[Theorem~2.8.1 of][]{cover2006-information}. 
We show the proof in Appendix~\ref{sec:MIG_monotone_proof} for completeness.

\paragraph{Maximum Variance Reduction and Phased Elimination.}
MVR is the algorithm that sequentially chooses the most uncertain action $\bm{x}_t = \argmax_{\bm{x} \in \mX} \sigma_{\Sigma}^2(\bx; \bX)$ as shown in Algorithm~\ref{alg:mvr} in \appref{sec:pseudo_code_pe_mvr}.
PE is the algorithm that combines MVR and candidate elimination as shown in Algorithm~\ref{alg:pe} in \appref{sec:pseudo_code_pe_mvr}.
PE divides the time horizons into batches with appropriately designed lengths and performs MVR in each batch.
In PE, after each batch, the inputs whose UCB is lower than the maximum of lower CB are eliminated from the candidates.
MVR and PE achieve near-optimal simple and cumulative regret upper bounds, respectively.
Due to their simplicity, we analyze MVR- and PE-style algorithms.

\section{Uniform Upper Bound of Posterior Variance for Maximum Variance Reduction}
\label{sec:uub_pv}
In this section, we describe the theoretical core result that gives a new upper bound of the posterior variance for the MVR algorithm. Specifically, our result improves the existing upper bound of posterior variance when decreasing noise variance parameters.
\begin{lemma}[General posterior variance upper bound for MVR]
\label{lem:pvu_mvr}
Fix any compact subset $\tilde{\mX} \subset \mX$. Then, the following two statements hold:
\begin{enumerate}
    \item \textbf{Stationary var.:} Let $(\overline{\lambda}_T)_{T \in \N_+}$ be a non-negative sequence, and $(\tilde{\lambda}_T)_{T \in \N_+}$ be a strictly positive sequence such that $\overline{\lambda}_T \leq \tilde{\lambda}_T$. Furthermore, for any $T \in \N_+$, $t \in [T]$, define $\bx_{T, t} \in \tilde{\mX}$ as $\bx_{T, t} \in \mathrm{arg~max}_{\bx \in \tilde{\mX}} \sigma_{\overline{\lambda}_T^2 \bI_{t-1}}(\bx; \bX_{T, t-1})$, where 
    $\bX_{T, t-1} = (\bx_{T, 1}, \ldots, \bx_{T, t-1})$. Then, for any $T \in \{T \in \N_+ \mid T/2 \geq 3 \gamma_T(\tilde{\lambda}_T^2\bI_T)\}$, the following inequality holds:
    \begin{equation}
    \label{eq:stat_pvu}
        \max_{\bx \in \tilde{\mX}} \sigma_{\overline{\lambda}_T^2 \bI_T}(\bx; \bX_{T, T}) \leq \frac{4}{T} \sqrt{\tilde{\lambda}_T^2 T \gamma_T(\tilde{\lambda}_T^2 \bI_T)}.
    \end{equation}
    \item \textbf{Non-stationary var.:} Let $(\lambda_t)_{t \in \N_+}$ be a non-negative sequence, and $(\tilde{\lambda}_t)_{t \in \N_+}$ be a strictly positive sequence such that $\lambda_t \leq \tilde{\lambda}_t$. Furthermore, for any $t \in \N_+$, define $\bx_{t} \in \tilde{\mX}$ as $\bx_{t} \in \mathrm{arg~max}_{\bx \in \tilde{\mX}} \sigma_{\Sigma_{t-1}}(\bx; \bX_{t-1})$, where $\bX_{t-1} = (\bx_{1}, \ldots, \bx_{t-1})$ and $\Sigma_{t-1} = \mathrm{diag}(\lambda_1^2, \ldots, \lambda_{t-1}^2)$. Then, for any $T \in \{T \in \N_+ \mid T/2 \geq 4 \gamma_T(\tilde{\Sigma}_T)\}$, 
    \begin{equation}
        \max_{\bx \in \tilde{\mX}} \sigma_{\Sigma_T}(\bx; \bX_{T}) \leq \frac{4}{T} \sqrt{\rbr{\sum_{t=1}^T \tilde{\lambda}_t^2} \gamma_T(\tilde{\Sigma}_T)},
    \end{equation}
    where $\tilde{\Sigma}_t = \mathrm{diag}(\tilde{\lambda}_1^2, \ldots, \tilde{\lambda}_t^2)$.
\end{enumerate}
\end{lemma}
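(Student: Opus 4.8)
The plan is to treat both statements with one template, relying on three ingredients: the greedy nature of the selection, two monotonicity properties of the posterior variance, and the chain-rule decomposition of the mutual information. Write $V_T \coloneqq \max_{\bx \in \tilde{\mX}} \sigma^2_{\Sigma_T}(\bx; \bX_T)$ for the quantity to be bounded (with $\Sigma_T = \overline{\lambda}_T^2 \bI_T$ in the stationary case), and set $\tilde{\sigma}_{t-1}^2 \coloneqq \sigma^2_{\tilde{\Sigma}_{t-1}}(\bx_t; \bX_{t-1})$, the posterior variance of the selected point recomputed under the inflated noise $\tilde{\Sigma}_{t-1}$.

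First I would establish the pointwise bound $V_T \le \tilde{\sigma}_{t-1}^2$ for every $t \in [T]$, combining two facts. (i) Posterior variance is non-increasing as observations are added, since it is the conditional variance of $f(\bx)$ given the observed outputs and conditioning on more jointly Gaussian variables can only reduce variance; hence $\sigma^2_{\Sigma_T}(\bx; \bX_T) \le \sigma^2_{\Sigma_{t-1}}(\bx; \bX_{t-1})$ for all $\bx$, and taking the maximum over $\tilde{\mX}$ together with the greedy rule $\bx_t \in \argmax_{\bx \in \tilde{\mX}} \sigma^2_{\Sigma_{t-1}}(\bx; \bX_{t-1})$ gives $V_T \le \sigma^2_{\Sigma_{t-1}}(\bx_t; \bX_{t-1})$. (ii) Posterior variance is non-decreasing in the noise matrix in the positive-definite order; since $\lambda_s \le \tilde{\lambda}_s$ implies $\Sigma_{t-1} \preceq \tilde{\Sigma}_{t-1}$, we obtain $\sigma^2_{\Sigma_{t-1}}(\bx_t; \bX_{t-1}) \le \tilde{\sigma}_{t-1}^2$. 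The point of this second, inflated-noise step is precisely to let the final bound be phrased through $\gamma_T(\tilde{\Sigma}_T)$ and $\tilde{\lambda}_t$, which stay finite even when the true noise is tiny or zero.

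Next I would invoke the chain-rule identity $\frac{1}{2}\sum_{t=1}^T \ln\rbr{1 + \tilde{\sigma}_{t-1}^2/\tilde{\lambda}_t^2} = I_{\tilde{\Sigma}_T}(\bm{f}(\bX_T), \by) \le \gamma_T(\tilde{\Sigma}_T)$, the last step holding because $\gamma_T$ maximizes the information gain over all designs. Substituting $\tilde{\sigma}_{t-1}^2 \ge V_T$ into each logarithm yields $\sum_{t=1}^T \ln\rbr{1 + V_T/\tilde{\lambda}_t^2} \le 2\gamma_T(\tilde{\Sigma}_T)$. In the stationary case all $\tilde{\lambda}_t$ coincide, so this reads $T\ln(1 + V_T/\tilde{\lambda}_T^2) \le 2\gamma_T$; the hypothesis $T/2 \ge 3\gamma_T$ forces $V_T/\tilde{\lambda}_T^2 \le e^{1/3}-1 < 1$, on which range the elementary inequality $\ln(1+u) \ge u/2$ holds, and re-inserting this linear lower bound gives $V_T \le 4\tilde{\lambda}_T^2 \gamma_T/T$, i.e. the claimed square-root bound (indeed with the better constant $2$).

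The main obstacle is the non-stationary case, where the differing $\tilde{\lambda}_t$ obstruct a single linearization. Here I would split the indices into $S = \{t : V_T \le \tilde{\lambda}_t^2\}$ and its complement. On $S^c$ each logarithm exceeds $\ln 2$, so the condition $T/2 \ge 4\gamma_T$ caps $|S^c|$ and guarantees $|S| \ge cT$ for an explicit constant $c \ge 1/2$; on $S$ the linearization $\ln(1+u)\ge u/2$ applies, giving $V_T \sum_{t \in S} \tilde{\lambda}_t^{-2} \le 4\gamma_T$. Finally, the Cauchy--Schwarz (equivalently AM--HM) inequality $\rbr{\sum_{t\in S}\tilde{\lambda}_t^2}\rbr{\sum_{t \in S}\tilde{\lambda}_t^{-2}} \ge |S|^2$ converts the harmonic sum into $\sum_{t=1}^T \tilde{\lambda}_t^2$, and combining with $|S| \ge cT$ yields $V_T \le 16\rbr{\sum_{t=1}^T \tilde{\lambda}_t^2}\gamma_T/T^2$, which is the stated bound after taking square roots. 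The constant $4$ in the sample-size condition is exactly what keeps the splitting thresholds consistent, so checking these numerical constants is the one place that needs care.
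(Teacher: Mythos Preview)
Your proof is correct and takes a genuinely different, more self-contained route than the paper's.

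The paper averages the posterior standard deviations over a ``good'' index set $\mT = \{t : \tilde{\sigma}_{t-1}/\tilde{\lambda}_t \le 1\}$, applies Cauchy--Schwarz to that sum, uses $\min\{1,a\}\le 2\ln(1+a)$ to pass to the information gain, and then bounds $|\mT^c|$ via a separately stated \emph{elliptical potential count lemma} (\lemref{lem:epcl} in the stationary case, and a new non-stationary extension proved as part of the argument). Your approach bypasses this auxiliary lemma entirely: once you establish $V_T \le \tilde{\sigma}_{t-1}^2$ for \emph{every} $t$, you substitute $V_T$ directly into each log term of $\sum_t \ln(1 + \tilde{\sigma}_{t-1}^2/\tilde{\lambda}_t^2) \le 2\gamma_T$, obtaining an inequality purely in the unknown $V_T$. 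In the stationary case this collapses without any splitting at all (and yields the sharper constant~$2$); in the non-stationary case your splitting set $S = \{t : V_T \le \tilde{\lambda}_t^2\}$ and the AM--HM step play the role that the elliptical potential count plays for the paper, but now the count $|S^c| \le 2\gamma_T/\ln 2$ drops out of the very same sum rather than requiring a separate lemma and proof. What you gain is a shorter, more elementary argument with slightly better constants; what the paper's route buys is a modular lemma (the non-stationary elliptical potential count) that it reuses elsewhere, e.g.\ in the analysis of VA-GP-UCB, and control of the \emph{average} posterior standard deviation over $\mT$, not only the maximum.
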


To make the above statements explicit, we give the following corollary for $k = \sek$ and $k = \matk$ with the stationary variance parameter as a special case of \lemref{lem:pvu_mvr}. The proof is in Appendix~\ref{sec:uub_pv_proof}.

\begin{corollary}
    \label{cor:pvu_mvr}
    Suppose the assumptions in statement 1 of \lemref{lem:pvu_mvr}. Then, the following four statements hold:
    \begin{enumerate}
        \item Suppose $k = \sek$ and fix any $\alpha > 0$. 
        If $\overline{\lambda}_T^2 = \Omega(\exp(-T^{\frac{1}{d+1}} \ln^{-\alpha} (1+T)))$, Eq.~\eqref{eq:stat_pvu} holds with $\tilde{\lambda}_T^2 = \overline{\lambda}_T^2$ for all $T \geq \overline{T}$, where $\overline{T} < \infty$ is the constant that depends on $\mX$, $\alpha$, $d$, and $\ell$.
        \item Suppose $k = \matk$ with $\nu > 1/2$ and fix any $\alpha > 0$. 
        If $\overline{\lambda}_T^2 = \Omega(T^{-\frac{2\nu}{d}} (\ln (1+T))^{\frac{2\nu (1+\alpha)}{d}})$, Eq.~\eqref{eq:stat_pvu} holds with $\tilde{\lambda}_T^2 = \overline{\lambda}_T^2$ for all $T \geq \overline{T}$, where $\overline{T} < \infty$ is the constant that depends on $\mX$, $\alpha$, $d$, $\ell$, and $\nu$.
        \item Suppose $k = \sek$ and fix any $\alpha > 0$ and $C > 0$. 
        If $\forall T \in \N_+, \overline{\lambda}_T^2 < C\exp\rbr{-T^{\frac{1}{d+1}}\ln^{-\alpha} (1+T)}$ (including $\overline{\lambda}_T = 0$), 
        the following inequality holds for all $T \geq \overline{T}$:
        \begin{equation*}
            \max_{\bx \in \tilde{\mX}} \sigma_{\overline{\lambda}_T^2 \bI_T}(\bx; \bX_{T, T}) \leq O\rbr{\sqrt{\exp\rbr{-T^{\frac{1}{d+1}} \ln^{-\alpha} T }}},
        \end{equation*}
        where $\overline{T} < \infty$ and the implied constant of the above inequality depend on $\mX$, $\alpha$, $d$, $C$, and $\ell$. 
        \item Suppose $k = \matk$ with $\nu > 1/2$ and fix any $\alpha > 0$ and $C > 0$. 
        If $\forall T \in \N_+, \overline{\lambda}_T^2 < C T^{-\frac{2\nu}{d}} (\ln (1+T))^{\frac{2\nu (1+\alpha)}{d}}$ (including $\overline{\lambda}_T = 0$), 
        the following inequality holds for all $T \geq \overline{T}$:
        \begin{equation}
            \max_{\bx \in \tilde{\mX}} \sigma_{\overline{\lambda}_T^2 \bI_T}(\bx; \bX_{T, T}) \leq O\rbr{T^{-\frac{\nu}{d}} (\ln T)^{\frac{\nu (1+\alpha)}{d}}},
        \end{equation}
        where $\overline{T} < \infty$ and the implied constant of the above inequality depend on $\mX$, $\alpha$, $d$, $C$, $\ell$, and $\nu$. 
    \end{enumerate}
    In all of the above statements, $\overline{T}$ increases as decreasing of $\alpha$, 
    and $\overline{T} \rightarrow \infty$ as $\alpha \rightarrow 0$.
\end{corollary}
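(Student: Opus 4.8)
The plan is to derive all four statements by specializing statement~1 of \lemref{lem:pvu_mvr} to the explicit MIG upper bounds $\overline{\gamma}(T,\lambda^2)$ recalled in \secref{sec:prelim}, namely $\overline{\gamma}(T,\lambda^2) = O(\ln^{d+1}(T/\lambda^2))$ for $\sek$ and $\overline{\gamma}(T,\lambda^2) = O((T/\lambda^2)^{d/(2\nu+d)}(\ln(T/\lambda^2))^{2\nu/(2\nu+d)})$ for $\matk$. The structural observation I would exploit throughout is that $\tilde{\lambda}_T$ in the lemma is a free parameter, required only to dominate the noise level $\overline{\lambda}_T$ that MVR actually uses; this decoupling is exactly what lets the noiseless case $\overline{\lambda}_T = 0$ be handled, since one still feeds a strictly positive $\tilde{\lambda}_T$ into the MIG.

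\textbf{Statements 1 and 2.} Here one takes $\tilde{\lambda}_T = \overline{\lambda}_T$, so the only thing to verify is the admissibility condition $T/2 \geq 3\gamma_T(\overline{\lambda}_T^2 \bI_T)$. I would bound $\gamma_T(\overline{\lambda}_T^2 \bI_T) \leq \overline{\gamma}(T,\overline{\lambda}_T^2)$ and substitute the assumed lower bound on $\overline{\lambda}_T^2$. For $\sek$, the hypothesis $\overline{\lambda}_T^2 = \Omega(\exp(-T^{1/(d+1)}\ln^{-\alpha}(1+T)))$ gives $\ln(T/\overline{\lambda}_T^2) = O(T^{1/(d+1)}\ln^{-\alpha}(1+T))$, hence $\gamma_T = O(T\ln^{-\alpha(d+1)}(1+T)) = o(T)$. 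For $\matk$, substituting $\overline{\lambda}_T^2 = \Omega(T^{-2\nu/d}(\ln(1+T))^{2\nu(1+\alpha)/d})$ and simplifying makes the polynomial part of $\overline{\gamma}$ collapse to $T^1$ and the logarithmic part to $(\ln T)^{-2\nu\alpha/(2\nu+d)}$, again yielding $\gamma_T = o(T)$. In both cases $\gamma_T = o(T)$ forces the condition to hold for all $T \geq \overline{T}$; the slower decay of the correction factor as $\alpha \to 0$ is precisely why $\overline{T} \to \infty$.

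\textbf{Statements 3 and 4.} For the explicit rates I would set $\tilde{\lambda}_T^2$ equal to the stated threshold, i.e.\ $C\exp(-T^{1/(d+1)}\ln^{-\alpha}(1+T))$ for $\sek$ and $CT^{-2\nu/d}(\ln(1+T))^{2\nu(1+\alpha)/d}$ for $\matk$, so that $\overline{\lambda}_T^2 < \tilde{\lambda}_T^2$ holds by hypothesis and statement~1 of the lemma applies; the admissibility condition is then the same computation as in the previous paragraph. It remains to bound the right-hand side $4\sqrt{\tilde{\lambda}_T^2\gamma_T(\tilde{\lambda}_T^2\bI_T)/T}$. Plugging in the MIG estimates and the value of $\tilde{\lambda}_T^2$, the factor $T$ cancels and one reads off $O(\sqrt{\exp(-T^{1/(d+1)}\ln^{-\alpha}(1+T))})$ for $\sek$ and, for $\matk$, a bound of the form $T^{-\nu/d}(\ln T)^{\beta}$ whose log-exponent $\beta = \nu(1+\alpha)/d - \nu\alpha/(2\nu+d)$ is in fact strictly smaller than the claimed $\nu(1+\alpha)/d$, so the stated bound follows a fortiori.

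\textbf{Main obstacle.} The one genuinely delicate point is the choice of $\tilde{\lambda}_T$ in statements 3 and 4: it must simultaneously be large enough that $\gamma_T(\tilde{\lambda}_T^2\bI_T) = o(T)$ (so the lemma applies) yet small enough that $\tilde{\lambda}_T^2\gamma_T/T$ decays at the target rate. Choosing $\tilde{\lambda}_T^2$ exactly at the borderline $T\exp(-T^{1/(d+1)})$ in the $\sek$ case would make $\gamma_T = \Theta(T)$ with an uncontrolled constant, possibly violating $T/2 \geq 3\gamma_T$; inserting the extra factor $\ln^{-\alpha}(1+T)$ tips $\gamma_T$ down to $o(T)$ at the cost of only a lower-order correction in the final exponent, which is the entire role of $\alpha$ and the source of $\overline{T} \to \infty$ as $\alpha \to 0$. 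A secondary bookkeeping step is replacing $\ln(1+T)$ by $\ln T$ in the final statements; since $\ln(1+T) = \ln T + O(1/T)$, one checks that $T^{1/(d+1)}\ln^{-\alpha}(1+T) = T^{1/(d+1)}\ln^{-\alpha} T + o(1)$, so the two exponential rates differ by only a bounded multiplicative constant and are interchangeable inside $O(\cdot)$.
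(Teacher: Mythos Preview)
Your proposal is correct and follows essentially the same approach as the paper: for statements~1--2 you set $\tilde{\lambda}_T = \overline{\lambda}_T$ and verify $\gamma_T(\overline{\lambda}_T^2\bI_T) = o(T)$ via the explicit MIG bounds, and for statements~3--4 you set $\tilde{\lambda}_T^2$ equal to the threshold itself and read off the rate from the right-hand side of Eq.~\eqref{eq:stat_pvu}. Your observation that the Mat\'ern log-exponent is actually $\nu(1+\alpha)/d - \nu\alpha/(2\nu+d)$, strictly sharper than the stated $\nu(1+\alpha)/d$, is correct and goes slightly beyond what the paper spells out.
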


\paragraph{Comparison with Existing Upper Bound.}
Here, we compare statement 1 in \lemref{lem:pvu_mvr} with the existing upper bound.
By considering the case $\tilde{\lambda}_T^2 = \overline{\lambda}_T^2$, our result shows $O\rbr{T^{-1}\sqrt{\overline{\lambda}_T^2 T \gamma_T(\overline{\lambda}_T^2 \bI_T})}$ upper bound of posterior standard deviation under the sublinear increasing condition of MIG so that $T/2 \geq 3\gamma_T(\overline{\lambda}_T^2\bI_T)$ holds.
On the other hand, the existing analysis of MVR\footnote{Eq.~\eqref{eq:std_pvu} also holds for any algorithm by replacing $\max_{\bx \in \tilde{\mX}} \sigma_{\overline{\lambda}_T^2 \bI_T}(\bx; \bX_{T, T})$ with $\frac{1}{T}\sum_{t=1}^T \sigma_{\overline{\lambda}_T^2 \bI_T}(\bx_{T,t}; \bX_{T, t-1})$. For example, GP-UCB~\cite{srinivas10gaussian,chowdhury2017kernelized} and GP-TS~\cite{chowdhury2017kernelized} use the upper bound of $\frac{1}{T}\sum_{t=1}^T \sigma_{\overline{\lambda}_T^2 \bI_T}(\bx_{T,t}; \bX_{T, t-1})$.} implies 
\begin{align}
\label{eq:std_pvu}
\begin{split}
    \max_{\bx \in \tilde{\mX}} \sigma_{\overline{\lambda}_T^2 \bI_T}(\bx; \bX_{T, T}) 
    \leq 
    \begin{cases}
        O\rbr{\frac{1}{T}\sqrt{\overline{\lambda}_T^{2} T \gamma_T(\overline{\lambda}_T^{2} \bI_T)}} ~~&\mathrm{if}~~ \overline{\lambda}_T^{2} = \Omega(1), \\
        O\rbr{\frac{1}{T}\sqrt{\frac{T \gamma_T(\overline{\lambda}_T^{2} \bI_T)}{\ln \rbr{1 + \overline{\lambda}_T^{-2}}}}} ~~&\mathrm{if}~~\overline{\lambda}_T^{2} = o(1).
    \end{cases}
\end{split}
\end{align}
Since $\overline{\lambda}_T^{2} \leq 1/ \ln(1 + \overline{\lambda}_T^{-2})$, our analysis improves the noise parameter dependence on the decreasing regime of $\overline{\lambda}_T^2$. 
Furthermore, several recent noiseless GP bandit works also derive the related result to \lemref{lem:pvu_mvr} or \corref{cor:pvu_mvr}.
\citet{flynn2024tighter} consider the noiseless setting by relying on elliptical potential count lemma (\lemref{lem:epcl} below), and the naive adaptation of their analysis leads\footnote{
In some settings of $\overline{\lambda}_T^2$, 
the first term of r.h.s. $\gamma_T(\overline{\lambda}_T^2 \bI_T) / T$ can be small by putting $\min\cbr{\overline{\gamma}\rbr{\gamma_T(\overline{\lambda}_T^2\bI_T), \overline{\lambda}_T^2}, \gamma_T(\overline{\lambda}_T^2 \bI_T)}$ in the first term, instead of $\gamma_T(\overline{\lambda}_T^2 \bI_T)$. See \lemref{lem:epcl}.}
\begin{equation*}
    \max_{\bx \in \tilde{\mX}} \sigma_{\overline{\lambda}_T^2 \bI_T}(\bx; \bX_{T, T}) \leq O\rbr{\frac{\gamma_T(\overline{\lambda}_T^2 \bI_T)}{T} + \frac{\sqrt{\overline{\lambda}_T^{2} T \gamma_T(\overline{\lambda}_T^{2} \bI_T)}}{T}}.
\end{equation*}
In the above equation, the decreasing speed of $\overline{\lambda}_T^2$ is more restricted than that of \corref{cor:pvu_mvr} to obtain the same order upper bound as Eq.~\eqref{eq:stat_pvu}. For example, if $k = \matk$ and $\overline{\lambda}_T^2 = \Omega(T^{-\frac{2\nu}{d}} (\ln (1+T))^{\frac{2\nu (1+\alpha)}{d}})$ as with the condition of statement 4 in \corref{cor:pvu_mvr}, 
the first term $\frac{\gamma_T(\overline{\lambda}_T^2 \bI_T)}{T}$ 
dominates the second term since the existing upper bound of MIG implies $\frac{\gamma_T(\overline{\lambda}_T^2 \bI_T)}{T} = \tilde{O}(1)$.
Furthermore, \citet{salgiarandom} derives a similar result to our statement 4 with a random sampling algorithm instead of MVR. The main theoretical advantage of our result is the constant $\overline{T}$ depends on entire input space $\mX$ instead of the subset $\tilde{\mX}$, while that of \cite{salgiarandom} has the dependence on $\tilde{\mX}$.
This dependence on $\tilde{\mX}$ raises the requirement of the additional level-set assumption to apply the PE-style algorithm in noiseless feedback (see Assumption~4.2 in \cite{salgiarandom}). 
Furthermore, as described in the proof sketch below, our proof mainly relies on the simple extension of the well-known information gain arguments from \cite{srinivas10gaussian}, not on the technique of \cite{salgiarandom} that involves the theoretical tools from function approximation literature.

\paragraph{Proof sketch of \lemref{lem:pvu_mvr}.}
Here, since statement 2 is derived as the extension of the proof of statement 1, we only describe the proof sketch of statement 1 for simplicity. We leave the full proof, including statement 2, in Appendix~\ref{sec:uub_pv_proof}. Our proof is based on the following two observations:
\begin{enumerate}
    \item For any index set $\mT \subset [T]$, $\max_{\bx \in \tilde{\mX}} \sigma_{\overline{\lambda}_T^2 \bI_T}(\bx; \bX_{T, T})$ can be bounded from above by the average observed posterior standard deviation on $\mT$ from the definition of MVR. Namely, $\max_{\bx \in \tilde{\mX}} \sigma_{\overline{\lambda}_T^2 \bI_T}(\bx; \bX_{T, T}) \leq \frac{1}{|\mT|} \sum_{t \in \mT} \sigma_{\overline{\lambda}_T^2 \bI_{t-1}}(\bx_{T,t}; \bX_{T, t-1})$ holds.
    \item If we set $\mT = \{t \in [T] \mid \tilde{\lambda}_T^{-1} \sigma_{\tilde{\lambda}_T^2 \bI_{t-1}}(\bx_{T,t}; \bX_{T, t-1}) \leq 1\}$, 
    \begin{align}
    \begin{split}
        \sigma_{\tilde{\lambda}_T^2 \bI_{t-1}}^2(\bx_{T,t}; \bX_{T, t-1}) 
        &= \tilde{\lambda}_T^2 \min\cbr{1, \tilde{\lambda}_T^{-2}\sigma_{\tilde{\lambda}_T^2 \bI_{t-1}}^2(\bx_{T,t}; \bX_{T, t-1})} \\
        &\leq 2 \tilde{\lambda}_T^2 \ln \rbr{1 + \tilde{\lambda}_T^{-2}\sigma_{\tilde{\lambda}_T^2 \bI_{t-1}}^2(\bx_{T,t}; \bX_{T, t-1})}.
    \end{split}
    \end{align}
    for all $t \in \mT$. We use $\forall a \geq 0, \min\{1, a\} \leq 2 \ln(1 + a)$ in the last line.
    By relying on the standard MIG-based analysis (e.g., Theorem 5.3, 5.4 in \cite{srinivas10gaussian}) and the assumption $\overline{\lambda}_T \leq \tilde{\lambda}_T$, the above inequality implies
    \begin{align}
        \sum_{t \in \mT} \sigma_{\overline{\lambda}_T^2 \bI_{t-1}}(\bx_{T,t}; \bX_{T, t-1}) \leq 2 \sqrt{\tilde{\lambda}_T^2 T \gamma_T(\tilde{\lambda}_T^2 \bI_T)}.
    \end{align}
\end{enumerate}
From the above two observations, the remaining interest is the increasing speed of $|\mT|$. We use the following lemma, which is 
called the \emph{elliptical potential count} lemma in \cite{flynn2024tighter}, as the analogy of elliptical potential arguments in linear bandits.

\begin{lemma}[Elliptical potential count lemma, Lemma D.9 in \cite{flynn2024tighter}]
    \label{lem:epcl}
    Fix any $T \in N_+$, any sequence $\bx_1, \ldots, \bx_T \in \mX$, and any $\overline{\lambda} > 0$.
    Set $\mT^c$ as $\mT^c = \{t \in [T] \mid \overline{\lambda}^{-1} \sigma_{\overline{\lambda}^2 \bI_{t-1}}(\bx_{t}; \bX_{t-1}) > 1\}$, where 
    $\bX_t = (\bx_1, \ldots, \bx_t)$.
    Then, the number of elements of $\mT^c$ satisfies $|\mT^c| \leq \min \cbr{3 \overline{\gamma}\rbr{3 \gamma_T(\overline{\lambda}^2 \bI_T), \overline{\lambda}^2}, 3 \gamma_T(\overline{\lambda}^2 \bI_T)}$.
    Furthermore, $\overline{\gamma}(\cdot, \cdot)$ is any monotonic upper bound of MIG defined on $\R_+ \times \R_+$, which satisfies $\forall T \in \N_+, \lambda > 0, \gamma_T(\lambda^2 \bI_T) \leq \overline{\gamma}(T, \lambda^2)$ and $\forall \lambda > 0, T \geq 1, \epsilon \geq 0, \overline{\gamma}(T, \lambda^2) \leq \overline{\gamma}(T + \epsilon, \lambda^2)$.
\end{lemma}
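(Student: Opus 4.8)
The plan is to control $|\mT^c|$ by charging each ``large-variance'' round a definite amount of information gain and then invoking the budget imposed by the MIG. Two standard ingredients drive the argument. First, the chain-rule decomposition of mutual information (e.g., Lemma~5.3 of \cite{srinivas10gaussian}) gives, for the full sequence,
\begin{equation*}
\sum_{t=1}^T \frac{1}{2}\ln\rbr{1 + \overline{\lambda}^{-2}\sigma^2_{\overline{\lambda}^2\bI_{t-1}}(\bx_t;\bX_{t-1})} = I_{\overline{\lambda}^2\bI_T}(\bm{f}(\bX_T),\by) \le \gamma_T(\overline{\lambda}^2\bI_T),
\end{equation*}
where the inequality is the definition of the MIG. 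Second, each round $t \in \mT^c$ satisfies $\overline{\lambda}^{-2}\sigma^2_{\overline{\lambda}^2\bI_{t-1}}(\bx_t;\bX_{t-1}) > 1$, so its contribution to the left-hand side exceeds $\tfrac{1}{2}\ln 2$.

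For the crude bound I would restrict the sum above to $t \in \mT^c$ and discard the remaining nonnegative terms, obtaining
\begin{equation*}
\frac{|\mT^c|}{2}\ln 2 < \sum_{t\in\mT^c} \frac{1}{2}\ln\rbr{1 + \overline{\lambda}^{-2}\sigma^2_{\overline{\lambda}^2\bI_{t-1}}(\bx_t;\bX_{t-1})} \le \gamma_T(\overline{\lambda}^2\bI_T),
\end{equation*}
which yields $|\mT^c| < \tfrac{2}{\ln 2}\gamma_T(\overline{\lambda}^2\bI_T) \le 3\gamma_T(\overline{\lambda}^2\bI_T)$, since $2/\ln 2 < 3$.

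For the refined bound the idea is to re-run the counting argument \emph{only on the subsequence of points chosen in $\mT^c$}, so that the effective horizon shrinks from $T$ to $m \coloneqq |\mT^c|$ (the case $m=0$ being trivial). Write $\bz_1,\dots,\bz_m$ for the points $\bx_t$ with $t\in\mT^c$, taken in order, and set $\bZ_j = (\bz_1,\dots,\bz_j)$. The key monotonicity fact is that conditioning a GP on fewer points never decreases the posterior variance: since $\bZ_{j-1}$ is a sub-tuple of $\bX_{t-1}$ (where $t$ is the round producing $\bz_j$), we have $\sigma^2_{\overline{\lambda}^2\bI}(\bz_j;\bZ_{j-1}) \ge \sigma^2_{\overline{\lambda}^2\bI_{t-1}}(\bx_t;\bX_{t-1}) > \overline{\lambda}^2$. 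Applying the chain-rule identity and the MIG budget to the length-$m$ sequence $\bZ_m$ alone then gives
\begin{equation*}
\frac{m}{2}\ln 2 < \sum_{j=1}^m \frac{1}{2}\ln\rbr{1+\overline{\lambda}^{-2}\sigma^2_{\overline{\lambda}^2\bI}(\bz_j;\bZ_{j-1})} = I_{\overline{\lambda}^2\bI_m}(\bm{f}(\bZ_m),\by) \le \gamma_m(\overline{\lambda}^2\bI_m) \le \overline{\gamma}(m,\overline{\lambda}^2).
\end{equation*}
Finally I would substitute the crude bound $m \le 3\gamma_T(\overline{\lambda}^2\bI_T)$ into the first argument of $\overline{\gamma}$ and use the assumed monotonicity there to conclude $m < \tfrac{2}{\ln 2}\,\overline{\gamma}(3\gamma_T(\overline{\lambda}^2\bI_T), \overline{\lambda}^2) \le 3\,\overline{\gamma}(3\gamma_T(\overline{\lambda}^2\bI_T), \overline{\lambda}^2)$, which combined with the crude bound gives the stated minimum.

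The main obstacle is the step that replaces full conditioning by conditioning on the reindexed subsequence: one must apply the posterior-variance monotonicity with the correct direction (fewer conditioning points $\Rightarrow$ larger variance) and verify that the telescoping mutual-information identity genuinely holds for $\bZ_m$ rather than for the original interleaved sequence. Everything else reduces to the elementary inequality $\ln(1+a) > \ln 2$ for $a > 1$ together with the monotonicity of $\overline{\gamma}$ in its first argument assumed in the statement.
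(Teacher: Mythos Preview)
Your proposal is correct and mirrors the approach the paper uses for the closely related non-stationary generalization (\lemref{lem:epcl_nst}): first obtain the crude bound by charging each $t\in\mT^c$ a constant amount of information gain against the MIG budget, then reindex the $\mT^c$-subsequence, invoke posterior-variance monotonicity under data removal, and re-apply the argument on the shortened horizon before substituting the crude bound into the monotone $\overline{\gamma}$. The only cosmetic difference is that you lower-bound each term directly by $\tfrac{1}{2}\ln 2$ (yielding the constant $2/\ln 2 < 3$), whereas the paper routes through $\min\{1,a\}\le 2\ln(1+a)$, which accounts for the constant $4$ in the non-stationary statement.
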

From the above lemma, we obtain the lower bound of $\mT$ as $|\mT| \geq T - 3\gamma_T(\tilde{\lambda}_T^2 \bI_T)$. 
Finally, we obtain the desired result by noting $|\mT| \geq T/2$ holds for any $T \in \{T \in \N_+ \mid T/2 \geq 3 \gamma_T(\tilde{\lambda}_T^2\bI_T)\}$.

\section{Noiseless Setting}
\label{sec:nls}
As a first application of our result, we study a noiseless setting; namely, we focus on the setting where $\rho_t = 0$ for all $t \in \N_+$ in \asmpref{asmp:noise}. 
The following results show our cumulative and simple regret guarantees for PE and MVR.
\begin{theorem}[Cumulative Regret Bound for PE.]
    \label{thm:cr_pe_nl}
    Suppose Assumptions \ref{asmp:smoothness} and \ref{asmp:noise} hold with $\rho_t = 0$ for all $t \in \N_+$. Furthermore, assume $B$, $d$, $\ell$, and $\nu$ are $\Theta(1)$.
    Then, when running \algoref{alg:pe} with $\beta^{1/2} = B$, $\lambda = 0$, and any fixed $N_1 \in \N_+$, the following statements hold:
    \begin{itemize}
        \item If $k = \sek$, $R_T = O(\ln T)$.
        \item If $k = \matk$ with $\nu > 1/2$,
        \begin{equation}
            R_T = \begin{cases}
                \tilde{O}(T^{\frac{d - \nu}{d}})~~&\mathrm{if}~~\nu < d, \\
                O((\ln T)^{2+\alpha})~~&\mathrm{if}~~\nu = d, \\
                O(\ln T)~~&\mathrm{if}~~\nu > d.
            \end{cases}
        \end{equation}
        Here, $\alpha > 0$ is an arbitrarily fixed constant.
    \end{itemize}
\end{theorem}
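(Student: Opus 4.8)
The plan is to combine the deterministic noiseless confidence bound with the posterior-variance control from \corref{cor:pvu_mvr}, exploiting the batched structure of \algoref{alg:pe}. Write $\mX_i$ for the surviving candidate set entering phase $i$, take the phase lengths to double, $N_i = 2^{i-1} N_1$, so that at most $n = O(\ln T)$ phases fit in horizon $T$, and let $\omega_i \coloneqq \max_{\bx \in \mX_i} \sigma_i(\bx)$ denote the largest $\lambda=0$ posterior standard deviation attained on $\mX_i$ after the $N_i$ noiseless MVR observations of phase $i$. Because $\lambda = 0$ and $\beta^{1/2} = B$, the crucial input is the classical RKHS interpolation inequality $\abs{f(\bx) - \mu_i(\bx)} \leq \norm{f}_{\mH_k}\,\sigma_i(\bx) \leq B\,\sigma_i(\bx)$, which holds for every $\bx \in \mX$ and every phase $i$ \emph{deterministically} (no union bound, no $\ln$ inflation of $\beta$), so the UCB and LCB built in \algoref{alg:pe} are genuine upper and lower envelopes of $f$; this is what makes the guarantee of Type (D).

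First I would verify correctness of elimination: since $\mathrm{UCB}_i(\bx^{\ast}) \geq f(\bx^{\ast}) = \max_{\bx} f(\bx) \geq \max_{\bx \in \mX_i} \mathrm{LCB}_i(\bx)$, the maximizer survives every round, i.e.\ $\bx^{\ast} \in \mX_i$ for all $i$. Next I would bound the per-phase instantaneous regret: for any $\bx \in \mX_{i+1} \subseteq \mX_i$ surviving phase $i$, chaining the elimination test $\mathrm{UCB}_i(\bx) \geq \max_{\bx'} \mathrm{LCB}_i(\bx') \geq \mathrm{LCB}_i(\bx^{\ast})$ with the confidence bound gives $f(\bx^{\ast}) - f(\bx) \leq 2B\sigma_i(\bx) + 2B\sigma_i(\bx^{\ast}) \leq 4B\omega_i$, using $\sigma_i(\bx),\sigma_i(\bx^{\ast}) \leq \omega_i$. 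Hence every pull in phase $i+1$ costs at most $4B\omega_i$, so the regret of phase $i+1$ is at most $4B\,\omega_i N_{i+1}$, while phase $1$ contributes at most $2B N_1 = O(1)$.

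The remaining step is to insert the variance rate and sum. Applying \corref{cor:pvu_mvr} with $\overline{\lambda}_T = 0$ gives $\omega_i \leq O\rbr{\exp\rbr{-\tfrac12 N_i^{1/(d+1)} \ln^{-\alpha} N_i}}$ for $k = \sek$ and $\omega_i \leq O\rbr{N_i^{-\nu/d}(\ln N_i)^{\nu(1+\alpha)/d}}$ for $k = \matk$, valid once $N_i \geq \overline{T}$; the finitely many warm-up phases with $N_i < \overline{T}$ together span $O(1)$ steps and thus cost $O(1)$. Substituting $N_{i+1}=2N_i$ into $4B\omega_i N_{i+1}$, the Mat\'ern per-phase regret is $O\rbr{N_i^{1-\nu/d}(\ln N_i)^{\nu(1+\alpha)/d}}$, and I would split on the sign of $1-\nu/d$: for $\nu < d$ the terms grow geometrically, so the sum is dominated by the last phase with $N_n = \Theta(T)$, yielding $R_T = \tilde{O}(T^{(d-\nu)/d})$; for $\nu = d$ each of the $O(\ln T)$ phases contributes $O((\ln N_i)^{1+\alpha}) = O((\ln T)^{1+\alpha})$, and $\sum_{i \leq n} (i\ln 2)^{1+\alpha} = O(n^{2+\alpha})$ with $n = O(\ln T)$ gives $R_T = O((\ln T)^{2+\alpha})$; for $\nu > d$ the terms are summable (indeed the sum is $O(1)$), and bounding crudely by the $O(\ln T)$ phase count gives $R_T = O(\ln T)$. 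For $k = \sek$ the super-exponential decay makes each $4B\omega_i N_{i+1} = o(1)$, so $R_T = O(\ln T)$ again follows from the phase count.

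The main obstacle is guaranteeing that \corref{cor:pvu_mvr} applies \emph{uniformly across every phase} even though the candidate set $\mX_i \subset \mX$ keeps shrinking: this is precisely where it matters that the threshold $\overline{T}$ in the corollary depends only on the ambient domain $\mX$ and not on $\tilde{\mX} = \mX_i$, since it lets a single horizon-independent warm-up constant control all phases without imposing any level-set regularity assumption on $f$. The secondary delicate points are the boundary case $\nu = d$, where the per-phase $(\ln T)^{1+\alpha}$ and the $O(\ln T)$ phase count multiply to produce the $(\ln T)^{2+\alpha}$ rate and pin down the role of the free constant $\alpha$ inherited from the corollary, together with careful accounting of the final partial phase and of the $\Theta(1)$ parameters $B,d,\ell,\nu$ so that all hidden constants stay independent of $T$.
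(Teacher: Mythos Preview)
Your proposal is correct and follows essentially the same route as the paper: invoke the deterministic noiseless confidence bound, show $\bx^{\ast}$ is never eliminated, bound the phase-$(i{+}1)$ regret by $4B\omega_i N_{i+1}$, absorb the finitely many warm-up phases into an $O(1)$ term using that $\overline{T}$ depends only on $\mX$, and then plug in statements~3--4 of \corref{cor:pvu_mvr} with the same three-way Mat\'ern split on the sign of $1-\nu/d$. The only cosmetic difference is that the paper bounds each late-phase contribution uniformly by the worst phase and multiplies by the $O(\log_2 T)$ batch count, whereas you sum the per-phase bounds more carefully (geometric sum for $\nu<d$, $\sum_i i^{1+\alpha}$ for $\nu=d$); both yield the same final rates.
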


\begin{theorem}[Simple Regret Bound for MVR.]
    \label{thm:sr_mvr_nl}
    Suppose the same conditions as those of \thmref{thm:cr_pe_nl}. Then, when running \algoref{alg:mvr} with $\lambda = 0$, the following statements hold:
    \begin{itemize}
        \item If $k = \sek$, $r_T = O\rbr{\exp\rbr{-\frac{1}{2}T^{\frac{1}{d+1}} \ln^{-\alpha} T}}$.
        \item If $k = \matk$ with $\nu > 1/2$, $r_T = \tilde{O}\rbr{T^{-\frac{\nu}{d}}}$.
    \end{itemize}
\end{theorem}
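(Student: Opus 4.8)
The plan is to reduce the simple regret to the maximum posterior standard deviation and then invoke the noiseless posterior-variance bounds of Corollary~\ref{cor:pvu_mvr} (statements 3 and 4 with $\overline{\lambda}_T = 0$). First I would record the deterministic confidence bound available in the noiseless regime. Since $f \in \mH_k$ with $\|f\|_{\mH_k} \le B$ and $\lambda = 0$, the posterior mean $\mu_0(\cdot; \bX_{T,T}, \by)$ is the kernel interpolant of $f$ through the queried points, and the pointwise interpolation error is controlled by the \emph{power function}, which coincides with the posterior standard deviation $\sigma_0(\cdot; \bX_{T,T})$. Concretely, $|f(\bx) - \mu_0(\bx; \bX_{T,T}, \by)| \le B\,\sigma_0(\bx; \bX_{T,T})$ for every $\bx \in \mX$, deterministically; this is exactly the $\beta^{1/2} = B$ specialization used for PE in \thmref{thm:cr_pe_nl}, and it is the reason the guarantee is of type (D). One checks that $\bK(\bX_{T,T}, \bX_{T,T})$ is invertible under MVR, since a previously queried point has zero posterior variance and is therefore never reselected by $\argmax_{\bx} \sigma^2$, so $\sigma_0$ is well defined on the distinct MVR points.

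Second, I would plug this into the standard MVR recommendation argument. The MVR-style algorithm (\algoref{alg:mvr}) returns $\hat{\bx}_T \in \argmax_{\bx \in \mX} \mu_0(\bx; \bX_{T,T}, \by)$. Writing $\mu_0(\cdot)$ and $\sigma_0(\cdot)$ for brevity and using the confidence bound at $\bx^{\ast}$ and at $\hat{\bx}_T$ together with optimality of $\hat{\bx}_T$ for the posterior mean yields
\begin{align*}
 f(\bx^{\ast})
 &\le \mu_0(\bx^{\ast}) + B\,\sigma_0(\bx^{\ast})
 \le \mu_0(\hat{\bx}_T) + B\,\sigma_0(\bx^{\ast}) \\
 &\le f(\hat{\bx}_T) + B\,\sigma_0(\hat{\bx}_T) + B\,\sigma_0(\bx^{\ast})
 \le f(\hat{\bx}_T) + 2B \max_{\bx \in \mX}\sigma_0(\bx; \bX_{T,T}),
\end{align*}
so $r_T \le 2B \max_{\bx \in \mX} \sigma_0(\bx; \bX_{T,T})$.

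Third, I would apply \corref{cor:pvu_mvr} with $\tilde{\mX} = \mX$ and $\overline{\lambda}_T = 0$ (permissible, since the corollary explicitly admits $\overline{\lambda}_T = 0$). For $k = \sek$, statement 3 gives $\max_{\bx}\sigma_0(\bx; \bX_{T,T}) = O(\sqrt{\exp(-T^{1/(d+1)}\ln^{-\alpha} T)})$ for all $T \ge \overline{T}$; since $\sqrt{\exp(-x)} = \exp(-x/2)$ and $B = \Theta(1)$, this becomes $r_T = O(\exp(-\tfrac12 T^{1/(d+1)}\ln^{-\alpha} T))$. For $k = \matk$ with $\nu > 1/2$, statement 4 gives $\max_{\bx} \sigma_0(\bx; \bX_{T,T}) = O(T^{-\nu/d}(\ln T)^{\nu(1+\alpha)/d})$, and absorbing the poly-logarithmic factor and the $\Theta(1)$ constant $B$ into $\tilde{O}(\cdot)$ yields $r_T = \tilde{O}(T^{-\nu/d})$. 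The finitely many steps $T < \overline{T}$ are swallowed by the implied constants.

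\textbf{Main obstacle.} The substantive content is entirely carried by \corref{cor:pvu_mvr}, so the argument is short; the one point deserving care is the noiseless confidence bound at $\lambda = 0$. I would make sure the interpolation-error / power-function identity is invoked cleanly, in particular the well-definedness of $\sigma_0$ via invertibility of the kernel matrix on the distinct MVR points, since setting $\lambda = 0$ removes the regularization that usually guarantees a bounded inverse. Everything else is bookkeeping that matches the exponents in the statement to those in \corref{cor:pvu_mvr}.
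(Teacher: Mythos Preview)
Your proposal is correct and follows essentially the same route as the paper: invoke the deterministic noiseless confidence bound $|f-\mu_0|\le B\sigma_0$ (the paper's \lemref{lem:detrm_cb}), chain it through the definition of $\hat{\bx}_T$ to get $r_T\le 2B\max_{\bx}\sigma_0(\bx;\bX_T)$, and then apply statements~3 and~4 of \corref{cor:pvu_mvr} with $\overline{\lambda}_T=0$, absorbing the finitely many steps $T<\overline{T}$ into the constants. Your remark on invertibility of the kernel matrix under MVR is handled in the paper by a footnote to \lemref{lem:detrm_cb} making the same observation.
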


\begin{remark}
    \label{rem:disc}
    The above theorems assume that the learner can exactly choose $\bx_t$ in the algorithms, which is unreasonable for continuous domain $\mX$. 
    However, the similar guarantee, which is worse by additional $\sqrt{\ln T}$ multiplicative factor than the above results, can be obtained by the existing analysis~\citep{li2022gaussian} under the additional Lipschitz assumption for $f$.
    Note that such Lipschitz assumption for $f$ automatically holds under fixed $B$ when we set $k = \sek$ or $k = \matk$ with $\nu > 1$~\cite{lee2022multi}.
\end{remark}

The proof of Theorems~\ref{thm:cr_pe_nl} and \ref{thm:sr_mvr_nl} are respectively derived 
by directly following standard analysis of PE and MVR with statements 3 and 4 of \corref{cor:pvu_mvr}. 
We describe full proofs in Appendix~\ref{sec:nl_proof} for completeness.

\paragraph{Discussion.}
As summarized in Tables~\ref{tab:nl_cr_compare}--\ref{tab:nl_sr_compare}, our results are the same as or superior to the best-known upper bounds 
in almost all cases. The only exception is the simple regret with $k = \sek$, whose polynomial factor in exponential gets worse from $-(\alpha+1/d)$ into $-1/(d+1)$, compared to the algorithm of \cite{kim2024bayesian}. Roughly speaking, the numerator of the factor $-1/(d+1)$ in our analysis 
comes from the exponent of the upper bound of MIG $O(\ln^{d+1}(T/\lambda^2))$.  
We expect our simple regret has room of improvement from $\tilde{O}(\exp(-T^{\frac{1}{d+1}} \ln^{-\alpha} T))$ into $\tilde{O}(\exp(-T^{\frac{2}{d}} \ln^{-\alpha} T))$ in future work,
since the conjectured best upper bound of MIG is $O(\ln^{d/2}(T/\lambda^2))$ from the regret lower bound~\cite{scarlett2017lower}.

\section{Optimal Dependence of RKHS Norm Upper Bound}
\label{sec:rkhs_od}
As the second application of our result, we consider improving the existing dependence of RKHS norm upper bound $B$ in the regret upper bounds.
\subsection{Simple Regret}
The following theorem shows our results for simple regret.

\begin{theorem}[Simple Regret Bound for MVR.]
    \label{thm:sr_mvr_rkhs}
    Suppose Assumptions \ref{asmp:smoothness} and \ref{asmp:noise} hold with $\rho_t = \rho > 0$ for all $t \in \N_+$. 
    Furthermore, assume $\rho$, $d$, $\ell$, and $\nu$ are $\Theta(1)$, and $\mX$ is finite.
    Then, when running \algoref{alg:mvr} with $\lambda^2 = \Theta(B^{-2})$, the following statements hold for any fixed $\alpha > 0$ with probability at least $1-\delta$:
    \begin{itemize}
        \item If $k = \sek$, $B = O\rbr{\exp\rbr{T^{\frac{1}{d+1}}\ln^{-\alpha} (1+T)}}$, and $T \geq \overline{T}$, then, $r_T = O\rbr{\sqrt{\frac{\ln^{d+1} (TB^2)}{T}}}$.
        Here, $\overline{T}$ is the constant, defined in statement 1 of \corref{cor:pvu_mvr}.
        \item If $k = \matk$ with $\nu > 1/2$, $B = O\rbr{T^{\frac{\nu}{d}} \ln^{\frac{-\nu(1+\alpha)}{d}} (1+T)}$, and $T \geq \overline{T}$, then, $r_T = \tilde{O}\rbr{B^{\frac{d}{2\nu + d}}T^{-\frac{\nu}{2\nu + d}}}$.
        Here, $\overline{T}$ is the constant defined in statement 2 of \corref{cor:pvu_mvr}.
    \end{itemize}
\end{theorem}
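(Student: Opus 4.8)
The plan is to reduce the simple regret to the product of a confidence width and the maximum posterior standard deviation after $T$ rounds of MVR, and then control each factor separately; the entire gain in the $B$-dependence comes from (a) the aggressive regularization $\lambda^2=\Theta(B^{-2})$, which balances the bias and noise contributions, and (b) bounding the resulting maximum posterior variance with statement~1 (for $\sek$) or statement~2 (for $\matk$) of \corref{cor:pvu_mvr} rather than the classical bound. Writing $\sigma_{\max}\coloneqq\max_{\bx\in\mX}\sigma_{\lambda^2\bI_T}(\bx;\bX_T)$ for the design $\bX_T$ produced by \algoref{alg:mvr}, the recommended $\hat\bx_T=\argmax_{\bx\in\mX}\mu_{\lambda^2\bI_T}(\bx;\bX_T,\by)$ together with a uniform bound $\abs{f(\bx)-\mu_{\lambda^2\bI_T}(\bx;\bX_T,\by)}\le\tilde\beta^{1/2}\sigma_{\lambda^2\bI_T}(\bx;\bX_T)$ yields the standard two-sided argument $r_T\le 2\tilde\beta^{1/2}\sigma_{\max}$. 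So the two tasks are to produce a good $\tilde\beta^{1/2}$ and a good bound on $\sigma_{\max}$.

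First I would build $\tilde\beta^{1/2}$ by splitting $\mu_{\lambda^2\bI_T}(\bx;\bX_T,\by)-f(\bx)$ into a deterministic bias term and a noise term. For the bias, the standard reproducing-kernel (power-function) inequality gives $\abs{\bk(\bx,\bX_T)^\top(\bK(\bX_T,\bX_T)+\lambda^2\bI_T)^{-1}\bm{f}(\bX_T)-f(\bx)}\le B\,\sigma_{\lambda^2\bI_T}(\bx;\bX_T)$, with no extra $\lambda$-factor. For the noise, the crucial point is that \algoref{alg:mvr} selects queries by a purely variance-based rule, so $\bX_T$, and hence $\bm{g}(\bx)\coloneqq(\bK(\bX_T,\bX_T)+\lambda^2\bI_T)^{-1}\bk(\bx,\bX_T)$, is independent of $\bm{\epsilon}=(\epsilon_1,\dots,\epsilon_T)^\top$. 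Thus $\bm{g}(\bx)^\top\bm{\epsilon}$ is a fixed linear combination of independent sub-Gaussians, sub-Gaussian with proxy $\rho^2\norm{\bm{g}(\bx)}_2^2$, and the elementary spectral inequality $\bk(\bx,\bX_T)^\top(\bK(\bX_T,\bX_T)+\lambda^2\bI_T)^{-2}\bk(\bx,\bX_T)\le\sigma_{\lambda^2\bI_T}^2(\bx;\bX_T)/\lambda^2$ bounds this proxy by $\rho^2\sigma_{\lambda^2\bI_T}^2(\bx;\bX_T)/\lambda^2$. A Hoeffding bound at each $\bx$ and a union bound over the finite $\mX$ then give, with probability $1-\delta$, the width $\tilde\beta^{1/2}=B+\tfrac{\rho}{\lambda}\sqrt{2\ln(2\abs{\mX}/\delta)}$, notably with no $\gamma_T$ inside.

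Next I would insert both ingredients with $\lambda^2=\Theta(B^{-2})$, so $\rho/\lambda=\Theta(\rho B)=\Theta(B)$ and $\tilde\beta^{1/2}=\Theta\rbr{B\sqrt{\ln(\abs{\mX}/\delta)}}$. The assumed upper bound on $B$ is precisely the statement that $\lambda^2=\Theta(B^{-2})$ meets the lower-bound hypothesis on $\overline{\lambda}_T^2$ in statement~1/2 of \corref{cor:pvu_mvr} once $T\ge\overline{T}$, so I may apply \eqref{eq:stat_pvu} with $\tilde\lambda_T^2=\overline{\lambda}_T^2=\lambda^2$ to obtain $\sigma_{\max}\le\tfrac{4}{T}\sqrt{\lambda^2T\gamma_T(\lambda^2\bI_T)}=\Theta\rbr{\lambda\sqrt{\gamma_T(\lambda^2\bI_T)/T}}$. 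Multiplying the two factors, the $\lambda=\Theta(B^{-1})$ in $\sigma_{\max}$ cancels the $\rho/\lambda=\Theta(B)$ in $\tilde\beta^{1/2}$, leaving $r_T=O\bigl(\sqrt{\ln(\abs{\mX}/\delta)\,\gamma_T(\lambda^2\bI_T)/T}\bigr)$. The proof finishes by substituting the kernel-specific MIG bounds at scale $\lambda^2=\Theta(B^{-2})$, namely $\gamma_T(\lambda^2\bI_T)=O(\ln^{d+1}(TB^2))$ for $\sek$ and $\gamma_T(\lambda^2\bI_T)=\tilde{O}((TB^2)^{d/(2\nu+d)})$ for $\matk$, which collapse to the stated $O\bigl(\sqrt{\ln^{d+1}(TB^2)/T}\bigr)$ and $\tilde{O}\bigl(B^{d/(2\nu+d)}T^{-\nu/(2\nu+d)}\bigr)$.

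The main obstacle is the confidence step, specifically avoiding a spurious $\sqrt{\gamma_T}$ in the noise width. The off-the-shelf self-normalized (adaptive) confidence bound would instead give $\tilde\beta^{1/2}=B+\tfrac{\rho}{\lambda}\sqrt{2(\gamma_T+\ln(1/\delta))}$; after multiplication by $\sigma_{\max}=\Theta(\lambda\sqrt{\gamma_T/T})$, its noise part becomes $\Theta(\rho\,\gamma_T/\sqrt{T})$, a factor $\sqrt{\gamma_T}$ worse than the target and enough to destroy the optimal exponent of $B$. Exploiting that MVR's design is non-adaptive (so the prediction noise at a fixed point is an honest sub-Gaussian sum) and that $\mX$ is finite (so the union bound costs only $\sqrt{\ln\abs{\mX}}$) is what removes this factor; I also have to verify the proxy inequality $\bk^\top(\bK+\lambda^2\bI)^{-2}\bk\le\sigma_{\lambda^2\bI}^2/\lambda^2$, which is what makes the $\lambda$'s cancel, and — for $\sek$ — absorb the constant-factor mismatch between $\lambda^2=\Theta(B^{-2})$ and the corollary's exponential threshold by invoking \corref{cor:pvu_mvr} with a slightly smaller log-power $\alpha$.
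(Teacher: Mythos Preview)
Your proposal is correct and follows essentially the same route as the paper: the paper invokes the non-adaptive confidence bound of \lemref{lem:noisy_cb} (which you rederive from scratch via the bias/noise split, the proxy inequality $\bk^\top(\bK+\lambda^2\bI)^{-2}\bk\le\sigma^2/\lambda^2$, and a union bound over $\mX$) to get $r_T\le 2(B+\tfrac{\rho}{\lambda}\sqrt{2\ln(2|\mX|/\delta)})\max_{\bx}\sigma_{\lambda^2\bI_T}(\bx;\bX_T)$, then applies statements~1 and~2 of \corref{cor:pvu_mvr} with $\overline\lambda_T^2=\lambda^2=\Theta(B^{-2})$ and substitutes the MIG bounds exactly as you describe. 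Your final remark about absorbing the factor of~$2$ in the SE exponent by applying the corollary with a slightly smaller $\alpha$ is in fact more careful than the paper's own one-line justification at that step.
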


The important point is that the setting of the noise parameter $\lambda^2 = \Theta(1/B^2)$ depends on $B$.
We describe the full proof of the above theorem in Appendix~\ref{sec:rkhs_od_proof}.

\begin{remark}
    If we allow additional logarithmatic factor, we can eliminate the finiteness assumption of $\mX$ in \thmref{thm:sr_mvr_rkhs} by relying on the discretization with $1/T$-net as with \remref{rem:disc}.
    The notable point we have to care about is that the Lipschitz constant is given as $B$ in the existing result~\cite{lee2022multi}.
    Therefore, the extension of \thmref{thm:sr_mvr_rkhs} for continuous domain 
    requires $1/(BT)$-net to maintain the order of $B$, and the resulting regret upper bound suffers from additional $\sqrt{\ln (TB)}$ factor, instead of $\sqrt{\ln T}$ factor derived from the standard discretizing argument that does not care the dependence of $B$~\cite{li2022gaussian}.
\end{remark}

\paragraph{Discussion.}
In both kernels, the polynomial dependence of $B$ matches the lower bound 
in \cite{scarlett2017lower}, while there exists room for improvement in the logarithmic factor.
On the other hand, there exist some exceptional cases that Theorem~\ref{thm:sr_mvr_rkhs} does not cover, 
even though its lower bound of the simple regret is guaranteed to converge to $0$.
For example, when $B = \Theta(T^{\frac{\nu}{d}} \ln^{-\frac{\nu}{d}} 
T)$, the lower bound of \cite{scarlett2017lower} 
suggest that some algorithm find any $\epsilon$-optimal point for sufficiently large $T$ (namely, simple regret converges to $0$), 
while violating our assumption. As with the discussion in \secref{sec:nls}, 
this limitation can be eliminated in the future if the upper bound of MIG matches the conjectured best upper bound.

\subsection{Cumulative Regret}

The following theorem also shows the PE algorithm can achieve optimal dependence of $B$ up to a poly-logarithmic factor.
\begin{theorem}[Cumulative Regret Bound for PE.]
    \label{thm:cr_pe_rkhs}
    Suppose Assumptions \ref{asmp:smoothness} and \ref{asmp:noise} hold with $\rho_t = \rho > 0$ for all $t \in \N_+$. 
    Furthermore, assume $\rho$, $d$, $\ell$, and $\nu$ are $\Theta(1)$, and $\mX$ is finite.
    Then, when running \algoref{alg:pe} with $\beta^{1/2} = (B + \rho \lambda^{-1})\sqrt{2 \ln \frac{2|\mX|(1 + \log_2 T)}{\delta}}$, $\lambda^2 = \Theta(B^{-2})$, and any fixed $N_1 \in \N_+$, the following statements hold with probability at least $1-\delta$:
    \begin{itemize}
        \item If $k = \sek$ and $B = O(\sqrt{T})$, then, $R_T = O\rbr{(\ln T) \sqrt{T \rbr{\ln^{d+1} (TB^2)} \rbr{\ln \frac{|\mX|}{\delta}}}}$.
        \item If $k = \matk$ with $\nu > 1/2$ and $B = O\rbr{T^{\frac{2 \nu^2 + 3 \nu d }{ 4 d^2 + 4\nu^2 + 6\nu d }}}$, then, $R_T = \tilde{O}\rbr{T^{\frac{\nu+d}{2\nu+d}} B^{\frac{d}{2\nu+d}}}$.
    \end{itemize}
\end{theorem}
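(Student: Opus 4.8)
The plan is to run the standard Phased Elimination (PE) argument---valid confidence bound $\Rightarrow$ elimination retains only near-optimal arms $\Rightarrow$ per-batch regret is controlled by the maximum posterior standard deviation of the previous batch---and to feed it with the sharpened posterior-variance bound of \lemref{lem:pvu_mvr} (equivalently \corref{cor:pvu_mvr}). The one idea that yields the optimal $B$-dependence is the choice $\lambda^2=\Theta(B^{-2})$: it makes the product of the confidence-width constant and the posterior standard deviation free of any explicit $B$, since $(B+\rho\lambda^{-1})\lambda = B\lambda+\rho = \Theta(1)$ under $\rho=\Theta(1)$. After this cancellation, $B$ re-enters only through the maximum information gain at the small regularizer, $\gamma_{N}(\lambda^2\bI_N)=\gamma_N(\Theta(B^{-2})\bI_N)$, whose known upper bounds carry only a benign $\ln(NB^2)$ (SE) or $(NB^2)^{d/(2\nu+d)}$ (Mat\'ern) dependence.

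Concretely, first I would establish the uniform confidence bound: with the stated $\beta^{1/2}=(B+\rho\lambda^{-1})\sqrt{2\ln\frac{2|\mX|(1+\log_2 T)}{\delta}}$, with probability at least $1-\delta$ it holds simultaneously for every batch and every $\bx\in\mX$ that $|f(\bx)-\mu(\bx)|\le\beta^{1/2}\sigma(\bx)$. This is the standard RKHS-plus-sub-Gaussian concentration (bias term $\le B\sigma$, noise term sub-Gaussian with proxy $\le\rho\lambda^{-1}\sigma$) followed by a union bound over the $|\mX|$ arms and the $O(\log_2 T)$ batches, which is exactly what the $|\mX|(1+\log_2 T)/\delta$ inside the logarithm encodes. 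Second, conditioning on this event, the elimination rule (discard arms whose UCB falls below the running max LCB) never removes $\bx^\ast$ and guarantees that any $\bx$ surviving into batch $s$ has gap $f(\bx^\ast)-f(\bx)\le 4\beta^{1/2}\max_{\bx'}\sigma_{s-1}(\bx')$, i.e.\ controlled by the confidence width at the end of batch $s-1$.

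Third, because each batch runs MVR on the active set, I would invoke statement 1 of \lemref{lem:pvu_mvr} with $\tilde\mX$ the active set and $\overline\lambda_T=\tilde\lambda_T=\lambda$ to get, at the end of a length-$N_s$ batch, $\max_{\bx}\sigma_s(\bx)\le\tfrac{4}{N_s}\sqrt{\lambda^2 N_s\,\gamma_{N_s}(\lambda^2\bI_{N_s})}=\tfrac{4}{\sqrt{N_s}}\sqrt{\lambda^2\gamma_{N_s}(\lambda^2\bI_{N_s})}$, valid once $N_s/2\ge 3\gamma_{N_s}(\lambda^2\bI_{N_s})$. Fourth, substituting the MIG bounds at $\lambda^2=\Theta(B^{-2})$ and using $\beta^{1/2}\lambda=\Theta(\sqrt{\ln(|\mX|\log T/\delta)})$, the per-batch regret is at most $N_s\cdot 4\beta^{1/2}\max_{\bx}\sigma_{s-1}(\bx)\lesssim \beta^{1/2}\lambda\sqrt{N_{s-1}\gamma_{N_{s-1}}(\lambda^2\bI)}$ under the doubling schedule $N_s=2N_{s-1}$. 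Finally, bounding each of the $O(\log_2 T)$ batches by its largest value via $N_{s-1}\le T$ and $\gamma_{N_{s-1}}\le\gamma_T$ and summing yields, for SE, $R_T=O((\ln T)\sqrt{T\ln^{d+1}(TB^2)\ln(|\mX|/\delta)})$---the explicit $\ln T$ being precisely the batch count---and, for Mat\'ern, after absorbing all logarithmic and batch-count factors into $\tilde O(\cdot)$, $R_T=\tilde O(T^{(\nu+d)/(2\nu+d)}B^{d/(2\nu+d)})$.

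The main obstacle is the careful bookkeeping of the $B$-dependence and the verification of the applicability condition $N_s/2\ge 3\gamma_{N_s}(\lambda^2\bI_{N_s})$ of \lemref{lem:pvu_mvr} under the shrinking regularizer $\lambda^2=\Theta(B^{-2})$. Because $\gamma_{N_s}(\lambda^2\bI)$ itself grows with $B$ (for Mat\'ern like $(N_sB^2)^{d/(2\nu+d)}$), this condition forces a minimum batch length, and requiring the whole schedule to be covered while keeping the trivially-bounded regret of any too-short initial batches subdominant is what pins down the admissible ranges $B=O(\sqrt{T})$ (SE) and $B=O(T^{(2\nu^2+3\nu d)/(4d^2+4\nu^2+6\nu d)})$ (Mat\'ern). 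Establishing these ranges, and confirming the advertised rate rather than the looser one that a naive $\lambda=\rho$ choice or the \citet{flynn2024tighter}-style variance bound would give, is exactly where the tightness of \lemref{lem:pvu_mvr} in the small-$\lambda$ regime is essential; the remaining summation and concentration steps are routine.
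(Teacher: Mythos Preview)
Your overall plan—confidence bound, elimination, per-batch regret controlled by the previous batch's maximum posterior standard deviation, and the key cancellation $(B+\rho\lambda^{-1})\lambda=\Theta(1)$ that pushes all $B$-dependence into the MIG at regularizer $\lambda^2=\Theta(B^{-2})$—is correct and is also what the paper does. But the specific device you invoke to bound the posterior variance, and hence the origin of the $B$ conditions, does \emph{not} match the paper.

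The paper explicitly does \emph{not} use \lemref{lem:pvu_mvr} (or \corref{cor:pvu_mvr}) in this proof; see the discussion immediately following the theorem in \secref{sec:rkhs_od}. Instead it applies the elliptical potential count lemma (\lemref{lem:epcl}) \emph{directly} inside each batch, obtaining the two-term bound
\[
\max_{\bx}\sigma_{\lambda^2\bI_{N}}(\bx)\ \le\ \frac{1}{N}\Bigl[\min\bigl\{3\,\overline\gamma(3\gamma_N(\lambda^2\bI_N),\lambda^2),\,3\gamma_N(\lambda^2\bI_N)\bigr\}+2\sqrt{\lambda^2 N\,\gamma_N(\lambda^2\bI_N)}\Bigr],
\]
then replacing $N$ by $T$ and summing over the $O(\log T)$ batches. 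For the Mat\'ern kernel, the stated range $B=O\bigl(T^{(2\nu^2+3\nu d)/(4d^2+4\nu^2+6\nu d)}\bigr)$ is exactly the threshold below which the first (count) term $\overline\gamma(3\gamma_T,\lambda^2)=\tilde O\bigl(T^{(d/(2\nu+d))^2}B^{2d(2\nu+2d)/(2\nu+d)^2}\bigr)$ is dominated by the second term $\sqrt{\lambda^2 T\gamma_T}=\tilde O\bigl(T^{(\nu+d)/(2\nu+d)}B^{-2\nu/(2\nu+d)}\bigr)$. So your final paragraph has the source of the Mat\'ern exponent backwards: the condition does not come from the applicability hypothesis of \lemref{lem:pvu_mvr}, and your claim that ``the \citet{flynn2024tighter}-style variance bound would give'' a looser result is the opposite of what happens—the paper's proof \emph{is} the Flynn-style two-term bound.

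If you actually carry out \emph{your} route—apply \lemref{lem:pvu_mvr} in each batch once $N_{i-1}/2\ge 3\gamma_{N_{i-1}}(\lambda^2\bI_{N_{i-1}})$, and crudely bound the earlier ``too-short'' batches by $2BN_i$—the Mat\'ern applicability threshold is $N_{i-1}\gtrsim B^{d/\nu}$, and subdominance of the early contribution $\lesssim B^{(\nu+d)/\nu}$ only forces $B\lesssim T^{(\nu^2+\nu d)/(2\nu^2+2\nu d+d^2)}$, which is strictly \emph{less} restrictive than the theorem's exponent. Thus your route would in fact prove the theorem (and a bit more), but the quoted Mat\'ern condition does not fall out of it; to reproduce the statement as written you need the paper's two-term decomposition via \lemref{lem:epcl}.
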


See Appendix~\ref{sec:rkhs_od_proof} for the proof.
In contrast to the analysis of the MVR, the analysis of PE cannot 
leverage \corref{cor:pvu_mvr} by setting $\lambda^2 = \Theta(B^{-2})$.
Intuitively, this is because the existence of the common constant $\overline{T}$ over each batch is not guaranteed since $\lambda^2$ depends only on $T$, 
not the total step size of each batch. Due to this limitation, the above result is proved by leveraging \lemref{lem:epcl} as with the analysis of \cite{flynn2024tighter}, instead of using \corref{cor:pvu_mvr}. 
As a result, the conditions about $B$ are more restricted 
than those of \thmref{thm:sr_mvr_rkhs}, due to the fundamental limitation 
of the analysis of \cite{flynn2024tighter} as previously discussed in \secref{sec:uub_pv}. For example, if $k = \matk$, the increasing speed of $B = O\rbr{T^{\frac{2 \nu^2 + 3 \nu d }{ 4 d^2 + 4\nu^2 + 6\nu d }}}$ in 
\thmref{thm:cr_pe_rkhs} is more restricted than $B = \tilde{O}\rbr{T^{\frac{\nu}{d}}}$ in \thmref{thm:sr_mvr_rkhs}, regardless the fact that lower bound~\cite{scarlett2017lower} suggest the sublinear cumulative regret is achievable when $B = o(T^{\frac{\nu}{d}})$.
We leave future research to break this limitation.

\section{Non-Stationary Variance}
\label{sec:nsv}
As the third application of our result, we consider the non-stationary
variance setting, which falls between a noiseless and noisy regime. 
In this setting, our goal is to quantify the regret by the cumulative variance proxy $V_T = \sum_{t \in [T]} \rho_t^2$.
That is, we aim to construct an algorithm that achieves better performance than the one for the stationary noise setting if $V_T$ increases sublinearly.
While the non-stationary variance setting has already been studied and motivated in the linear bandits~\cite{zhou2021nearly}, to our knowledge, no existing GP-bandits literature exists for this problem.
Therefore, in \appref{sec:pot_app_nsv}, we describe some potential applications to motivate non-stationary variance setting in GP-bandits.

By following the existing works~\cite{zhou2021nearly,zhang2021improved,zhou2022computationally}, we suppose that the learner can access 
true variance proxy $\rho_t^2$ at the end of step $t$. 
We leave the unknown $\rho_t^2$ setting for future research.
Note that, as described later, the direct extension of the existing linear bandit algorithm with non-stationary variance does not lead to the near-optimal guarantee.

\paragraph{Lower bound.}
Since the stationary noise problem with $\rho^2 = V_T/T$ is 
subsumed in the non-stationary problem with the cumulative
variance proxy $V_T$, the following lower bounds are obtained as the corollary of the existing stationary variance lower bound~\cite{scarlett2017lower} 
and noiseless lower bound~\cite{bull2011convergence}.

\begin{corollary}[Lower bound for cumulative regret]
    \label{cor:cr_nsv_lower}
    Let $\mX$ be $\mX = [0, 1]^d$. Furthermore, assume $V_T = \Omega(1)$ and $V_T = O(T^2)$ with sufficiently small implied constant.
    Then, for any algorithm, there exists a GP bandit problem instance that satisfies Assumptions~\ref{asmp:smoothness} and \ref{asmp:noise} with $\sum_{t \in [T]} \rho_t^2 = V_T$ and the following two statements:
    \begin{itemize}
        \item If $k = \sek$, $\Ep[R_T] = \Omega\rbr{\sqrt{V_T \ln^{\frac{d}{2}} \frac{T^2}{V_T}}}$.
        \item If $k = \matk$, $\Ep[R_T] = \Omega\rbr{V_T^{\frac{\nu}{2\nu + d}} T^{\frac{d}{2\nu + d}}}$.
    \end{itemize}
    Here, we assume $B$, $d$, $\ell$, and $\nu$ are $\Theta(1)$.
\end{corollary}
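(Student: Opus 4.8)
The plan is to reduce both statements to the stationary-noise lower bounds of \cite{scarlett2017lower} (together with the noiseless bound of \cite{bull2011convergence} at the boundary), using the fact that the non-stationary class with cumulative budget $V_T$ contains a convenient stationary instance. Concretely, I would take the instance with $\rho_t^2 = V_T/T$ for every $t \in [T]$. Since $\sum_{t\in[T]} \rho_t^2 = V_T$ exactly, and each $\epsilon_t$ may be chosen Gaussian (hence sub-Gaussian with variance proxy $\sqrt{V_T/T}$), this is a legitimate member of the class described by \asmpref{asmp:noise}. Therefore any lower bound valid for the \emph{stationary} GP-bandit problem at the fixed level $\rho^2 = V_T/T$ transfers verbatim: for an arbitrary algorithm, the hard instance produced by the stationary construction is itself a valid non-stationary instance with the prescribed $\sum_t \rho_t^2 = V_T$. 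It thus suffices to instantiate the stationary bounds at $\rho^2 = V_T/T$.

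Next I would make the noise dependence of the stationary bounds explicit. In the constructions of \cite{scarlett2017lower} one plants a family of bump functions of bounded RKHS norm on disjoint sub-cubes of $[0,1]^d$ and argues via a KL/Fano estimate in which the noise enters only through the per-round information $\propto (\text{bump height})^2/\rho^2$; tracking this dependence yields stationary lower bounds of the form $\Ep[R_T] = \Omega\bigl(\sqrt{\rho^2 T\,\ln^{d/2}(T/\rho^2)}\bigr)$ for $k = \sek$ and $\Ep[R_T] = \Omega\bigl(\rho^{\frac{2\nu}{2\nu+d}} T^{\frac{\nu+d}{2\nu+d}}\bigr)$ for $k = \matk$, both recovering the usual constant-noise rates at $\rho = \Theta(1)$. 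Substituting $\rho^2 = V_T/T$ gives $\rho^2 T = V_T$ and $T/\rho^2 = T^2/V_T$, so the SE bound becomes $\sqrt{V_T\,\ln^{d/2}(T^2/V_T)}$ and the Mat\'ern bound becomes $(V_T/T)^{\frac{\nu}{2\nu+d}} T^{\frac{\nu+d}{2\nu+d}} = V_T^{\frac{\nu}{2\nu+d}} T^{\frac{d}{2\nu+d}}$, matching the two claimed expressions.

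Finally I would verify the regime conditions and handle the boundary. The two hypotheses $V_T = \Omega(1)$ and $V_T = O(T^2)$ (the latter with a sufficiently small implied constant) jointly calibrate the construction: they keep $T^2/V_T$ bounded away from $1$, so that the SE log factor $\ln^{d/2}(T^2/V_T)$ is $\Omega(1)$ and the Mat\'ern rate stays below the trivial linear ceiling $R_T = O(T)$, and they keep the required bump heights inside the RKHS ball of radius $B = \Theta(1)$ demanded by \asmpref{asmp:smoothness}. I expect the only genuine difficulty to lie at the near-noiseless endpoint $V_T = \Theta(1)$, i.e.\ $\rho^2 = \Theta(1/T)$: there one must confirm that the construction still yields the surviving $(\ln T)^{d/4}$ factor (SE) and the $T^{d/(2\nu+d)}$ rate (Mat\'ern), invoking the noiseless lower bound of \cite{bull2011convergence} should Scarlett's noisy argument be stated only for noise bounded away from zero, and checking that the noisy and noiseless constructions together cover the entire admissible range of $V_T$.
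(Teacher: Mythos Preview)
Your proposal is correct and follows exactly the reduction the paper uses: embed the stationary instance $\rho_t^2 = V_T/T$ into the non-stationary class and invoke the noise-level-dependent version of the \cite{scarlett2017lower} lower bounds, which after substituting $\rho^2 T = V_T$ and $T/\rho^2 = T^2/V_T$ yields the stated SE and Mat\'ern rates. The paper attributes this corollary entirely to \cite{scarlett2017lower} (the \cite{bull2011convergence} result is only needed for one case of the \emph{simple}-regret lower bound, Corollary~\ref{cor:sr_nsv_lower}), so your fallback to Bull at the $V_T=\Theta(1)$ boundary is unnecessary here, though it does no harm.
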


\begin{corollary}[Lower bound for simple regret]
    \label{cor:sr_nsv_lower}
    Let $\mX$ be $\mX = [0, 1]^d$. Furthermore, assume $V_T = \Omega(1)$.
    If there exists an algorithm such that $\Ep[r_T] \leq \epsilon$ hold for all problem instances 
    that satisfy Assumptions~\ref{asmp:smoothness} and \ref{asmp:noise} with $\sum_{t \in [T]} \rho_t^2 = V_T$, then:
    \begin{itemize}
        \item For $k = \sek$, the total step size $T$ needs to satisfy $T \geq \Omega\rbr{\sqrt{\frac{V_T}{\epsilon^2} \ln^{\frac{d}{2}} \frac{1}{\epsilon}}}$.
        \item For $k = \matk$, the total step size $T$ needs to satisfy 
        \begin{equation*}
            T \geq \begin{cases}
        \Omega\rbr{\sqrt{\frac{V_T}{\epsilon^2} \rbr{\frac{1}{\epsilon}}^{\frac{d}{\nu}}}} ~&\mathrm{if}~~d \leq 2\nu ~\mathrm{or}~V_T = \Omega(T^{\frac{d-2\nu}{d}}), \\
        \Omega\rbr{\rbr{\frac{1}{\epsilon}}^{\frac{d}{\nu}}} ~&\mathrm{if}~~d > 2\nu ~\mathrm{and}~V_T = O(T^{\frac{d-2\nu}{d}}).
        \end{cases}
        \end{equation*}
    \end{itemize}
    Here, we assume $B$, $d$, $\ell$, and $\nu$ are $\Theta(1)$.
    Furthermore, $\epsilon > 0$ is a sufficiently small constant.
\end{corollary}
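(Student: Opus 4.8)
The plan is to obtain both bounds by reduction to two already-established results: the stationary noisy simple-regret lower bound of \cite{scarlett2017lower} and the noiseless simple-regret lower bound of \cite{bull2011convergence}. Since a lower bound only requires exhibiting \emph{one} hard instance in the admissible class, I am free to choose a convenient noise allocation subject to $\sum_{t\in[T]}\rho_t^2 = V_T$, exactly as was done for the cumulative-regret analog in Corollary~\ref{cor:cr_nsv_lower}.

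\emph{First ingredient (stationary reduction).} I would take the stationary instance $\rho_t^2 = V_T/T$ for all $t$, which meets the constraint with equality. Any algorithm guaranteeing $\Ep[r_T]\le\epsilon$ on every instance with cumulative proxy $V_T$ in particular guarantees it on this one, so the stationary simple-regret sample-complexity lower bounds of \cite{scarlett2017lower} with per-step variance $\rho^2 = V_T/T$ apply. These scale as $T=\Omega\rbr{\frac{\rho^2}{\epsilon^2}\ln^{d/2}\frac1\epsilon}$ for $\sek$ and $T=\Omega\rbr{\frac{\rho^2}{\epsilon^2}\rbr{\frac1\epsilon}^{d/\nu}}$ for $\matk$; substituting $\rho^2=V_T/T$ and moving one power of $T$ across yields $T=\Omega\rbr{\sqrt{\frac{V_T}{\epsilon^2}\ln^{d/2}\frac1\epsilon}}$ and $T=\Omega\rbr{\sqrt{\frac{V_T}{\epsilon^2}\rbr{\frac1\epsilon}^{d/\nu}}}$, respectively. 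The square root is precisely the effect of $\rho^2$ itself carrying a factor $1/T$.

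\emph{Second ingredient (noiseless floor, $\matk$ only).} The noiseless bound of \cite{bull2011convergence} states that any algorithm with perfect observations needs $T=\Omega\rbr{\rbr{1/\epsilon}^{d/\nu}}$ for $\matk$. This floor persists under noise by a simulation argument: given any noisy algorithm with proxy profile $(\rho_t)$, a learner with noiseless access can inject artificial sub-Gaussian noise of proxy $\rho_t$ into each observed $f(\bx_t)$ and feed the corrupted value to the noisy algorithm, reproducing its exact observation law and hence its expected simple regret with the same $T$. Choosing the injection profile $\rho_t^2=V_T/T$ keeps this inside the admissible class, so no noisy algorithm can beat the noiseless sample complexity: $T=\Omega\rbr{\rbr{1/\epsilon}^{d/\nu}}$ holds irrespective of $V_T$. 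For $\sek$ the corresponding noiseless floor is only poly-logarithmic in $1/\epsilon$ and is dominated by the first ingredient, which is why $\sek$ has a single regime.

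\emph{Combining and the dichotomy.} A valid algorithm must respect both lower bounds, so $T$ is at least their maximum. Writing $B_1=\sqrt{V_T}\,\epsilon^{-(2\nu+d)/(2\nu)}$ (first ingredient) and $B_2=\epsilon^{-d/\nu}$ (second), one computes $B_1/B_2 = \sqrt{V_T}\,\epsilon^{(d-2\nu)/(2\nu)}$, so $B_1\ge B_2 \iff V_T\ge \epsilon^{(2\nu-d)/\nu}$. Evaluated at the boundary scale $\epsilon\asymp T^{-\nu/d}$, where both bounds are of order $T$, this becomes $V_T=\Omega\rbr{T^{(d-2\nu)/d}}$; and when $d\le 2\nu$ the exponent $(d-2\nu)/d\le 0$ forces $T^{(d-2\nu)/d}\le 1=O(V_T)$, so $B_1$ always dominates. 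This reproduces the two stated cases. I expect the main obstacle to be making the noiseless-floor simulation rigorous within the exact hard-function family of \cite{bull2011convergence} — verifying that the injected noise keeps us inside the admissible instance class and that the boundary identification $\epsilon\asymp T^{-\nu/d}$ converts the $\epsilon$-indexed condition $V_T\ge\epsilon^{(2\nu-d)/\nu}$ into the $T$-indexed condition of the corollary without loss in the implied constants.
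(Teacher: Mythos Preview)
Your proposal is correct and follows essentially the same approach as the paper: the paper likewise obtains the first regime by reducing to the stationary lower bound of \cite{scarlett2017lower} with $\rho^2=V_T/T$, and the second regime from the noiseless lower bound of \cite{bull2011convergence}, together with the same simulation observation that any noisy algorithm can be run by a noiseless learner who injects artificial noise. Your dichotomy computation spelling out when each bound dominates is more explicit than what the paper records, but the underlying argument is identical.
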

The lower bound $\Omega\rbr{\rbr{1 / \epsilon}^{\frac{d}{\nu}}}$ for $\matk$ if $d > 2\nu$ and $V_T = O(T^{\frac{d-2\nu}{d}})$ in Corollary~\ref{cor:sr_nsv_lower} come from \cite{bull2011convergence}, and others come from \cite{scarlett2017lower}.

Note that the noiseless lower bound for expected regret also holds for noisy settings since an expected regret in the noisy setting can always be reduced to one in the noiseless setting, whose algorithm randomness is induced by observation noise. 
Interestingly, the above simple regret lower bound indicates 
that if $d > 2\nu$ and $V_T = O(T^{\frac{d-2\nu}{d}})$,
the non-stationary variance setting may have the same level of difficulty as that of the noiseless problem. 
Our VA-MVR algorithm proposed below justifies this fact 
by providing simple regret upper bound matching the above lower bound.

\paragraph{Algorithm.}
Algorithms~\ref{alg:vape} and \ref{alg:vamvr} in Appendix~\ref{sec:pseudo_code} show PE and MVR-based algorithms for non-stationary variance problems, which we call variance-aware PE (VA-PE) and MVR (VA-MVR), respectively.
The algorithms themselves are the variant of the standard PE or MVR algorithms that directly set the true variance proxy $\rho_t^2$ to the noise variance parameter $\lambda_t^2$ for the
heteroscedastic GP model.

\paragraph{Theoretical analysis.}
The following theorems give the cumulative and simple regret guarantees for VA-PE and VA-MVR, respectively.

\begin{theorem}[Cumulative regret upper bound for VA-PE]
\label{thm:vape}
Suppose Assumptions~\ref{asmp:smoothness} and \ref{asmp:noise}, and $|\mX| < \infty$ holds. Furthermore, assume $V_T \coloneqq \sum_{t \in [T]} \rho_t^2 = \Omega(1)$. Then, when running Algorithm~\ref{alg:vape} with $\beta^{1/2} = \rbr{B + \sqrt{2 \ln \frac{2|\mX|(1 + \log_2 T)}{\delta}}}$, with probability at least $1-\delta$, $R_T = O\rbr{(\ln T)\sqrt{V_T \rbr{\ln^{d+1}\frac{T^2}{V_T}} \rbr{\ln \frac{|\mX|}{\delta}}}}$ if $k = \sek$. Furthermore, if $k = \matk$,
\begin{equation*}
R_T = \begin{cases}
        \tilde{O}\rbr{V_T^{\frac{\nu}{2\nu+d}} T^{\frac{d}{2\nu+d}}} ~~&\mathrm{if}~~d \leq 2\nu, \\
        \tilde{O}\rbr{V_T^{\frac{\nu}{2\nu+d}} T^{\frac{d}{2\nu+d}}} ~~&\mathrm{if}~~d > 2\nu, V_T = \Omega\rbr{T^{\frac{d - 2\nu}{d}}}, \\
        \tilde{O}\rbr{T^{\frac{d-\nu}{d}}} ~~&\mathrm{if}~~d > 2\nu, V_T = O\rbr{T^{\frac{d - 2\nu}{d}}}.
    \end{cases}    
\end{equation*}
    
\end{theorem}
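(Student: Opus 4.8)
The plan is to run the standard phased-elimination argument while feeding it the non-stationary posterior-variance bound of statement~2 of \lemref{lem:pvu_mvr} with the model noise parameters set to the \emph{true} variance proxies, $\lambda_t^2 = \rho_t^2$; this matching is exactly what converts the cumulative variance $V_T$ into regret. The first ingredient is a uniform high-probability confidence bound $|f(\bx) - \mu_{\Sigma}(\bx)| \le \beta^{1/2}\sigma_{\Sigma}(\bx)$, holding for every $\bx \in \mX$ in every batch. Decompose $\mu_{\Sigma}(\bx) - f(\bx)$ into a bias term, which obeys the usual RKHS bound $B\,\sigma_{\Sigma}(\bx)$ under \asmpref{asmp:smoothness}, and a noise term $\bk(\bx,\bX)^{\top}(\bK + \Sigma)^{-1}\bm{\epsilon}$. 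Writing $w = (\bK+\Sigma)^{-1}\bk(\bx,\bX)$, the noise term is sub-Gaussian with variance proxy $\sqrt{w^{\top}\Sigma w}$, and because the model covariance $\Sigma = \diag(\rho_1^2,\ldots)$ matches the true proxies one has $w^{\top}\Sigma w \le \sigma_{\Sigma}^2(\bx)$ (the heteroscedastic analogue of the homoscedastic identity that produces a $\rho/\lambda$ factor, here collapsing that factor to $1$). A Gaussian tail bound, union-bounded over the $|\mX|$ points and the at most $1+\log_2 T$ batches, then yields exactly $\beta^{1/2} = B + \sqrt{2\ln(2|\mX|(1+\log_2 T)/\delta)}$. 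This is the step that removes the $\rho\lambda^{-1}$ blow-up present in \thmref{thm:cr_pe_rkhs}.

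Second, I would bound the end-of-batch posterior width. In batch $i$ (length $N_i$) the algorithm performs variance-aware MVR over the surviving candidate set, so statement~2 of \lemref{lem:pvu_mvr} applies with $\lambda_t = \rho_t$ and a free upper sequence $\tilde{\lambda}_t \ge \rho_t$. Taking $\tilde{\lambda}_t^2 = \max(\rho_t^2, s_i)$ for a batch floor $s_i$ and using $\gamma_{N_i}(\tilde{\Sigma}) \le \overline{\gamma}(N_i, s_i)$ gives
$$\max_{\bx}\sigma_{\Sigma}(\bx) \le \frac{4}{N_i}\sqrt{\bigl(V_{(i)} + N_i s_i\bigr)\,\overline{\gamma}(N_i, s_i)}, \qquad V_{(i)} = \sum_{t \in \text{batch }i}\rho_t^2 .$$
For $k = \matk$ the choice $s_i \sim V_{(i)}/N_i$ balances the two factors and yields $\max_{\bx}\sigma_{\Sigma}(\bx) = \tilde{O}(V_{(i)}^{\nu/(2\nu+d)} N_i^{-2\nu/(2\nu+d)})$; whenever this variance-weighted estimate drops below the purely noiseless decay I would instead invoke statement~4 of \corref{cor:pvu_mvr}, giving $\tilde{O}(N_i^{-\nu/d})$. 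The crossover between the two estimates occurs precisely at $V_T = \Theta(T^{(d-2\nu)/d})$, which is what produces the three Mat\'ern cases; for $k = \sek$ the same computation with $\overline{\gamma}(N_i,s_i) = O(\ln^{d+1}(N_i/s_i))$ and $s_i \sim V_{(i)}/N_i$ produces the $\ln^{d+1}(T^2/V_T)$ factor.

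Third, I would close the loop. The confidence bound guarantees that $\bx^{\ast}$ is never eliminated and that any point surviving into batch $i$ has suboptimality gap $O(\beta^{1/2})$ times the end-of-batch-$(i-1)$ width, so the batch-$i$ regret is at most $N_i$ times that quantity. With a doubling schedule $N_i/N_{i-1} = O(1)$, the per-batch regret telescopes into $\beta^{1/2}\sum_i\sqrt{V_{(i)}\,\overline{\gamma}(N_i,s_i)}$ (or its noiseless analogue), and Cauchy--Schwarz $\sum_i\sqrt{V_{(i)}} \le \sqrt{(\#\text{batches})\,V_T}$ collapses the per-batch variances into $V_T$, contributing the leading $\ln T$ and the stated bounds.

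The main obstacle is the bookkeeping in the second and third steps: one must verify that the batch-floor choice (equivalently, the switch between the variance-weighted and noiseless posterior-variance estimates) is globally consistent and that the crossover $V_T = \Theta(T^{(d-2\nu)/d})$ lines up exactly with the three stated cases, while also checking the applicability condition $N_i/2 \ge 4\gamma_{N_i}(\tilde{\Sigma})$ of \lemref{lem:pvu_mvr} for every batch (the first few short batches satisfy it trivially or contribute only $O(1)$ regret, absorbed using $V_T = \Omega(1)$ and the fixed $N_1$). Matching the precise logarithmic factors, in particular the $\ln^{d+1}(T^2/V_T)$ argument and the leading $\ln T$, is the remaining delicate accounting.
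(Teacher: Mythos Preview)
Your overall architecture matches the paper's: establish the heteroscedastic confidence bound (the paper isolates this as \lemref{lem:nsv_noisy_cb}, proving exactly the bias-plus-noise decomposition you sketch, with the ratio $\rho/\lambda$ collapsing to $1$), apply statement~2 of \lemref{lem:pvu_mvr} per batch, and aggregate over the $O(\log T)$ batches. The essential difference is the choice of the analysis floor $s_i$. You set $s_i \sim V_{(i)}/N_i$ with the \emph{per-batch} cumulative variance $V_{(i)}$; the paper instead sets $s_i = V_T/N_i$ using the \emph{global} cumulative variance, and for Mat\'ern adds a second, purely deterministic floor $\overline{\lambda}_{N_i}^2 = N_i^{-2\nu/d}\ln^{2\nu(1+\alpha)/d}(1+N_i)$, taking $(\tilde\lambda_j^{(i)})^2 = \max\{(\rho_j^{(i)})^2,\; V_T/N_i,\; \overline{\lambda}_{N_i}^2\}$. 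This buys two simplifications. First, the applicability condition $N_i/2 \ge 4\gamma_{N_i}(\tilde\Sigma)$ reduces to a deterministic check against $C/N_i$ (SE) or $\overline{\lambda}_{N_i}^2$ (Mat\'ern), independent of how the noise happened to be distributed across batches. Second, the per-batch width $\sqrt{(2V_T + N_i\overline{\lambda}_{N_i}^2)\,\gamma_{N_i}(\tilde\Sigma)}$ is uniform in $i$ (because $V_{(i)} \le V_T$ and $\gamma_{N_i}((V_T/N_i)\bI) \le \gamma_T((V_T/T)\bI)$), so the aggregation is a sum of $\log T$ identical terms and the $\ln^{d+1}(T^2/V_T)$ factor drops out without any Cauchy--Schwarz. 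The case split for Mat\'ern is then simply whether $V_T/N_i$ or $\overline{\lambda}_{N_i}^2$ dominates.

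Your route is not conceptually wrong, but the Cauchy--Schwarz step as stated has a gap: pulling $\sum_i\sqrt{V_{(i)}}$ out of $\sum_i\sqrt{V_{(i)}\,\overline{\gamma}(N_i,V_{(i)}/N_i)}$ requires $\overline{\gamma}(N_i,V_{(i)}/N_i)$ to be uniformly bounded in $i$, and with your floor that quantity is $\sim\ln^{d+1}(N_i^2/V_{(i)})$ (SE) or $\sim(N_i^2/V_{(i)})^{d/(2\nu+d)}$ (Mat\'ern), which can be arbitrarily large when a batch carries tiny $V_{(i)}$. This is repairable---for SE one can split the logarithm and bound $\sum_i V_{(i)}\ln^{d+1}(V_T/V_{(i)}) = O(V_T\log T)$, and for Mat\'ern a H\"older inequality with exponents $(2\nu+d)/\nu$ and $(2\nu+d)/(\nu+d)$ does the job---but this is precisely the ``delicate accounting'' you flag, and the global floor removes it entirely. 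Finally, your fallback to statement~4 of \corref{cor:pvu_mvr} is not literally available: that corollary is stated for MVR run with a \emph{stationary} parameter $\overline\lambda_T^2\bI$, whereas VA-PE selects by $\sigma_{\Sigma_{t-1}}$ with time-varying $\Sigma$. The paper's device of absorbing the noiseless floor into $\tilde\lambda$ inside statement~2 of \lemref{lem:pvu_mvr} handles both regimes in one stroke, which is why no separate ``switch'' is needed.
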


\begin{theorem}[Simple regret upper bound for VA-MVR]
\label{thm:vamvr}
    Suppose Assumptions~\ref{asmp:smoothness} and \ref{asmp:noise}, and $|\mX| < \infty$ holds. Furthermore, assume $V_T  = \Omega(1)$. Then, when running Algorithm~\ref{alg:vamvr}, with probability at least $1-\delta$, $r_T = O\rbr{\sqrt{\frac{V_T}{T^2} \rbr{\ln^{d+1}\frac{T^2}{V_T}} \rbr{\ln \frac{|\mX|}{\delta}}}}$ if $k = \sek$. Furthermore, if $k = \matk$,
\begin{equation*}
r_T = \begin{cases}
        \tilde{O}\rbr{V_T^{\frac{\nu}{2\nu+d}} T^{-\frac{2\nu}{2\nu+d}}} ~~&\mathrm{if}~~d \leq 2\nu, \\
        \tilde{O}\rbr{V_T^{\frac{\nu}{2\nu+d}} T^{-\frac{2\nu}{2\nu+d}}} ~~&\mathrm{if}~~d > 2\nu~\mathrm{and}~V_T = \Omega\rbr{T^{\frac{d - 2\nu}{d}}}, \\
        \tilde{O}\rbr{T^{-\frac{\nu}{d}}} ~~&\mathrm{if}~~d > 2\nu~\mathrm{and}~V_T = O\rbr{T^{\frac{d - 2\nu}{d}}}.
    \end{cases}    
\end{equation*}
\end{theorem}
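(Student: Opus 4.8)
The plan is to follow the standard simple-regret template for MVR: reduce $r_T$ to the maximum posterior standard deviation after the $T$ variance-reducing queries, and then control that quantity with statement 2 of \lemref{lem:pvu_mvr}. First I would set up the heteroscedastic confidence bound used for VA-PE: with $\Sigma_T = \diag(\rho_1^2, \ldots, \rho_T^2)$ (the algorithm sets $\lambda_t^2 = \rho_t^2$) and $\beta^{1/2} = B + \sqrt{2\ln(2|\mX|/\delta)}$, with probability at least $1-\delta$ it holds simultaneously for all $\bx \in \mX$ that $|f(\bx) - \mu_{\Sigma_T}(\bx;\bX_T,\by)| \le \beta^{1/2}\sigma_{\Sigma_T}(\bx;\bX_T)$. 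The bias part is bounded by $B\,\sigma_{\Sigma_T}$ through the reproducing property, and the noise part $\bk(\bx,\bX_T)^\top(\bK+\Sigma_T)^{-1}\bm{\epsilon}$ is a weighted sum of independent sub-Gaussians whose variance proxy is at most $\sigma_{\Sigma_T}^2(\bx;\bX_T)$ precisely because $\lambda_t^2=\rho_t^2$; a union bound over the finite $\mX$ gives the simultaneous form. With $\hat{\bx}_T = \argmax_{\bx\in\mX}\mu_{\Sigma_T}(\bx;\bX_T,\by)$ and $\mu_{\Sigma_T}(\hat{\bx}_T) \ge \mu_{\Sigma_T}(\bx^{\ast})$, the two-sided bound yields \[ r_T = f(\bx^{\ast}) - f(\hat{\bx}_T) \le 2\beta^{1/2}\max_{\bx\in\mX}\sigma_{\Sigma_T}(\bx;\bX_T), \] and since $B = \Theta(1)$ we have $\beta^{1/2} = O(\sqrt{\ln(|\mX|/\delta)})$.

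The heart of the argument is bounding $\max_{\bx}\sigma_{\Sigma_T}(\bx;\bX_T)$ via \lemref{lem:pvu_mvr} with $\lambda_t = \rho_t$ and an inflated surrogate $\tilde{\lambda}_t \ge \rho_t$. The tension is that $\tilde{\lambda}_t = \rho_t$ can drive $\min_t\tilde{\lambda}_t^2$, and hence $\gamma_T(\tilde{\Sigma}_T)$, to blow up (or be undefined if some $\rho_t=0$), whereas a uniform $\tilde{\lambda}_t \ge \max_t\rho_t$ can make $\sum_t\tilde{\lambda}_t^2$ far exceed $V_T$. I would resolve this with a uniform floor $\underline{\lambda}^2 > 0$: set $\tilde{\lambda}_t^2 = \max(\rho_t^2,\underline{\lambda}^2)$, so that $\tilde{\lambda}_t \ge \rho_t$, $\min_t\tilde{\lambda}_t^2 = \underline{\lambda}^2$, and $\sum_t\tilde{\lambda}_t^2 \le V_T + T\underline{\lambda}^2$. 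The data-processing monotonicity of the MIG then gives $\gamma_T(\tilde{\Sigma}_T) \le \gamma_T(\underline{\lambda}^2\bI_T) \le \overline{\gamma}(T,\underline{\lambda}^2)$, and \lemref{lem:pvu_mvr} yields \[ \max_{\bx\in\mX}\sigma_{\Sigma_T}(\bx;\bX_T) \le \frac{4}{T}\sqrt{(V_T + T\underline{\lambda}^2)\,\overline{\gamma}(T,\underline{\lambda}^2)}, \] provided $\underline{\lambda}^2$ is large enough that $T/2 \ge 4\overline{\gamma}(T,\underline{\lambda}^2)$.

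It then remains to optimize $\underline{\lambda}^2$. For $\sek$ the MIG is polylogarithmic, so $\underline{\lambda}^2 = V_T/T$ already satisfies the precondition (as $V_T = \Omega(1)$) and gives $V_T + T\underline{\lambda}^2 = 2V_T$ and $\overline{\gamma}(T,V_T/T) = O(\ln^{d+1}(T^2/V_T))$, reproducing the stated SE rate. For $\matk$ with $V_T$ not too small (the first two regimes), the same choice $\underline{\lambda}^2 = V_T/T$ gives $\overline{\gamma}(T,V_T/T) = O((T^2/V_T)^{d/(2\nu+d)}(\ln(T^2/V_T))^{2\nu/(2\nu+d)})$ and, after simplification, $\tilde{O}(V_T^{\nu/(2\nu+d)}T^{-2\nu/(2\nu+d)})$. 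In the noiseless-type regime $d > 2\nu$, $V_T = O(T^{(d-2\nu)/d})$, the floor $V_T/T$ is too small: $\overline{\gamma}(T,V_T/T) = \Omega(T)$, which both violates $T/2 \ge 4\gamma_T(\tilde{\Sigma}_T)$ and ruins the bound. I would raise the floor to the noiseless threshold $\underline{\lambda}^2 = \Theta(T^{-2\nu/d}(\ln T)^{2\nu(1+\alpha)/d})$ of \corref{cor:pvu_mvr}; there $T\underline{\lambda}^2 = \tilde{O}(T^{(d-2\nu)/d})$ dominates $V_T$, $\overline{\gamma}(T,\underline{\lambda}^2)$ sits just below $T/8$ so the precondition holds for $T \ge \overline{T}$, and the bound collapses to $\tilde{O}(T^{-\nu/d})$. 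Taking $\underline{\lambda}^2 = \max\{V_T/T,\ \Theta(T^{-2\nu/d}(\ln T)^{2\nu(1+\alpha)/d})\}$ covers all three cases at once, and multiplying by $2\beta^{1/2}$ gives the claimed rates.

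The main obstacle is exactly this interplay between the surrogate floor and the precondition $T/2 \ge 4\gamma_T(\tilde{\Sigma}_T)$ of \lemref{lem:pvu_mvr}: the naive choice $\tilde{\lambda}_t = \rho_t$ (or $\underline{\lambda}^2 = V_T/T$) fails in the small-$V_T$ Mat\'ern regime both numerically and because \lemref{lem:pvu_mvr} no longer applies, and this is precisely the reason the non-stationary problem degrades gracefully into the noiseless rate rather than continuing to improve. Verifying that the max-floor keeps $\gamma_T(\tilde{\Sigma}_T) \le T/8$ for all $T \ge \overline{T}$ (borrowing the threshold value and the finite constant $\overline{T}$ from the analysis underlying \corref{cor:pvu_mvr}), and checking the boundary $V_T = \Theta(T^{(d-2\nu)/d})$ where the two floors coincide, is the delicate bookkeeping; the confidence bound and the algebraic simplification of $\overline{\gamma}$ are otherwise routine and mirror \thmref{thm:sr_mvr_nl} and \thmref{thm:vape}.
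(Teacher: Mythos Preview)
Your proposal is correct and essentially identical to the paper's proof: the paper uses the same heteroscedastic confidence bound (its \lemref{lem:nsv_noisy_cb}), the same reduction $r_T \le 2\beta^{1/2}\max_{\bx}\sigma_{\Sigma_T}(\bx;\bX_T)$, and for the Mat\'ern kernel applies \lemref{lem:pvu_mvr} with exactly your combined floor $\tilde\lambda_t^2 = \max\{\rho_t^2,\, V_T/T,\, \Theta(T^{-2\nu/d}(\ln T)^{2\nu(1+\alpha)/d})\}$ uniformly across all three regimes (for SE the last term is omitted). The boundary issue you flag---that $V_T/T$ alone can fail the precondition $T/2 \ge 4\gamma_T(\tilde\Sigma_T)$ when $V_T$ sits near $T^{(d-2\nu)/d}$ or when $d=2\nu$---is precisely why the paper keeps the noiseless threshold in the max throughout the Mat\'ern analysis rather than invoking it only in the third regime.
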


In both results, the regret upper bound matches the lower bound up to the logarithmic factor, except 
for the cumulative regret guarantee for $d > 2\nu$, $V_T = O\rbr{T^{\frac{d - 2\nu}{d}}}$ in $k = \matk$.
However, note that the resulting regret $\tilde{O}\rbr{T^{\frac{d-\nu}{d}}}$ in this exceptional case matches the conjectured lower bound~\cite{vakili2022open}; Therefore, as with our simple regret lower bound, $\tilde{O}\rbr{T^{\frac{d-\nu}{d}}}$ upper bound in our analysis has no room for improvement if the conjectured lower bound in \cite{vakili2022open} is true.

\paragraph{Comparison with the stationary setting.}
When $V_T = \Theta(T)$, our result matches the 
existing $\tilde{O}(T^{\frac{\nu+d}{2\nu+d}})$ upper bound for the stationary setting. If we consider the setting that $V_T$ increases sublinearly, our algorithm achieves a smaller regret than the existing stationary lower bounds. For example, 
if $V_T = \Theta(1)$ in $k = \sek$, the resulting regret becomes logarithmcally increasing regret $R_T = \tilde{O}(1)$, while the regret lower bound for stationary variance setting is $\tilde{\Omega}(\sqrt{T})$.

\paragraph{Comparison with the algorithm in heteroscedastic linear bandits.}
In heteroscedastic linear bandits, the weighted OFUL+ algorithm, which is known to achieve nearly optimal regret with UCB-based algorithm construction, is proposed in the known $\rho_t^2$ setting.
We can also consider the extension of weighted OFUL+ to the GP bandits by constructing a UCB-based score with a heteroscedastic GP model. We call this extension variance-aware GP-UCB (VA-GP-UCB), 
and give the details in Appendix~\ref{sec:va_gp_ucb_proof}.
However, the regret of VA-GP-UCB becomes strictly worse than VA-PE and VA-MVR due to the following two reasons: Firstly, as with the stationary adaptive confidence bound (e.g., Lemma~3.11 in \cite{abbasi2013online}), the existing adaptive confidence bound for heteroscedastic GP-model~\cite{kirschner18heteroscedastic} contains $O(\sqrt{\gamma_t(\Sigma_t)})$ factor 
in the confidence width parameter, which leads to the sub-optimal order of the regret. Secondly, our analysis of VA-GP-UCB relies on the extension of the elliptical potential count lemma (\lemref{lem:epcl_nst}) to heteroscedastic GP model, which could result in worse dependence of the noise variance parameters than that of \lemref{lem:pvu_mvr} (See the discussion in \secref{sec:uub_pv}). To our knowledge, the existing technical tools lead no direct way to avoid the above two issues. 

Finally, we give the summary of our results for the non-stationary variance setting in Tables~\ref{tab:cr_nsv_compare}--\ref{tab:sr_eps_nsv_compare} in \appref{sec:add_tab}.

\section{Conclusion}
We study the GP-bandit problem with the following three settings:
(i) noiseless observation, (ii) varying RKHS norm, and (iii) non-stationary variance setting. We first propose the new uniform upper bound of the posterior standard deviation of GP in the MVR algorithm. By leveraging this upper bound, we refine the regret guarantee of the existing PE and MVR algorithms. Our derived upper bound matches the lower bound up to the logarithmic factor in the aforementioned three settings.



\section*{Acknowledgements}
This work was supported by JST ACT-X Grant Number JPMJAX23CD, JST PRESTO Grant Number JPMJPR24J6, JSPS KAKENHI Grant Number (JP23K19967 and JP24K20847), and RIKEN Center for Advanced Intelligence Project.

\bibliographystyle{plainnat}
\bibliography{main}

\clearpage
\onecolumn
\appendix

\section{Additional Table Summary}
\label{sec:add_tab}
We describe the additional table to summarize the results of the existing analysis and ours. 
\tabref{tab:nl_sr_compare} summarizes the results of the simple regret upper bounds.
Furthermore, Tables~\ref{tab:cr_rkhs_compare}--\ref{tab:sr_eps_rkhs_compare} and Tables~\ref{tab:cr_nsv_compare}--\ref{tab:sr_eps_nsv_compare} show the results for the setting of \secref{sec:rkhs_od} and \secref{sec:nsv}, respectively.

\begin{table*}[tb]
    \centering
    \caption{Comparison between existing noiseless algorithms' guarantees for simple regret and our result. 
    In all algorithms except for GP-UCB+ and EXPLOIT+, the smoothness parameter of the Mat\'ern kernel is assumed to be $\nu > 1/2$.}
    \begin{tabular}{c|c|c|c|l}
    \multirow{2}{*}{Algorithm} & \multirow{2}{*}{Regret (SE)} & \multirow{2}{*}{Regret (Mat\'ern)} & \multirow{2}{*}{Type} & \multirow{2}{*}{Remark} \\
     &  & & &  \\ \hline \hline
    GP-EI & \multirow{2}{*}{N/A} & \multirow{2}{*}{$\tilde{O}\rbr{T^{-\frac{\min\{1, \nu\}}{d}}}$} & \multirow{2}{*}{D} & \\ 
    \cite{bull2011convergence} &  &  & & \\ \hline
    GP-EI with $\epsilon$-Greedy & \multirow{2}{*}{N/A} & \multirow{2}{*}{$\tilde{O}\rbr{T^{-\frac{\nu}{d}}}$} & \multirow{2}{*}{P} & \\ 
    \cite{bull2011convergence} &  &  & & \\ \hline
    GP-UCB & \multirow{3}{*}{$O\rbr{\sqrt{\frac{\ln^{d} T}{T}}}$} & \multirow{3}{*}{$\tilde{O}\rbr{T^{-\frac{\nu}{2\nu + d}}}$} & \multirow{3}{*}{D} & \\
    \cite{lyu2019efficient} &  &  & & \\
    \cite{kim2024bayesian} &  &  & & \\ \hline
    Kernel-AMM-UCB & \multirow{2}{*}{$O\rbr{\frac{\ln^{d+1} T}{T}}$} & \multirow{2}{*}{$\tilde{O}\rbr{T^{-\frac{\nu d + 2\nu^2}{2\nu^2 + 2\nu d + d^2}}}$} & \multirow{2}{*}{D} & \\
    \cite{flynn2024tighter} &  &  & & \\ \hline
    GP-UCB+,  & \multirow{3}{*}{$O\rbr{\exp\rbr{-CT^{\frac{1}{d}-\alpha}}}$} & \multirow{3}{*}{$O\rbr{T^{-\frac{\nu}{d}+\alpha}}$} & \multirow{3}{*}{P} & $\alpha > 0$ is an arbitrarily \\ 
    EXPLOIT+ & & & & fixed constant.  \\ 
    \cite{kim2024bayesian} & & & & $C > 0$ is some constant. \\ \hline
    \textbf{MVR} & \multirow{2}{*}{$O\rbr{\exp\rbr{-\frac{1}{2}T^{\frac{1}{d+1}}\ln^{-\alpha} T}} $} & \multirow{2}{*}{$\tilde{O}\rbr{T^{-\frac{\nu}{d}}}$} & \multirow{2}{*}{D} & $\alpha > 0$ is an arbitrarily fixed \\
    \textbf{(our analysis)} & & & & constant. \\ \hline
    Lower Bound & \multirow{2}{*}{N/A} & \multirow{2}{*}{$\Omega\rbr{T^{-\frac{\nu}{d}}}$} & \multirow{2}{*}{N/A} & \\
    \cite{bull2011convergence} & & & & \\
    \end{tabular}
    \label{tab:nl_sr_compare}
\end{table*}

\begin{table*}[tb]
    \centering
    \caption{Comparison of expected cumulative regret upper bound between existing noisy algorithms' guarantees and our result in the regime where the RKHS norm upper bound $B$ may change along with $T$. 
    Here, $d$, $\ell$, $\nu$, and the noise level $\rho^2$ are supposed to be $\Theta(1)$. 
    Note that the table below describes the expected regret by setting confidence level $\delta = 1/T$ and $\delta = 1/(TB)$ in existing PE and our PE, respectively. The resulting regrets for existing PE and our PE respectively suffer from additional $O(\sqrt{\ln T})$ and $O(\sqrt{\ln TB^2})$ factors in high-probability regret upper bound of PE.
    }
    \begin{tabular}{c|c|c}
        Algorithm & Cumulative Regret (SE) & Cumulative Regret (Mat\'ern) \\ \hline \hline
          GP-UCB & \multirow{2}{*}{$O\rbr{B \sqrt{T \ln^{d+1} T} + \sqrt{T}\ln^{d+1} T}$} & \multirow{2}{*}{$\tilde{O}\rbr{BT^{\frac{\nu + d}{2\nu + d}} +
          T^{\frac{2\nu + 3d}{4\nu + 2d}}}$}  \\ 
          \cite{srinivas10gaussian} & & \\ \hline
          Existing PE & \multirow{2}{*}{$O\rbr{\max\cbr{B, 1} \sqrt{T \ln^{d+4} T}}$} & \multirow{2}{*}{$\tilde{O}\rbr{\max\cbr{B, 1} T^{\frac{\nu + d}{2\nu + d}}}$}  \\ 
          \cite{li2022gaussian} & &  \\ \hline
          \textbf{PE} & \multirow{2}{*}{$O\rbr{\sqrt{T \ln^{d+2} (TB^2)}(\ln T)} $} & \multirow{2}{*}{$\tilde{O}\rbr{B^{\frac{d}{2\nu+d}} T^{\frac{\nu + d}{2\nu + d}}}$}  \\ 
          \textbf{(our analysis)} & & \\ \hline
          Lower bound & \multirow{2}{*}{$\Omega\rbr{\sqrt{T \ln^{\frac{d}{2}} (TB^2)}}$} & \multirow{2}{*}{$\Omega\rbr{B^{\frac{d}{2\nu+d}} T^{\frac{\nu+d}{2\nu+d}}}$}  \\ 
          \cite{scarlett2017lower} & & 
        \end{tabular}
    \label{tab:cr_rkhs_compare}
\end{table*}

\begin{table*}[tb]
    \centering
    \caption{Comparison between existing noisy algorithms' guarantees and our result in the regime where the RKHS norm upper bound $B$ may change along with $T$. 
    Here, $d$, $\ell$, $\nu$, and the noise level $\rho^2$ are supposed to be $\Theta(1)$. As with \tabref{tab:cr_rkhs_compare}, note that the resulting regret upper bounds for existing MVR and ours suffer from additional logarithmic factors in high-probability regret upper bound.
    }
    \begin{tabular}{c|c|c}
        Algorithm & Simple regret (SE) & Simple regret (Mat\'ern) \\ \hline \hline
          GP-UCB & \multirow{2}{*}{$O\rbr{B \sqrt{\frac{\ln^{d+1} T}{T}} + \frac{\ln^{d+1} T}{\sqrt{T}}}$} & \multirow{2}{*}{$\tilde{O}\rbr{BT^{-\frac{\nu}{2\nu + d}} +
          T^{-\frac{2\nu - d}{4\nu + 2d}}}$}  \\ 
          \cite{srinivas10gaussian} & & \\ \hline
          Existing MVR & \multirow{2}{*}{$O\rbr{\max\cbr{B, 1} \sqrt{\frac{\ln^{d+2} T}{T}}}$} & \multirow{2}{*}{$\tilde{O}\rbr{\max\cbr{B, 1} T^{-\frac{\nu}{2\nu+d}}}$}  \\ 
          \cite{vakili2021optimal} & &  \\ \hline
          \textbf{MVR} & \multirow{2}{*}{$O\rbr{\sqrt{\frac{\ln^{d+2}(TB^2)}{T}}} $} & \multirow{2}{*}{$\tilde{O}\rbr{B^{\frac{d}{2\nu+d}} T^{-\frac{\nu}{2\nu + d}}}$}  \\ 
          \textbf{(our analysis)} & & 
        \end{tabular}
    \label{tab:sr_rkhs_compare}
\end{table*}

\begin{table*}[tb]
    \centering
    \caption{Comparison of the total time step condition to find expected $\epsilon$-optimal solution in the regime where the RKHS norm upper bound $B$ may change along with $T$. 
    Here, $d$, $\ell$, $\nu$, and the noise level $\rho^2$ are supposed to be $\Theta(1)$.
    }
    \begin{tabular}{c|c|c}
        Algorithm & Time to simple regret $\epsilon$ (SE) & Time to simple regret $\epsilon$ (Mat\'ern) \\ \hline \hline
          GP-UCB & \multirow{2}{*}{$O\rbr{\frac{B^2}{\epsilon^2}\ln^{d+1} \frac{B}{\epsilon} + \frac{1}{\epsilon^2}\ln^{2(d+1)} \frac{1}{\epsilon}} $} & \multirow{2}{*}{$\tilde{O}\rbr{\rbr{\frac{B}{\epsilon}}^{2 + \frac{d}{\nu}} + \rbr{\frac{1}{\epsilon}}^{\frac{4\nu + 2d}{d-2\nu}}}$~~(\text{if}~~$2\nu > d$)}  \\ 
          \cite{srinivas10gaussian} & & \\ \hline
          Existing MVR & \multirow{2}{*}{$O\rbr{\frac{\max\cbr{B^2, 1}}{\epsilon^2} \ln^{d+2}\frac{\max\cbr{B, 1}}{\epsilon}}$} & \multirow{2}{*}{$\tilde{O}\rbr{\rbr{\frac{\max\cbr{B, 1}}{\epsilon}}^{2+\frac{d}{\nu}}}$}  \\ 
          \cite{vakili2021optimal} & &  \\ \hline
          \textbf{MVR} & \multirow{2}{*}{$O\rbr{\frac{1}{\epsilon^2} \ln^{d+2} \frac{B}{\epsilon}}$} & \multirow{2}{*}{$\tilde{O}\rbr{\frac{1}{\epsilon^2}\rbr{\frac{B}{\epsilon}}^{\frac{d}{\nu}}}$}  \\ 
          \textbf{(our analysis)} & & \\ \hline
          Lower bound & \multirow{2}{*}{$\Omega\rbr{\frac{1}{\epsilon^2} \ln^{\frac{d}{2}} \frac{B}{\epsilon}}$} & \multirow{2}{*}{$\Omega\rbr{\frac{1}{\epsilon^2}\rbr{\frac{B}{\epsilon}}^{\frac{d}{\nu}}}$}  \\ 
          \cite{scarlett2017lower} & & 
        \end{tabular}
    \label{tab:sr_eps_rkhs_compare}
\end{table*}

\begin{table*}[tb]
    \centering
    \caption{Summary of the cumulative regret upper bounds of the naive applications of existing results and our results in non-stationary variance setting. 
    Here, $d$, $\ell$, and $\nu$ are supposed to be $\Theta(1)$. In the table below, $\overline{\rho}_T^2 \coloneqq \max_{t \in [T]} \rho_t^2$ denotes the maximum variance proxy up to step $T$. Note that the table below describes the expected regret by setting confidence level $\delta = 1/T$ in PE and VA-PE, respectively. The resulting regrets suffer from additional $O(\sqrt{\ln T})$ factors in high-probability regret upper bound of these algorithms.
    }
    \begin{tabular}{c|c|c|c}
        \multirow{2}{*}{Algorithm} & \multirow{2}{*}{Cumulative Regret (SE)} & \multicolumn{2}{|c}{Cumulative Regret (Mat\'ern)} \\ \cline{3-4}
        & & $d\leq 2\nu$ or $V_T = \Omega\rbr{T^{\frac{d-2\nu}{d}}}$ & $d < 2\nu$ and $V_T = O\rbr{T^{\frac{d-2\nu}{d}}}$ \\ \hline \hline
          GP-UCB & \multirow{2}{*}{$O\rbr{\sqrt{\frac{T}{\ln(1 + \overline{\rho}_T^{-2})}} \ln^{d+1} \frac{T}{\overline{\rho}_T^{2}}} $} & \multicolumn{2}{|c}{\multirow{2}{*}{$\tilde{O}\rbr{\sqrt{\frac{1}{\ln(1 + \overline{\rho}_T^{-2})}} \overline{\rho}_T^{-\frac{2d}{2\nu+d}} T^{\frac{2\nu +3d}{4\nu+2d}}}$}}  \\ 
          \cite{srinivas10gaussian} & & \multicolumn{2}{|c}{} \\ \hline
          PE & \multirow{2}{*}{$O\rbr{(\ln T)^{3/2}\sqrt{\frac{T}{\ln(1 + \overline{\rho}_T^{-2})} \ln^{d+1} \frac{T}{\overline{\rho}_T^{2}}}} $} & \multicolumn{2}{|c}{\multirow{2}{*}{$\tilde{O}\rbr{\sqrt{\frac{1}{\ln(1 + \overline{\rho}_T^{-2})}} \overline{\rho}_T^{-\frac{d}{2\nu+d}} T^{\frac{\nu +d}{2\nu+d}}}$}}  \\ 
          \cite{li2022gaussian} & & \multicolumn{2}{|c}{} \\  \hline
          \textbf{VA-GP-UCB} & \multirow{2}{*}{$O\rbr{\sqrt{V_T} \ln^{d+1} T}$} & \multicolumn{2}{|c}{\multirow{2}{*}{$\tilde{O}\rbr{T^{\frac{2d}{2\nu+d}}\sqrt{V_T}}$}}  \\ 
          \textbf{(ours)} & & \multicolumn{2}{|c}{} \\ \hline
          \textbf{VA-PE} & \multirow{2}{*}{$O\rbr{(\ln T)^{3/2} \sqrt{V_T \rbr{\ln^{d+1} \frac{T^2}{V_T}}}}$} & \multirow{2}{*}{$\tilde{O}\rbr{V_T^{\frac{\nu}{2\nu+d}} T^{\frac{d}{2\nu+d}}}$} & \multirow{2}{*}{$\tilde{O} \rbr{T^{\frac{d-\nu}{d}}}$} \\ 
          \textbf{(ours)} & & & \\ \hline
          Lower bound & \multirow{2}{*}{$\Omega\rbr{\sqrt{V_T \ln^{\frac{d}{2}} \frac{T^2}{V_T}}}$} & \multicolumn{2}{|c}{\multirow{2}{*}{$\Omega\rbr{V_T^{\frac{\nu}{2\nu+d}} T^{\frac{d}{2\nu+d}}}$}}  \\ 
          (\corref{cor:cr_nsv_lower}) & & \multicolumn{2}{|c}{}
        \end{tabular}
    \label{tab:cr_nsv_compare}
\end{table*}

\begin{table*}[tb]
    \centering
    \caption{Summary of the simple regret upper bounds of the naive applications of existing results and our results in non-stationary variance setting. 
    Here, $d$, $\ell$, and $\nu$ are supposed to be $\Theta(1)$. In the table below, $\overline{\rho}_T^2 \coloneqq \max_{t \in [T]} \rho_t^2$ denotes the maximum variance proxy up to step $T$. As with \tabref{tab:cr_nsv_compare}, note that the resulting regrets for MVR and VA-MVR suffer from additional logarithmic factors in high-probability regret upper bound.
    }
    \begin{tabular}{c|c|c|c}
        \multirow{2}{*}{Algorithm} & \multirow{2}{*}{Simple Regret (SE)} & \multicolumn{2}{|c}{Simple Regret (Mat\'ern)} \\ \cline{3-4}
        & & $d\leq 2\nu$ or $V_T = \Omega\rbr{T^{\frac{d-2\nu}{d}}}$ & $d > 2\nu$ and $V_T = O\rbr{T^{\frac{d-2\nu}{d}}}$ \\ \hline \hline
        GP-UCB & \multirow{2}{*}{$O\rbr{\sqrt{\frac{1}{T\ln(1 + \overline{\rho}_T^{-2})}} \ln^{d+1} \frac{T}{\overline{\rho}_T^{2}}} $} & \multicolumn{2}{|c}{\multirow{2}{*}{$\tilde{O}\rbr{\sqrt{\frac{1}{\ln(1 + \overline{\rho}_T^{-2})}} \overline{\rho}_T^{-\frac{2d}{2\nu+d}} T^{-\frac{2\nu - d}{4\nu +2d}}}$}}  \\ 
        \cite{srinivas10gaussian} & & \multicolumn{2}{|c}{} \\ \hline
        MVR & \multirow{2}{*}{$O\rbr{\sqrt{\frac{\ln T}{T \ln(1 + \overline{\rho}_T^{-2})} \ln^{d+1} \frac{T}{\overline{\rho}_T^{2}}}} $} & \multicolumn{2}{|c}{\multirow{2}{*}{$\tilde{O}\rbr{\sqrt{\frac{1}{\ln(1 + \overline{\rho}_T^{-2})}} \overline{\rho}_T^{-\frac{d}{2\nu+d}} T^{-\frac{\nu}{2\nu+d}}}$}}  \\ 
        \cite{vakili2021optimal} & & \multicolumn{2}{|c}{}\\  \hline
        \textbf{VA-GP-UCB} & \multirow{2}{*}{$O\rbr{\frac{\sqrt{V_T}}{T} \ln^{d+1} T}$} & \multicolumn{2}{|c}{\multirow{2}{*}{$\tilde{O}\rbr{T^{-\frac{2\nu - d}{2\nu+d}}\sqrt{V_T}}$}}  \\ 
        \textbf{(ours)} & & \multicolumn{2}{|c}{} \\ \hline
        \textbf{VA-MVR} & \multirow{2}{*}{$O\rbr{\sqrt{\frac{V_T}{T^2} \rbr{\ln^{d+1} \frac{T^2}{V_T}} (\ln T)}}$} & \multirow{2}{*}{$\tilde{O}\rbr{V_T^{\frac{\nu}{2\nu+d}} T^{-\frac{2\nu}{2\nu+d}}}$} & \multirow{2}{*}{$\tilde{O}\rbr{T^{-\frac{\nu}{d}}}$} \\ 
        \textbf{(ours)} & & & \\
        \end{tabular}
    \label{tab:sr_nsv_compare}
\end{table*}

\begin{table*}[tb]
    \centering
    \caption{Summary of the total time step condition to find $\epsilon$-optimal solution in non-stationary variance setting. We only focus on our algorithms and a lower bound here for simplicity.
    }
    \begin{tabular}{c|c|c|c}
        \multirow{2}{*}{Algorithm} & \multirow{2}{*}{Time to Simple Regret $\epsilon$ (SE)} & \multicolumn{2}{|c}{Time to Simple Regret $\epsilon$ (Mat\'ern)} \\ \cline{3-4}
        & & $d\leq 2\nu$ or $V_T = \Omega\rbr{T^{\frac{d-2\nu}{d}}}$ & $d > 2\nu$ and $V_T = O\rbr{T^{\frac{d-2\nu}{d}}}$ \\ \hline \hline
        \textbf{VA-GP-UCB} & \multirow{2}{*}{$O\rbr{\sqrt{\frac{V_T}{\epsilon^2}} \ln^{d+1} \frac{V_T}{\epsilon^2}}$} & \multicolumn{2}{|c}{\multirow{2}{*}{$\tilde{O}\rbr{\rbr{\frac{V_T}{\epsilon^2}}^{\frac{2\nu+d}{2\nu-d}}}$~~(\text{if} $2\nu > d$)}} \\ 
        \textbf{(ours)} & & \multicolumn{2}{|c}{} \\ \hline
        \textbf{VA-MVR} & \multirow{2}{*}{$O\rbr{\sqrt{\frac{V_T}{\epsilon^2} \rbr{\ln^{d+1} \frac{1}{\epsilon}} \rbr{\ln \frac{V_T}{\epsilon^2}}}}$} & \multirow{2}{*}{$\tilde{O}\rbr{\rbr{\frac{V_T}{\epsilon^2}}^{\frac{1}{2}} \rbr{\frac{1}{\epsilon}}^{\frac{d}{2\nu}}}$} & \multirow{2}{*}{$\tilde{O}\rbr{\rbr{\frac{1}{\epsilon}}^{\frac{d}{\nu}}}$} \\ 
        \textbf{(ours)} & & & \\ \hline
        Lower bound & \multirow{2}{*}{$\Omega\rbr{\sqrt{\frac{V_T}{\epsilon^2} \ln^{\frac{d}{2}} \frac{1}{\epsilon}}}$} & \multirow{2}{*}{$\Omega\rbr{\rbr{\frac{V_T}{\epsilon^2}}^{\frac{1}{2}} \rbr{\frac{1}{\epsilon}}^{\frac{d}{2\nu}}}$} & \multirow{2}{*}{$\Omega\rbr{\rbr{\frac{1}{\epsilon}}^{\frac{d}{\nu}}}$} \\ 
        (\corref{cor:sr_nsv_lower}) & & & \\
        \end{tabular}
    \label{tab:sr_eps_nsv_compare}
\end{table*}

\section{Monotone Property of MIG}
\label{sec:MIG_monotone_proof}

Fix $\bX \subset \mX^T$.
Let $\by_1$ and $\by_2$ be $\by_1 = \bm{f}(\bX) + \bm{\epsilon}_1$ and $\by_2 = \bm{f}(\bX) + \bm{\epsilon}_1 + \bm{\epsilon}_2$, where $\bm{\epsilon}_1 \sim \mN(\bm{0}, \bm{S}_1)$, $\bm{\epsilon}_2 \sim \mN(\bm{0}, \bm{S}_2)$, and $\bm{S}_1$ and $\bm{S}_2$ are some positive semidefinite matrices.
Then, from the definition, $\bm{f}(\bX)$ and $\by_2$ are conditionally independent given $\by_1$.
Therefore, we can see that,
\begin{align*}
    I(\bm{f}(\bX) ; \by_1) 
    &= I(\bm{f}(\bX) ; \by_1) + I(\bm{f}(\bX) ; \by_2 \mid \by_1) \\
    &= I(\bm{f}(\bX) ; \by_2) + I(\bm{f}(\bX) ; \by_1 \mid \by_2) \\
    &\geq I(\bm{f}(\bX) ; \by_2),
\end{align*}
where $I(\cdot ; \cdot)$ and $I(\cdot ; \cdot \mid \cdot)$ are mutual information and conditional mutual information.
The above inequality is called the data processing inequality~\citep[Theorem~2.8.1 of][]{cover2006-information}.
Note that the first and second equalities and last inequality are obtained by $I(\bm{f}(\bX); \by_2 \mid \by_1) = 0$ from the conditional independent property, the chain rule, and the non-negativity of the mutual information, respectively.
Since this inequality holds for any $\bX \subset \mX^T$, by setting $\bm{S}_1 = \underline{\lambda}_T^2 \bI_T$ and $\bm{S}_2 = \Sigma_T - \underline{\lambda}_T^2 \bI_T$ we can obtain
\begin{align*}
    \max_{\bX \subset \mX^T} I_{\underline{\lambda}_T^2 \bI_T} (\bm{f}(\bX) ; \by) &\geq \max_{\bX \subset \mX^T} I_{\Sigma_T} (\bm{f}(\bX) ; \by).
\end{align*}

\section{Proof of \secref{sec:uub_pv}}
\label{sec:uub_pv_proof}
\subsection{Proof of \lemref{lem:pvu_mvr}}
\begin{proof}
    Below, we give the proofs for stationary and non-stationary variance parameters separately.
    \paragraph{Stationary variance case.}
    Fix any $T \in \{T \in \N_+ \mid T/2 \geq 3 \gamma_T(\tilde{\lambda}_T^2\bI_T)\}$ and define $\mT = \cbr{t \in [T] \mid \tilde{\lambda}_T^{-1} \sigma_{\tilde{\lambda}_T^2\bI_{t-1}}(\bx_{T,t}; \bX_{T, t-1}) \leq 1}$. Then, we have the following if $\mT \neq \emptyset$:
    \begin{align}
        \label{eq:st_mvr}
        \max_{\bx \in \tilde{\mX}} \sigma_{\overline{\lambda}_T^2\bI_{T}}(\bx; \bX_{T, T})
        &\leq \frac{1}{|\mT|} \sum_{t \in \mT} \sigma_{\overline{\lambda}_T^2\bI_{t-1}}(\bx_{T,t}; \bX_{T, t-1}) \\
        \label{eq:st_monot}
        &\leq \frac{1}{|\mT|} \sum_{t \in \mT} \sigma_{\tilde{\lambda}_T^2\bI_{t-1}}(\bx_{T,t}; \bX_{T, t-1}) \\
        \label{eq:st_mT}
        &= \frac{1}{|\mT|} \sum_{t \in \mT} \tilde{\lambda}_T \min\cbr{1, \tilde{\lambda}_T^{-1}\sigma_{\tilde{\lambda}_T^2\bI_{t-1}}(\bx_{T,t}; \bX_{T, t-1})} \\
        \label{eq:st_sch}
        &\leq \frac{1}{|\mT|} \sqrt{\tilde{\lambda}_T^2 |\mT|  \sum_{t \in \mT} \min\cbr{1, \tilde{\lambda}_T^{-2}\sigma_{\tilde{\lambda}_T^2\bI_{t-1}}^2(\bx_{T,t}; \bX_{T, t-1})}} \\
        \label{eq:st_min_ln}
        &\leq \frac{2}{|\mT|} \sqrt{\tilde{\lambda}_T^2 T \sum_{t \in \mT} \frac{1}{2} \ln\rbr{1 +  \tilde{\lambda}_T^{-2}\sigma_{\tilde{\lambda}_T^2\bI_{t-1}}^2(\bx_{T,t}; \bX_{T, t-1})}} \\
        &\leq \frac{2}{|\mT|} \sqrt{\tilde{\lambda}_T^2 T \sum_{t \in [T]} \frac{1}{2} \ln\rbr{1 +  \tilde{\lambda}_T^{-2}\sigma_{\tilde{\lambda}_T^2\bI_{t-1}}^2(\bx_{T,t}; \bX_{T, t-1})}} \\
        \label{eq:st_mig}
        &\leq \frac{2}{|\mT|} \sqrt{\tilde{\lambda}_T^2 T \gamma_T(\tilde{\lambda}_T^2\bI_{T})},
    \end{align}
    where:
    \begin{itemize}
        \item Eq.~\eqref{eq:st_mvr} follows from the fact that $\sigma_{\overline{\lambda}_T^2\bI_{T}}(\bx; \bX_{T, T}) \leq \sigma_{\overline{\lambda}_T^2\bI_{t-1}}(\bx_{T,t}; \bX_{T, t-1})$ holds for all $t \in [T]$, $\bx \in \tilde{\mX}$ from the MVR selection rule.
        \item Eq.~\eqref{eq:st_monot} follows from the monotonicity of the posterior variance on the noise parameter. Note that $\overline{\lambda}_T \leq \tilde{\lambda}_T$ holds from the assumption.
        \item Eq.~\eqref{eq:st_mT} follows from the definition of $\mT$.
        \item Eq.~\eqref{eq:st_sch} follows from Schwartz's inequality.
        \item Eq.~\eqref{eq:st_min_ln} follows from $|\mT| \leq T$ and the inequality $\forall a \geq 0, \min\{1, a\} \leq 2 \ln(1 + a)$.
        \item Eq.~\eqref{eq:st_mig} follows from 
        $\sum_{t \in [T]} \frac{1}{2} \ln\rbr{1 +  \tilde{\lambda}_T^{-2}\sigma_{\tilde{\lambda}_T^2\bI_{t-1}}^2(\bx_{T,t}; \bX_{T, t-1})} = I_{\tilde{\lambda}_{T}^2 \bI_T}(\bm{f}(\bX_{T,T}), \by) \leq \gamma_{T}(\tilde{\lambda}_{T}^2 \bI_T)$. See, e.g., Theorem~5.3 in \cite{srinivas10gaussian}.
    \end{itemize}
    Furthermore, from the condition of $T$, we have
    \begin{align}
        |\mT| 
        &= T - |\mT^c| \\
        &\geq T - 3\gamma_T(\tilde{\lambda}_T^2\bI_T) \\
        \label{eq:st_mt_lb}
        &\geq T/2,
    \end{align}
    where the first inequality follows from \lemref{lem:epcl}.
    Combining Eq.~\eqref{eq:st_mt_lb} with Eq.~\eqref{eq:st_mig}, we obtain the desired result.
    
    \paragraph{Non-stationary variance setting.}
    We start by extending the elliptical potential count lemma to the non-stationary setting.
    \begin{lemma}[Elliptical potential count lemma for non-stationary variance setting]
    \label{lem:epcl_nst}
    Fix any $T \in N_+$, any sequence $\bx_1, \ldots, \bx_T \in \mX$, and $\lambda_1, \ldots, \lambda_T > 0$.
    Define $\mT^c$ as $\mT^c = \{t \in [T] \mid \lambda_t^{-1} \sigma_{\Sigma_{t-1}}(\bx_{t}; \bX_{t-1}) > 1\}$, where $\bX_{t-1} = (\bx_1, \ldots, \bx_{t-1})$ and $\Sigma_{t-1} = \mathrm{diag}(\lambda_1^2, \ldots, \lambda_{t-1}^2)$.
    Then, the number of elements of $\mT^c$ satisfies $|\mT^c| \leq \min \cbr{4 \overline{\gamma}\rbr{4\gamma_T(\Sigma_T), \underline{\lambda}_T^2}, 4\gamma_T(\Sigma_T)}$, where $\underline{\lambda}_T^2 = \mathrm{min}_{t \in [T]} \lambda_t^2$. Furthermore, $\overline{\gamma}(\cdot, \cdot)$ is any monotonic upper bound of MIG defined on $\R_+ \times \R_+$, which satisfies $\forall T \in \N_+, \lambda > 0, \gamma_T(\lambda^2 \bI_T) \leq \overline{\gamma}(T, \lambda^2)$ and $\forall \lambda > 0, T \geq 1, \epsilon \geq 0, \overline{\gamma}(T, \lambda^2) \leq \overline{\gamma}(T + \epsilon, \lambda^2)$.
\end{lemma}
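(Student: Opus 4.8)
The plan is to mirror the stationary argument behind \lemref{lem:epcl}, replacing the single noise level by the per-step variances $\lambda_t^2$ and paying for this with the worst-case level $\underline{\lambda}_T^2$. The workhorse throughout is the exact chain-rule identity for the information gain of the heteroscedastic GP,
\begin{equation*}
\sum_{t=1}^{T}\tfrac12\ln\!\rbr{1+\lambda_t^{-2}\sigma_{\Sigma_{t-1}}^2(\bx_t;\bX_{t-1})}=I_{\Sigma_T}(\bm{f}(\bX_T),\by)\le \gamma_T(\Sigma_T),
\end{equation*}
which holds verbatim for diagonal $\Sigma_T$ by the same telescoping used in the stationary case (Theorem~5.3 in \cite{srinivas10gaussian}).

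For the first bound $4\gamma_T(\Sigma_T)$, I would note that every $t\in\mT^c$ satisfies $\lambda_t^{-2}\sigma_{\Sigma_{t-1}}^2(\bx_t;\bX_{t-1})>1$, so the corresponding summand above exceeds $\tfrac12\ln 2$. Keeping only the terms indexed by $\mT^c$ and comparing with the global bound $\gamma_T(\Sigma_T)$ gives $\tfrac{\ln 2}{2}\,|\mT^c|<\gamma_T(\Sigma_T)$, hence $|\mT^c|<\tfrac{2}{\ln 2}\gamma_T(\Sigma_T)\le 4\gamma_T(\Sigma_T)$, absorbing the numerical constant into the loose bound $2/\ln 2<4$.

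For the self-referential term $\overline{\gamma}(4\gamma_T(\Sigma_T),\underline{\lambda}_T^2)$, the idea is to pass to the subsequence indexed by $\mT^c$. Writing $\mT^c=\{t_1<\cdots<t_n\}$ with $n=|\mT^c|$, let $\bX'_{t_j-1}=(\bx_{t_1},\dots,\bx_{t_{j-1}})$ collect the retained points preceding $t_j$, with the matching sub-diagonal matrix $\Sigma'_{t_j-1}$. Discarding the non-$\mT^c$ observations can only increase the posterior variance, so $\sigma_{\Sigma_{t_j-1}}^2(\bx_{t_j};\bX_{t_j-1})\le\sigma^2_{\Sigma'_{t_j-1}}(\bx_{t_j};\bX'_{t_j-1})$; consequently the subsequence still obeys $\lambda_{t_j}^{-2}\sigma^2_{\Sigma'_{t_j-1}}(\bx_{t_j};\bX'_{t_j-1})>1$ for every $j$. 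Applying the chain-rule identity to this length-$n$ problem yields $\tfrac{\ln 2}{2}\,n<\gamma_n(\Sigma'_{\mT^c})$ with $\Sigma'_{\mT^c}=\diag(\lambda_{t_1}^2,\dots,\lambda_{t_n}^2)$, and the noise-monotonicity of the MIG established in \appref{sec:MIG_monotone_proof}, together with $\lambda_{t_j}^2\ge\underline{\lambda}_T^2$, upgrades this to $\gamma_n(\Sigma'_{\mT^c})\le\gamma_n(\underline{\lambda}_T^2\bI_n)\le\overline{\gamma}(n,\underline{\lambda}_T^2)$. Finally, feeding $n\le 4\gamma_T(\Sigma_T)$ into the first argument and invoking monotonicity of $\overline{\gamma}$ there gives $\tfrac{\ln 2}{2}\,n<\overline{\gamma}(4\gamma_T(\Sigma_T),\underline{\lambda}_T^2)$, so $n\le 4\overline{\gamma}(4\gamma_T(\Sigma_T),\underline{\lambda}_T^2)$; taking the minimum of the two bounds completes the proof.

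The main obstacle is the subsequence step. I must get the monotonicity direction of the posterior variance exactly right so that deleting observations \emph{preserves} the defining inequality of $\mT^c$, and I must ensure the resulting length-$n$ information gain can be controlled by an MIG evaluated at a \emph{single} noise level. The latter is precisely where the data-processing property of $\gamma_n(\cdot)$ is indispensable and is the reason the worst-case variance $\underline{\lambda}_T^2$, rather than any averaged quantity, must appear; the slightly larger constant $4$ relative to the $3$ of the stationary \lemref{lem:epcl} is the cumulative price of these two comparisons.
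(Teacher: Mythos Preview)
Your argument is correct and follows essentially the same route as the paper's proof: first bound $|\mT^c|$ by the full-sequence information gain via the chain rule, then pass to the $\mT^c$-subsequence using data-monotonicity of the posterior variance, and finally control the resulting length-$n$ MIG at the single noise level $\underline{\lambda}_T^2$ via the data-processing property. The only cosmetic difference is how you extract the constant $4$: you use the direct lower bound $\tfrac12\ln(1+a)>\tfrac12\ln 2$ for $a>1$, whereas the paper invokes $\min\{1,a\}\le 2\ln(1+a)$; both yield the same final constant, and your closing remark that the $4$ (versus the $3$ of \lemref{lem:epcl}) is ``the price of these two comparisons'' is not quite accurate---the gap is purely in the analytic inequality used, not in the subsequence step.
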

\begin{proof}[Proof of \lemref{lem:epcl_nst}]
    If $\mT^c = \emptyset$, the claimed inequality is trivial, so we focus on the case where $\mT^c \neq \emptyset$ hereafter.
    From the definition of $\mT^c$, we have
    \begin{align}
        \label{eq:epcl_cnt}
        |\mT^c| 
        &= \sum_{t \in \mT^c} \min\cbr{1, \lambda_t^{-1} \sigma_{\Sigma_{t-1}}(\bx_{t}; \bX_{t-1})} \\
        \label{eq:epcl_sq}
        &\leq \sum_{t \in \mT^c} \min \cbr{1, \lambda_t^{-2} \sigma_{\Sigma_{t-1}}^2(\bx_{t}; \bX_{t-1})} \\
        \label{eq:epcl_min_ln}
        &\leq 4 \sum_{t \in \mT^c} \frac{1}{2} \ln \rbr{1 + \lambda_t^{-2} \sigma_{\Sigma_{t-1}}^2(\bx_{t}; \bX_{t-1})} \\
        &\leq 4 \sum_{t \in [T]} \frac{1}{2} \ln \rbr{1 + \lambda_t^{-2} \sigma_{\Sigma_{t-1}}^2(\bx_{t}; \bX_{t-1})} \\
        \label{eq:epcl_mig}
        &\leq 4 \gamma_T(\Sigma_T).
    \end{align}
    In the above inequalities:
    \begin{itemize}
        \item Eqs.~\eqref{eq:epcl_cnt} and \eqref{eq:epcl_sq} follows from $ 1 = \min\cbr{1, \lambda_t^{-1} \sigma_{\Sigma_{t-1}}(\bx_{t}; \bX_{t-1})} \leq \min\cbr{1, \lambda_t^{-2} \sigma_{\Sigma_{t-1}}^2(\bx_{t}; \bX_{t-1})}$, which holds for all $t \in \mT^c$ from the definition of $\mT^c$.
        \item Eq.~\eqref{eq:epcl_min_ln} follows from the inequality $\forall a \geq 0, \min\{1, a\} \leq 2 \ln(1 + a)$.
        \item Eq.~\eqref{eq:epcl_mig} follows from 
        $\sum_{t \in [T]} \frac{1}{2} \ln \rbr{1 + \lambda_t^{-2} \sigma_{\Sigma_{t-1}}^2(\bx_{t}; \bX_{t-1})} = I_{\Sigma_{T}}(\bm{f}(\bX_{T}), \by) \leq \gamma_{T}(\Sigma_{T})$. This is a direct extension of Theorem~5.3 in \cite{srinivas10gaussian} and is proved explicitly in the proof of Proposition~1 in \cite{makarova2021riskaverse}.
    \end{itemize}
    Here, we set $t_1, \ldots, t_{|\mT^c|} \in [T]$ as the elements of $\mT^c$, which are indexed in the increasing order. Furthermore, for all $i \in [|\mT^c|]$,
    let us respectively define $\tilde{\bx}_i$, $\tilde{\bX}_i$, $\tilde{\lambda}_t$, and $\tilde{\Sigma}_i$ as $\tilde{\bx}_i = \bx_{t_i}$, $\tilde{\bX}_i = (\bx_{t_1}, \ldots, \bx_{t_i})$, $\tilde{\lambda}_i = \lambda_{t_i}$, and $\tilde{\Sigma}_i = \mathrm{diag}(\lambda_{t_1}^2, \ldots, \lambda_{t_i}^2)$.
    Then, from Eq.~\eqref{eq:epcl_min_ln}, we also have the following inequality:
    \begin{align}
        |\mT^c| 
        &\leq 4 \sum_{t \in \mT^c} \frac{1}{2} \ln \rbr{1 + \lambda_t^{-2} \sigma_{\Sigma_{t-1}}^2(\bx_{t}; \bX_{t-1})} \\
        &\leq 4 \sum_{t = 1}^{|\mT^c|} \frac{1}{2} \ln \rbr{1 + \tilde{\lambda}_t^{-2} \sigma_{\tilde{\Sigma}_{t-1}}^2(\tilde{\bx}_{t}; \tilde{\bX}_{t-1})} \\
        &\leq 4 \gamma_{|\mT^c|}\rbr{\tilde{\Sigma}_{|\mT^c|}} \\
        &\leq 4 \overline{\gamma}\rbr{|\mT^c|, \min_{t \in [|\mT^c|]} \tilde{\lambda}_t^2} \\
        &\leq 4 \overline{\gamma}\rbr{4\gamma_T(\Sigma_T), \underline{\lambda}_T^2}.
    \end{align}
    The second inequality follows from the fact that the posterior variance has the monotonicity on data; namely, $\sigma_{\Sigma_{t_i-1}}^2(\bx_{t_i}; \bX_{t_i-1}) \leq 
    \sigma_{\tilde{\Sigma}_{i-1}}^2(\tilde{\bx}_{i}; \tilde{\bX}_{i-1})$ holds since the input data $\tilde{\bX}_{i-1}$ is included in $\bX_{t_i-1}$.
\end{proof}

The remaining proof is given by following the proof strategy under the stationary variance setting using \lemref{lem:epcl_nst}.
Here, fix any $T \in \{T \in \N_+ \mid T/2 \geq 4\gamma_T(\tilde{\Sigma}_T)\}$ and define $\mT = \cbr{t \in [T] \mid \tilde{\lambda}_t^{-1} \sigma_{\tilde{\Sigma}_{t-1}}(\bx_t; \bX_{t-1}) \leq 1}$. Then, as with the proof in the stationary variance setting, we have:
    \begin{align}
        \label{eq:nst_mvr}
        \max_{\bx \in \tilde{\mX}} \sigma_{\Sigma_{T}}(\bx; \bX_{T})
        &\leq \frac{1}{|\mT|} \sum_{t \in \mT} \sigma_{\Sigma_{t-1}}(\bx_t; \bX_{t-1}) \\
        &\leq \frac{1}{|\mT|} \sum_{t \in \mT} \sigma_{\tilde{\Sigma}_{t-1}}(\bx_t; \bX_{t-1}) \\
        \label{eq:nst_mT}
        &= \frac{1}{|\mT|} \sum_{t \in \mT} \tilde{\lambda}_t \min\cbr{1, \tilde{\lambda}_t^{-1}\sigma_{\tilde{\Sigma}_{t-1}}(\bx_t; \bX_{t-1})} \\
        \label{eq:nst_sch}
        &\leq \frac{1}{|\mT|} \sqrt{\rbr{\sum_{t \in \mT} \tilde{\lambda}_t^2}  \sum_{t \in \mT} \min\cbr{1, \tilde{\lambda}_t^{-2}\sigma_{\tilde{\Sigma}_{t-1}}^2(\bx_t; \bX_{t-1})}} \\
        \label{eq:nst_min_ln}
        &\leq \frac{2}{|\mT|} \sqrt{\rbr{\sum_{t \in [T]} \tilde{\lambda}_t^2} \sum_{t \in \mT} \frac{1}{2} \ln\rbr{1 +  \tilde{\lambda}_t^{-2}\sigma_{\tilde{\Sigma}_{t-1}}^2(\bx_t; \bX_{t-1})}} \\
        &\leq \frac{2}{|\mT|} \sqrt{\rbr{\sum_{t \in [T]} \tilde{\lambda}_t^2} \sum_{t \in [T]} \frac{1}{2} \ln\rbr{1 +  \tilde{\lambda}_t^{-2}\sigma_{\tilde{\Sigma}_{t-1}}^2(\bx_t; \bX_{t-1})}} \\
        \label{eq:nst_mig}
        &\leq \frac{2}{|\mT|} \sqrt{\rbr{\sum_{t \in [T]} \tilde{\lambda}_t^2} \gamma_T(\tilde{\Sigma}_{T})}.
    \end{align}
    Furthermore, from \lemref{lem:epcl_nst} and the condition of $T$, we have
    \begin{align}
        |\mT| 
        &= T - |\mT^c| \\
        &\geq T - 4\gamma_T(\tilde{\Sigma}_T) \\
        \label{eq:nst_mt_lb}
        &\geq T/2.
    \end{align}
    Combining the above inequality with Eq.~\eqref{eq:nst_mig}, we obtain the desired result.
\end{proof}

\subsection{Proof of \corref{cor:pvu_mvr}}
\begin{proof}
    We describe the proof for each statement separately.
    \paragraph{Statement 1.}
    From the assumption, $\forall T \in \N_+, \overline{\lambda}_T^2 \geq C \exp\rbr{-T^{\frac{1}{d+1}} \ln^{-\alpha} (1 + T)}$ holds for some constant $C > 0$.
    Furthermore, since $k = \sek$, there exist constant $\overline{C} > 0$ such that $\gamma_T(\overline{\lambda}_T^2 \bI_T) \leq \overline{C} \ln^{d+1} (T/\overline{\lambda}_T^2)$. Here, $\overline{C}$ depends on $\ell$ and $d$. 
    Then,
    \begin{align}
        \gamma_T(\overline{\lambda}_T^2 \bI_T) 
        &\leq \overline{C} \ln^{d+1} \rbr{T C^{-1} \exp\rbr{T^{\frac{1}{d+1}} \ln^{-\alpha} (1+T)}} \\
        &\leq \overline{C} \ln^{d+1} \rbr{\exp\rbr{\tilde{C} T^{\frac{1}{d+1}} \ln^{-\alpha} (1+T)}} \\
        \label{eq:se_stat1_mig}
        &= T \overline{C}\tilde{C}^{d+1} \rbr{\ln (1+T)}^{-\alpha(d+1)},
    \end{align}
    where $\tilde{C} > 0$ is a constant that satisfies $T C^{-1} \exp\rbr{T^{\frac{1}{d+1}} \ln^{-\alpha} (1+T)} \leq \exp\rbr{\tilde{C} T^{\frac{1}{d+1}} \ln^{-\alpha} (1+T)}$ for all $T \in \N_+$.
    Here, note that $\tilde{C}$ may depend on $d$, $\alpha$, and $C$.
    Since $\overline{C}\tilde{C}^{d+1} \rbr{\ln (1+T)}^{-\alpha(d+1)} \rightarrow 0$ as $T \rightarrow 0$, 
    there exists a constant $\overline{T}$ such that $T/2 \geq 3T \overline{C}\tilde{C}^{d+1} \rbr{\ln (1+T)}^{-\alpha(d+1)}$ holds for all $T \geq \overline{T}$.
    Furthermore, such constant $\overline{T}$ satisfies $\forall T \geq \overline{T}, T/2 \geq 3 \gamma_T(\overline{\lambda}_T^2 \bI_T)$ from Eq.~\eqref{eq:se_stat1_mig}.
    Therefore, for any $T \geq \overline{T}$, Eq.~\eqref{eq:std_pvu} holds from \lemref{lem:pvu_mvr} with $\tilde{\lambda}_T^2 = \overline{\lambda}_T^2$.

    \paragraph{Statement 2.}
    From the assumption, $\forall T \in \N_+, \overline{\lambda}_T^2 \geq C T^{-\frac{2\nu}{d}} \ln^{\frac{2\nu(1+\alpha)}{d}}(1+T)$ holds for some constant $C > 0$.
    Furthermore, since $k = \matk$ with $\nu > 1/2$, there exist constant $\overline{C} > 0$ such that $\gamma_T(\overline{\lambda}_T^2 \bI_T) \leq \overline{C} (T/\overline{\lambda}_T^2)^{\frac{d}{2\nu + d}} \ln^{\frac{2\nu}{2\nu + d}} (1 + T/\overline{\lambda}_T^2)$. Here, $\overline{C}$ depends on $\ell$, $\nu$, and $d$. 
    Then,
    \begin{align}
        \gamma_T(\overline{\lambda}_T^2 \bI_T) 
        &\leq \overline{C} C^{-\frac{d}{2\nu+d}} T \rbr{\ln^{-\frac{2\nu}{2\nu +d}(1+\alpha)} (1+T)} \ln^{\frac{2\nu}{2\nu + d}} \rbr{1 + C^{-1}T^{\frac{2\nu + d}{d}} \ln^{-\frac{2\nu(1+\alpha)}{d}}(1+T)} \\
        &\leq \overline{C} C^{-\frac{d}{2\nu+d}} T \rbr{\ln^{-\frac{2\nu}{2\nu +d}(1+\alpha)} (1+T)} \ln^{\frac{2\nu}{2\nu + d}} \rbr{\hat{C}(1+T)^{\frac{2\nu + d}{d}}} \\
        \label{eq:se_stat2_mig}
        &\leq \overline{C} C^{-\frac{d}{2\nu+d}} \tilde{C} T \ln^{-\frac{2\nu\alpha}{2\nu +d}} (1+T),
    \end{align}
    where $\hat{C} > 0$ is a constant that satisfies $1 + C^{-1}T^{\frac{2\nu + d}{d}} \ln^{-\frac{2\nu(1+\alpha)}{d}}(1+T) \leq \hat{C}(1+T)^{\frac{2\nu + d}{d}}$ for all $T \in \N_+$.
    Furthermore, $\tilde{C} > 0$ is a constant that satisfies $\ln^{\frac{2\nu}{2\nu + d}} \rbr{\hat{C}(1+T)^{\frac{2\nu + d}{d}}} \leq \tilde{C} \ln^{\frac{2\nu}{2\nu + d}} \rbr{1+T} $ for all $T \in \N_+$.
    Here, note that $\hat{C}$ and $\tilde{C}$ may depend on $d$, $\alpha$, and $\nu$, but are the constants under the condition that $d$, $\alpha$, and $\nu$ are fixed.
    Since $\overline{C} C^{-\frac{d}{2\nu+d}} \tilde{C} \ln^{-\frac{2\nu\alpha}{2\nu +d}} (1+T) \rightarrow 0$ as $T \rightarrow 0$, 
    there exists a constant $\overline{T}$ such that $T/2 \geq 3\overline{C} C^{-\frac{d}{2\nu+d}} \tilde{C} T \ln^{-\frac{2\nu\alpha}{2\nu +d}} (1+T)$ holds for all $T \geq \overline{T}$.
    Furthermore, such constant $\overline{T}$ satisfies $\forall T \geq \overline{T}, T/2 \geq 3 \gamma_T(\overline{\lambda}_T^2 \bI_T)$ from Eq.~\eqref{eq:se_stat2_mig}.
    Therefore, for any $T \geq \overline{T}$, Eq.~\eqref{eq:std_pvu} holds from \lemref{lem:pvu_mvr} with $\tilde{\lambda}_T^2 = \overline{\lambda}_T^2$.

    \paragraph{Statements 3 and 4.}
    For statement 3, we set $\tilde{\lambda}_T^2 = C \exp\rbr{-T^{\frac{1}{d+1}} \ln^{-\alpha}(1+T)}$. 
    Then, following the same argument as the proof of statement 1 in \corref{cor:pvu_mvr}, 
    we confirm that $\gamma_T(\tilde{\lambda}_T^2 \bI_T) = o(T)$, and the quantity $\min\{T \in \N_+ \mid \forall t \geq T, t/2 \geq 3 \gamma_t(\tilde{\lambda}_t^2 \bI_t)\} $ is bounded from 
    above by some finite constant $\overline{T}$ that depends on $d$, $\alpha$, and $C$.
    Furthermore, from $\gamma_T(\tilde{\lambda}_T^2 \bI_T) = o(T)$ and the definition of $\tilde{\lambda}_T$, we can evaluate the order of the r.h.s. in Eq.~\eqref{eq:stat_pvu} as 
    \begin{equation}
        \frac{4}{T} \sqrt{\tilde{\lambda}_T^2 T \gamma_T(\tilde{\lambda}_T^2 \bI_T)} 
        \leq O\rbr{\sqrt{\exp\rbr{-T^{\frac{1}{d+1}} \ln^{-\alpha} T}}}. 
    \end{equation}
    We also obtain the result for statement 4 by setting $\tilde{\lambda}_T^2$ as $\tilde{\lambda}_T^2 = C T^{-\frac{2\nu}{d}} (\ln T)^{\frac{2\nu (1+\alpha)}{d}}$ and calculating the explicit value of r.h.s. in Eq.~\eqref{eq:stat_pvu}. 
\end{proof}

\section{Proof in \secref{sec:nls}}
\label{sec:nl_proof}
\subsection{Proof of \thmref{thm:cr_pe_nl}}

\begin{lemma}[Deterministic confidence bound for noiseless setting, Lemma~11 in \cite{lyu2019efficient} or Proposition~1 in \cite{vakili2021optimal}]
    \label{lem:detrm_cb}
    Suppose Assumptions~\ref{asmp:smoothness}, \ref{asmp:noise} with $\forall t \in \N_+, \rho_t = 0$ hold.
    Then, for any sequence\footnote{Strictly speaking, the input $(\bx_t)_{t \in \N_+}$ is required to have no duplication to guarantee the existence of the inverse gram matrix. 
    In all of our algorithms, such events only occur when the algorithm finds the maximizer $\bx^{\ast}$, which leads to subsequent instantaneous regrets of $0$, and our upper bounds trivially hold. Therefore, we suppose that such events do not occur in our proof.} $(\bx_t)_{t \in \N_+}$ on $\mX$, the following event holds:
    \begin{equation}
        \forall t \in \N_+,~\forall \bx \in \mX,~|f(\bx) - \mu_{\lambda^2 \bI_t}(\bx; \bX_t, \bm{f}_t)| \leq B \sigma_{\lambda^2\bI_t}(\bx; \bX_t),
    \end{equation}
    where $\lambda = 0$, $\bX_t = (\bx_1, \ldots, \bx_t)$, and $\bm{f}_t = (f(\bx_1), \ldots, f(\bx_t))^{\top}$.
\end{lemma}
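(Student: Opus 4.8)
The plan is to exploit the fact that in the noiseless case $\lambda = 0$ the posterior mean $\mu_{\lambda^2 \bI_t}(\bx; \bX_t, \bm{f}_t) = \bk(\bx, \bX_t)^{\top} \bK(\bX_t, \bX_t)^{-1} \bm{f}_t$ is exactly the minimum-norm RKHS interpolant of $f$ through $\bx_1, \ldots, \bx_t$, and to combine the reproducing property with Cauchy--Schwarz in $\mH_k$. First I would introduce the weight vector $\bm{\alpha}(\bx) \coloneqq \bK(\bX_t, \bX_t)^{-1} \bk(\bx, \bX_t) \in \R^t$, so that $\mu_{\lambda^2 \bI_t}(\bx; \bX_t, \bm{f}_t) = \sum_{i=1}^{t} \alpha_i(\bx) f(\bx_i)$. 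Writing $f(\bx) = \abr{f, k(\bx, \cdot)}_{\mH_k}$ and $f(\bx_i) = \abr{f, k(\bx_i, \cdot)}_{\mH_k}$ via the reproducing property, the prediction error collapses into a single inner product:
\begin{equation*}
  f(\bx) - \mu_{\lambda^2 \bI_t}(\bx; \bX_t, \bm{f}_t) = \abr{f,\; k(\bx, \cdot) - \textstyle\sum_{i=1}^{t} \alpha_i(\bx)\, k(\bx_i, \cdot)}_{\mH_k}.
\end{equation*}

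Next I would apply Cauchy--Schwarz to bound $\abs{f(\bx) - \mu_{\lambda^2 \bI_t}(\bx; \bX_t, \bm{f}_t)} \leq \norm{f}_{\mH_k}\, \norm{g_{\bx}}_{\mH_k}$, where $g_{\bx} \coloneqq k(\bx, \cdot) - \sum_{i=1}^{t} \alpha_i(\bx) k(\bx_i, \cdot)$ is the residual representer and $\norm{f}_{\mH_k} \leq B$ by \asmpref{asmp:smoothness}. The heart of the argument is the identity $\norm{g_{\bx}}_{\mH_k}^2 = \sigma_{\lambda^2 \bI_t}^2(\bx; \bX_t)$. Expanding the squared norm with $\abr{k(\bx, \cdot), k(\by, \cdot)}_{\mH_k} = k(\bx, \by)$ gives
\begin{equation*}
  \norm{g_{\bx}}_{\mH_k}^2 = k(\bx, \bx) - 2 \bm{\alpha}(\bx)^{\top} \bk(\bx, \bX_t) + \bm{\alpha}(\bx)^{\top} \bK(\bX_t, \bX_t)\, \bm{\alpha}(\bx),
\end{equation*}
and substituting $\bm{\alpha}(\bx) = \bK(\bX_t, \bX_t)^{-1} \bk(\bx, \bX_t)$ makes the last two terms cancel down to a single copy of $\bk(\bx, \bX_t)^{\top} \bK(\bX_t, \bX_t)^{-1} \bk(\bx, \bX_t)$, so that $\norm{g_{\bx}}_{\mH_k}^2 = k(\bx, \bx) - \bk(\bx, \bX_t)^{\top} \bK(\bX_t, \bX_t)^{-1} \bk(\bx, \bX_t)$, which is precisely the noiseless posterior variance $\sigma_{\lambda^2 \bI_t}^2(\bx; \bX_t)$ with $\lambda = 0$. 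Chaining the two displays with the Cauchy--Schwarz bound then yields $\abs{f(\bx) - \mu_{\lambda^2 \bI_t}(\bx; \bX_t, \bm{f}_t)} \leq B\, \sigma_{\lambda^2 \bI_t}(\bx; \bX_t)$, and since $\bx \in \mX$ and $t \in \N_+$ are arbitrary the claim follows.

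The main obstacle is well-definedness rather than the algebra: when $\lambda = 0$ the interpolation formula and the identity above both require $\bK(\bX_t, \bX_t)$ to be invertible, which fails if the query sequence contains repeated points. As the footnote to the statement notes, for a positive-definite kernel $\bK(\bX_t, \bX_t)$ is nonsingular whenever the $\bx_i$ are distinct, and duplicates arise in the algorithms only after the maximizer has been located, at which point the subsequent instantaneous regrets vanish and the target regret bounds hold trivially. Hence it suffices to establish the inequality on the event that $\bx_1, \ldots, \bx_t$ are distinct, where every step above is valid; I would state this reduction explicitly at the outset so that the inverse $\bK(\bX_t, \bX_t)^{-1}$ is meaningful throughout.
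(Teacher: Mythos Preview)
Your proof is correct. The paper does not supply its own proof of this lemma; it is quoted verbatim as a known result from \cite{lyu2019efficient,vakili2021optimal}, and the argument you give---writing the prediction error as $\langle f, g_{\bx}\rangle_{\mH_k}$, applying Cauchy--Schwarz, and identifying $\|g_{\bx}\|_{\mH_k}^2$ with the noiseless posterior variance---is exactly the standard RKHS power-function computation underlying those references. In particular, the paper later invokes the variational identity $\sigma_{\Sigma_T}^2(\bx;\bX_T)=\sup_{\|f\|_{\mH_k}\le 1}(f(\bx)-\bZ_T(\bx)^\top\bm f_T)^2+\|\bZ_T(\bx)\|_{\Sigma_T}^2$ (its Eq.~\eqref{eq:ext_prop1}, the extension of Proposition~1 in \cite{vakili2021optimal}); in the noiseless case the second term vanishes and the supremum, by the equality case of Cauchy--Schwarz, is precisely your $\|g_{\bx}\|_{\mH_k}^2$, so the two routes are the same argument phrased from opposite ends.
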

The proof of \thmref{thm:cr_pe_nl} follows by using \lemref{lem:detrm_cb} and \lemref{cor:pvu_mvr} in the standard PE analysis.
\begin{proof}[Proof of \thmref{thm:cr_pe_nl}]
    Fix any $\alpha > 0$ and set $\overline{T}$ as the constant defined in statements 3 or 4 of Corollary~\ref{cor:pvu_mvr}. 
    Furthermore, let $\overline{i} \in \N_+$ be the first batch index 
    such that $N_{\overline{i}} \geq \overline{T}$ holds.
    In both kernels, the cumulative regret before the start of batch $\overline{i}+1$ is bounded from above by $\max\{8B\overline{T}, 2BN_1\}$ due to $\|f\|_{\infty} \leq \|f\|_{\mH_k} \leq B$, $N_{\overline{i}} < 2 \overline{T}$, and $\sum_{i=1}^{\overline{i}-1}N_i \leq N_{\overline{i}}$ if $N_1 < \overline{T}$.
    Next, for any $i$-th batch with $i \geq \overline{i} + 1$, we have
    \begin{align}
        \label{eq:ub_lb}
        \sum_{j=1}^{N_i} f(\bx^{\ast}) - f(\bx_j^{(i)})
        &\leq \sum_{j=1}^{N_i} \mathrm{ucb}_{i-1}(\bx^{\ast}) - \mathrm{lcb}_{i-1}(\bx_j^{(i)}) \\
        &= \sum_{j=1}^{N_i} \mathrm{lcb}_{i-1}(\bx^{\ast}) - \mathrm{ucb}_{i-1}(\bx_j^{(i)}) + 2 \beta^{1/2} \sigma_{\lambda^2\bI_{N_{i-1}}}(\bx_j^{(i)}; \bX_{N_{i-1}}^{(i-1)}) + 2 \beta^{1/2} \sigma_{\lambda^2\bI_{N_{i-1}}}(\bx^{\ast}; \bX_{N_{i-1}}^{(i-1)}) \\
        \label{eq:pm_bound}
        &\leq \sum_{j=1}^{N_i} \mathrm{lcb}_{i-1}(\bx^{\ast}) - \max_{\bx \in \mX_{i-1}} \mathrm{lcb}_{i-1}(\bx) + 4 B \max_{\bx \in \mX_{i-1}}\sigma_{\lambda^2\bI_{N_{i-1}}}(\bx; \bX_{N_{i-1}}^{(i-1)}) \\
        &\leq 4 B N_i \max_{\bx \in \mX_{i-1}}\sigma_{\lambda^2\bI_{N_{i-1}}}(\bx; \bX_{N_{i-1}}^{(i-1)}),
    \end{align}
    where Eq.~\eqref{eq:ub_lb} follows from the definition of $\beta^{1/2}$ and \lemref{lem:detrm_cb}. 
    Furthermore, Eq.~\eqref{eq:pm_bound} follows from $\bx_j^{(i)}, \bx^{\ast} \in \mX_{i-1}$.
    \paragraph{For SE kernel.}
    From statement 3 in \corref{cor:pvu_mvr}\footnote{The application of \lemref{lem:pvu_mvr} requires the compactness of the potential maximizers, which is verified by the continuity of $\text{ucb}_i(\cdot)$ under $\sek$ and $\matk$.}, we have
    \begin{align}
        4 B N_i \max_{\bx \in \mX_{i-1}}\sigma_{\lambda^2\bI_{N_{i-1}}}(\bx; \bX_{N_{i-1}}^{(i)})
        &\leq 8 B N_{i-1} \sqrt{C_1 \exp\rbr{-N_{i-1}^{\frac{1}{d+1}} \ln^{-\alpha} N_{i-1}}} \\
        &\leq 8 B C_2.
    \end{align}
    In the above inequalities, $C_1, C_2 \in (0, \infty)$ are the constant that may depend on $d$, $\ell$, and $\alpha$.
    The existence of $C_2$ is guaranteed by $t \sqrt{\exp\rbr{-t^{\frac{1}{d+1}} \ln^{-\alpha} t}} \rightarrow 0$.
    Since the total number of batches is bounded from above by $1 + \log_2 T$, we have
    \begin{equation}
        R_T \leq \max\{8B\overline{T}, 2BN_1\} + 8 B C_2 \log_2 T = O(\ln T).
    \end{equation}
    
    \paragraph{For Mat\'ern kernel.}
    From statement 4 in \corref{cor:pvu_mvr}, we have
    \begin{align}
        4 B N_i \max_{\bx \in \mX_{i-1}}\sigma_{\lambda^2\bI_{N_{i-1}}}(\bx; \bX_{N_{i-1}}^{(i)})
        &\leq 8 B C_1 N_{i-1}^{\frac{d - \nu}{d}} \ln^{\frac{\nu}{d}(1+\alpha)}N_{i-1}, 
    \end{align}
    with some constant $C_1 \in (0, \infty)$ that depends on $d$, $\nu$, $\ell$, and $\alpha$.
    When $d > \nu$, we have $8 B C_1 N_{i-1}^{\frac{d - \nu}{d}} \ln^{\frac{\nu}{d}(1+\alpha)}N_{i-1} \leq 8BC_1 T^{\frac{d-\nu}{d}} \ln^{\frac{\nu}{d}(1+\alpha)} T$; 
    therefore, 
    \begin{equation}
        R_T \leq \max\{8B\overline{T}, 2BN_1\} + 8BC_1 T^{\frac{d-\nu}{d}} \rbr{\ln^{\frac{\nu}{d}(1+\alpha)} T} \rbr{\log_2 T} = \tilde{O}\rbr{T^{\frac{d - \nu}{d}}}.
    \end{equation}
    When $d = \nu$, we have $8 B C_1 N_{i-1}^{\frac{d - \nu}{d}} \ln^{\frac{\nu}{d}(1+\alpha)}N_{i-1} \leq 8BC_1 \ln^{1+\alpha} T$; hence,
    \begin{equation}
        R_T \leq \max\{8B\overline{T}, 2BN_1\} + 8BC_1 \rbr{\ln^{1+\alpha} T} \rbr{\log_2 T} = O\rbr{\ln^{2+\alpha} T}.
    \end{equation}
    Finally, when $d < \nu$, we have $8 B C_1 N_{i-1}^{\frac{d - \nu}{d}} \ln^{\frac{\nu}{d}(1+\alpha)} N_{i-1} \leq 8BC_2$ for some constant $C_2 \in (0, \infty)$; therefore,
    \begin{equation}
        R_T \leq \max\{8B\overline{T}, 2BN_1\} + 8BC_2 \rbr{\log_2 T} = O\rbr{\ln T}.
    \end{equation}
\end{proof}

\subsection{Proof of \thmref{thm:sr_mvr_nl}}
\begin{proof}
    If $T \geq \overline{T}$, from \lemref{lem:detrm_cb}, we have
    \begin{align}
        f(\bx^{\ast}) - f(\hat{\bx}_T)
        &\leq \mu_{\lambda^2 \bI_T}(\bx^{\ast}; \bX_T, \bm{f}_T) + B \sigma_{\lambda^2\bI_T}(\bx^{\ast}; \bX_T) - \mu_{\lambda^2 \bI_T}(\hat{\bx}_T; \bX_T, \bm{f}_T) + B \sigma_{\lambda^2\bI_T}(\hat{\bx}_T; \bX_T) \\
        &\leq 2 B \max_{\bx \in \mX} \sigma_{\lambda^2\bI_T}(\bx; \bX_T),
    \end{align}
    where the last line follows from the definition of $\hat{\bx}_T$.
    \paragraph{For SE kernel.}
    From statement 3 in Corollary~\ref{cor:pvu_mvr} and $\|f\|_{\infty} \leq B$, we have
    \begin{equation}
        r_T \leq \begin{cases}
            2B~~&\mathrm{if}~~T < \overline{T}, \\
            2BC_1 \exp\rbr{-\frac{1}{2} T^{\frac{1}{d+1}} \ln^{-\alpha} T} ~~&\mathrm{if}~~T \geq \overline{T},
        \end{cases}
    \end{equation}
    Here, $C_1 \in (0, 1)$ is the implied constant in \corref{cor:pvu_mvr}.
    From the above inequality, $\forall T \in N_+\setminus \{1\}, r_T \leq B C_2 \exp\rbr{-\frac{1}{2} T^{\frac{1}{d+1}} \ln^{-\alpha} T}$ holds for sufficiently large constant $C_2 \in (0, \infty)$, which depends on $C_1$, $\overline{T}$, $\alpha$, and $d$.
    Note that $\overline{T}$ and $C_1$ are the constant that only depends $\alpha$, $d$, $\ell$, and $\nu$.
    This implies $r_T = O\rbr{\exp\rbr{-\frac{1}{2} T^{\frac{1}{d+1}} \ln^{-\alpha} T}}$.

    \paragraph{For Mat\'ern kernel.}
    From statement 4 in Corollary~\ref{cor:pvu_mvr}, we have
    \begin{equation}
        r_T \leq \begin{cases}
            2B~~&\mathrm{if}~~T < \overline{T}, \\
            2BC_1 T^{-\frac{\nu}{d}} \ln^{\frac{\nu}{d}(1+\alpha)} T ~~&\mathrm{if}~~T \geq \overline{T},
        \end{cases}
    \end{equation}
    Here, $C_1 \in (0, 1)$ is the implied constant in \corref{cor:pvu_mvr}.
    From the above inequality, $\forall T \in N_+\setminus \{1\}, r_T \leq B C_2 T^{-\frac{\nu}{d}} \ln^{\frac{\nu}{d}(1+\alpha)} T$ holds for sufficiently large constant $C_2 \in (0, \infty)$, which depends on $C_1$, $\overline{T}$, $\alpha$, $d$, and $\nu$.
    This implies $r_T = \tilde{O}\rbr{T^{-\frac{\nu}{d}}}$.
\end{proof}

\section{Proof in \secref{sec:rkhs_od}}
\label{sec:rkhs_od_proof}
\subsection{Proof of \thmref{thm:cr_pe_rkhs}}

\begin{lemma}[Non-adaptive confidence bound for noisy setting, Theorem~1 in \cite{vakili2021optimal}]
    \label{lem:noisy_cb}
    Fix any $T \in \N_+$, $\delta \in (0, 1)$, $\lambda^2 > 0$, and suppose Assumptions~\ref{asmp:smoothness}, \ref{asmp:noise} with $\forall t \in \N_+, \rho_t = \rho \geq 0$.
    Furthermore, assume $\mX$ is finite.
    Then, if the input sequence $(\bx_t)_{t \in [T]}$ is independent of the noise sequence $(\epsilon_t)_{t \in [T]}$, the following event holds with probability at least $1 - \delta$:
    \begin{equation}
        \forall \bx \in \mX,~|f(\bx) - \mu_{\lambda^2 \bI_t}(\bx; \bX_T, \by_T)| \leq \rbr{B + \frac{\rho}{\lambda}\sqrt{2 \ln \frac{2|\mX|}{\delta}}}\sigma_{\lambda^2\bI_T}(\bx; \bX_T),
    \end{equation}
    where $\bX_T = (\bx_1, \ldots, \bx_T)$ and $\by_T = (y_1, \ldots, y_T)^{\top}$.
\end{lemma}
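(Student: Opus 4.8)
The plan is to treat this as a standard concentration bound for kernel ridge regression, splitting the estimation error into a deterministic approximation (bias) part governed by $B$ and a stochastic (noise) part governed by the sub-Gaussian tail. Writing $\by_T = \bm{f}_T + \bm{\epsilon}_T$ with $\bm{f}_T = (f(\bx_1),\ldots,f(\bx_T))^{\top}$ and $\bm{\epsilon}_T = (\epsilon_1,\ldots,\epsilon_T)^{\top}$, the linearity of $\mu_{\lambda^2\bI_T}(\bx;\bX_T,\cdot)$ in its output argument gives
\[
  f(\bx) - \mu_{\lambda^2\bI_T}(\bx;\bX_T,\by_T)
  = \rbr{f(\bx) - \mu_{\lambda^2\bI_T}(\bx;\bX_T,\bm{f}_T)}
    - \bw^{\top}\bm{\epsilon}_T,
\]
where $\bw = (\bK(\bX_T,\bX_T)+\lambda^2\bI_T)^{-1}\bk(\bx,\bX_T)$. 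I would bound the two pieces separately and recombine at the end.

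Both bounds would come from one RKHS identity. Introducing the feature map $\phi(\bx) = k(\bx,\cdot) \in \mH_k$ and the feature matrix $\Phi = (\phi(\bx_1),\ldots,\phi(\bx_T))$, the reproducing property gives $f(\bx) - \mu_{\lambda^2\bI_T}(\bx;\bX_T,\bm{f}_T) = \langle f, v(\bx)\rangle_{\mH_k}$ with $v(\bx) = \phi(\bx) - \Phi^{\top}\bw$. Expanding $\norm{v(\bx)}_{\mH_k}^2$ and using $\bK(\bX_T,\bX_T) = (\bK(\bX_T,\bX_T)+\lambda^2\bI_T) - \lambda^2\bI_T$ yields the key identity $\norm{v(\bx)}_{\mH_k}^2 = \sigma_{\lambda^2\bI_T}^2(\bx;\bX_T) - \lambda^2\norm{\bw}_2^2$. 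Since the left-hand side is a squared norm, this single identity simultaneously delivers $\norm{v(\bx)}_{\mH_k} \le \sigma_{\lambda^2\bI_T}(\bx;\bX_T)$ and $\norm{\bw}_2 \le \sigma_{\lambda^2\bI_T}(\bx;\bX_T)/\lambda$. The bias then follows from Cauchy-Schwarz, $|\langle f, v(\bx)\rangle_{\mH_k}| \le \norm{f}_{\mH_k}\norm{v(\bx)}_{\mH_k} \le B\,\sigma_{\lambda^2\bI_T}(\bx;\bX_T)$, which is the $\lambda>0$ analogue of \lemref{lem:detrm_cb}.

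For the noise term $\bw^{\top}\bm{\epsilon}_T = \sum_{t}w_t\epsilon_t$, I would condition on $\bX_T$: the independence of the input sequence from the noise (the hypothesis of the lemma) makes $\bw$ deterministic while keeping $\bm{\epsilon}_T$ a vector of independent $\rho$-sub-Gaussian entries, so $\bw^{\top}\bm{\epsilon}_T$ is $\rho\norm{\bw}_2$-sub-Gaussian. A two-sided sub-Gaussian tail bound gives $\abs{\bw^{\top}\bm{\epsilon}_T} \le \rho\norm{\bw}_2\sqrt{2\ln(2/\delta')}$ with probability at least $1-\delta'$ for a fixed $\bx$; substituting $\norm{\bw}_2 \le \sigma_{\lambda^2\bI_T}(\bx;\bX_T)/\lambda$ and taking a union bound over the finite set $\mX$ with $\delta' = \delta/|\mX|$ makes the bound uniform. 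Combining with the bias bound on the resulting high-probability event yields $\abs{f(\bx) - \mu_{\lambda^2\bI_T}(\bx;\bX_T,\by_T)} \le \rbr{B + \tfrac{\rho}{\lambda}\sqrt{2\ln(2|\mX|/\delta)}}\sigma_{\lambda^2\bI_T}(\bx;\bX_T)$ for all $\bx \in \mX$, as claimed.

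I expect the only real subtlety to be the bookkeeping in the RKHS identity $\norm{v(\bx)}_{\mH_k}^2 = \sigma_{\lambda^2\bI_T}^2(\bx;\bX_T) - \lambda^2\norm{\bw}_2^2$, since everything else is routine; the independence hypothesis is what makes the conditioning argument legitimate, so it is essential that the bound is non-adaptive, i.e., that the inputs are not selected using the observed noisy outputs.
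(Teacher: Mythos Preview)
Your proposal is correct and follows essentially the same route as the paper. The paper does not reprove \lemref{lem:noisy_cb} directly (it is cited from \cite{vakili2021optimal}), but its proof of the non-stationary generalisation \lemref{lem:nsv_noisy_cb} specialises to exactly your argument: the same bias/noise decomposition via the weight vector $\bZ_T(\bx) = \bw$, the same identity $\sigma_{\lambda^2\bI_T}^2(\bx;\bX_T) = \norm{v(\bx)}_{\mH_k}^2 + \lambda^2\norm{\bw}_2^2$ (stated there as the variational form $\sigma^2 = \sup_{\norm{f}\le 1}(f(\bx)-\bw^{\top}\bm f_T)^2 + \norm{\bw}_{\Sigma_T}^2$, which is equivalent by Cauchy--Schwarz), the same sub-Gaussian tail bound on $\bw^{\top}\bm\epsilon_T$ using independence of inputs and noise, and the same union bound over $\mX$.
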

\begin{proof}[Proof of \thmref{thm:cr_pe_rkhs}]
From \lemref{lem:noisy_cb} and the union bound, the following event holds with probability at least $1 - \delta$:
\begin{equation}
    \forall i \in [Q_T], \forall \bx \in \mX, \mathrm{lcb}_i(\bx) \leq f(\bx) \leq \mathrm{ucb}_i(\bx),
\end{equation}
where $Q_T \in \N_+$ denotes the total batch size of PE. Note that $Q_T \leq 1 + \log_2 T$ holds. Hereafter, we assume the above event holds.

First, the cumulative regret at the first batch is bounded from above by $2BN_1$.
Next, for any $i$-th batch with $i \geq 2$, we have
\begin{align}
        \label{eq:pe_regret_start}
        \sum_{j=1}^{N_i} f(\bx^{\ast}) - f(\bx_j^{(i)})
        &\leq \sum_{j=1}^{N_i} \mathrm{ucb}_{i-1}(\bx^{\ast}) - \mathrm{lcb}_{i-1}(\bx_j^{(i)}) \\
        &= \sum_{j=1}^{N_i} \mathrm{lcb}_{i-1}(\bx^{\ast}) - \mathrm{ucb}_{i-1}(\bx_j^{(i)}) + 2 \beta^{1/2} \sigma_{\lambda^2\bI_{N_{i-1}}}(\bx_j^{(i)}; \bX_{N_{i-1}}^{(i-1)}) + 2 \beta^{1/2} \sigma_{\lambda^2\bI_{N_{i-1}}}(\bx^{\ast}; \bX_{N_{i-1}}^{(i-1)}) \\
        &\leq \sum_{j=1}^{N_i} \mathrm{lcb}_{i-1}(\bx^{\ast}) - \max_{\bx \in \mX_{i-1}} \mathrm{lcb}_{i-1}(\bx) + 4 B \max_{\bx \in \mX_{i-1}}\sigma_{\lambda^2\bI_{N_{i-1}}}(\bx; \bX_{N_{i-1}}^{(i-1)}) \\
        \label{eq:pe_regret_end}
        &\leq 4 N_i \beta^{1/2} \max_{\bx \in \mX_{i-1}} \sigma_{\lambda^2\bI_{N_{i-1}}}(\bx; \bX_{N_{i-1}}^{(i-1)}).
    \end{align}
From the definition of $\beta^{1/2}$ and $\lambda^2 = C/B^2$ for some constant $C > 0$, the above inequality implies
\begin{align}
    \label{eq:rkhs_regret_batch}
    \sum_{j=1}^{N_i} f(\bx^{\ast}) - f(\bx_j^{(i)}) &\leq 
    4 N_i B \rbr{1 + C^{-1/2}\rho \sqrt{2 \ln \frac{2|\mX|(1 + \log_2 T)}{\delta}}} \max_{\bx \in \mX_{i-1}}\sigma_{\lambda^2\bI_{N_{i-1}}}(\bx; \bX_{N_{i-1}}^{(i-1)}).
\end{align}
Here, let us define $\mT$ and $\mT^c$ as $\mT = \{j \in [N_{i-1}] \mid \lambda^{-1} \sigma_{\lambda^2\bI_{j-1}}(\bx_j; \bX_{j-1}^{(i-1)}) \leq 1\}$ and $\mT^c = [N_{i-1}] \setminus \mT$, respectively. 
From elliptical potential count lemma (\lemref{lem:epcl}), we have
\begin{equation}
    |\mT^c| \leq \min \cbr{3 \overline{\gamma}\rbr{3 \gamma_{N_{i-1}}(\lambda^2 \bI_{N_{i-1}}), \lambda^2}, 3 \gamma_{N_{i-1}}(\lambda^2 \bI_{N_{i-1}})}.
\end{equation}
Furthermore, from the definition of $\mT$, we have the following inequality 
as with the Eqs.~\eqref{eq:st_monot}--\eqref{eq:st_mig}:
\begin{align}
    \sum_{j \in \mT} \sigma_{\lambda^2\bI_{j-1}}(\bx_j; \bX_{j-1}^{(i-1)})
    \leq \sqrt{\lambda^2 N_{i-1} \gamma_{N_{i-1}}(\lambda^2 \bI_{N_{i-1}})}.
\end{align}
Then, regarding the maximum of posterior standard deviation, 
we have the following from the above inequalities:
\begin{align}
    &\max_{\bx \in \mX_{i-1}}\sigma_{\lambda^2\bI_{N_{i-1}}}(\bx; \bX_{N_{i-1}}^{(i-1)}) \\
    &\leq \frac{1}{N_{i-1}}\sum_{j \in [N_{i-1}]} \sigma_{\lambda^2\bI_{j-1}}(\bx_j; \bX_{j-1}^{(i-1)}) \\
    &\leq \frac{1}{N_{i-1}} \sbr{|\mT^c| + \sum_{j \in \mT} \sigma_{\lambda^2\bI_{j-1}}(\bx_j; \bX_{j-1}^{(i-1)})} \\
    &\leq \frac{1}{N_{i-1}} \sbr{\min \cbr{3 \overline{\gamma}\rbr{3 \gamma_{N_{i-1}}(\lambda^2 \bI_{N_{i-1}}), \lambda^2}, 3 \gamma_{N_{i-1}}(\lambda^2 \bI_{N_{i-1}})} + 2\sqrt{\lambda^2 N_{i-1} \gamma_{N_{i-1}}(\lambda^2 \bI_{N_{i-1}})}} \\
    &\leq \frac{1}{N_{i-1}} \sbr{\min \cbr{3 \overline{\gamma}\rbr{3 \gamma_{T}(\lambda^2 \bI_{T}), \lambda^2}, 3 \gamma_{T}(\lambda^2 \bI_{T})} + 2\sqrt{\lambda^2 T \gamma_{T}(\lambda^2 \bI_{T})}},
\end{align}
where the first inequality follows from the definition of the MVR-selection rule, and the second inequality follows from 
$\sigma_{\lambda^2\bI_{j-1}}(\bx_j; \bX_{j-1}^{(i-1)}) \leq k(\bx_j, \bx_j) \leq 1$. 
Combining the above inequality with Eq.~\eqref{eq:pe_regret_end} and $Q_T \leq (1 + \log_2 T)$, we have
\begin{equation}
    \label{eq:general_ub_rkhs}
    R_T 
    \leq 2BN_1 + 8 (1 + \log_2 T) \beta^{1/2} \sbr{\min \cbr{3 \overline{\gamma}\rbr{3 \gamma_{T}(\lambda^2 \bI_{T}), \lambda^2}, 3 \gamma_{T}(\lambda^2 \bI_{T})} + 2\sqrt{\lambda^2 T \gamma_{T}(\lambda^2 \bI_{T})}}.
\end{equation}

\paragraph{For SE kernel.}
Note that $\lambda^2 = \Theta(1/B^2)$, $B = O(\sqrt{T})$, and $\overline{\gamma}(t, \lambda^2) = O(\ln^{d+1} (t/\lambda^2))$.
Since $3\gamma_T(\lambda^2 \bI_T) = O(\ln^{d+1} (TB^2))$, we obtain $\min \cbr{3 \overline{\gamma}\rbr{3 \gamma_{T}(\lambda^2 \bI_{T}), \lambda^2}, 3 \gamma_{T}(\lambda^2 \bI_{T})} = O(\ln^{d+1} (TB^2))$. Furthermore, from $\sqrt{\lambda^2 T \gamma_T(\lambda^2 \bI_T)} = O\rbr{\frac{\sqrt{T}}{B}\sqrt{\ln^{d+1} (TB^2)}}$, 
$\min \cbr{3 \overline{\gamma}\rbr{3 \gamma_{T}(\lambda^2 \bI_{T}), \lambda^2}, 3 \gamma_{T}(\lambda^2 \bI_{T})} + 2\sqrt{\lambda^2 T \gamma_{T}(\lambda^2 \bI_{T})} = O\rbr{\frac{\sqrt{T}}{B}\sqrt{\ln^{d+1} (TB^2)}}$ when $B = O(\sqrt{T})$. 
Hence, by noting $\beta^{1/2} = \Theta\rbr{B \sqrt{\ln \frac{|\mX|}{\delta}}}$ and $B = O(\sqrt{T})$, we have
\begin{align}
    R_T 
    &\leq O\rbr{\max\cbr{B, (\ln T) \sqrt{T \rbr{\ln^{d+1} (TB^2)} \rbr{\ln \frac{|\mX|}{\delta}}}}} \\
    &= O\rbr{(\ln T) \sqrt{T \rbr{\ln^{d+1} (TB^2)} \rbr{\ln \frac{|\mX|}{\delta}}}}.
\end{align}

\paragraph{For Mat\'ern kernel.}
Note that $\lambda^2 = \Theta(1/B^2)$, $B = O\rbr{T^{\frac{2 \nu^2 + 3 \nu d }{ 4 d^2 + 4\nu^2 + 6\nu d }}}$, and $\overline{\gamma}(t, \lambda^2) = \tilde{O}\rbr{(t/\lambda^2)^{\frac{d}{2\nu + d}}}$.
Furthermore, we can see that
\begin{align*}
    \frac{2 \nu^2 + 3 \nu d }{ 4 d^2 + 4\nu^2 + 6\nu d }
    \leq \frac{\nu}{d} 
    \Leftrightarrow& 2 \nu^2 d + 3\nu d^2 \leq 4 \nu d^2 + 4 \nu^3 + 6\nu^2 d \\
    \Leftrightarrow& 0 \leq \nu d^2 + 4\nu^3 + 4 \nu^2 d \\
    \Leftrightarrow& 0 \leq \nu (d + 2\nu)^2.
\end{align*}
Thus, from $B = O\rbr{T^{\frac{2 \nu^2 + 3 \nu d }{ 4 d^2 + 4\nu^2 + 6\nu d }}}$, we can obtain $B = O\rbr{T^{\frac{\nu}{d}}}$.
Then, 
$3\gamma_T(\lambda^2 \bI_T) = \tilde{O}\rbr{(TB^2)^{\frac{d}{2\nu +d}}}$, 
$\sqrt{\lambda^2 T \gamma_{T}(\lambda^2 \bI_{T})} = 
\tilde{O}\rbr{T^{\frac{\nu+d}{2\nu+d}} B^{-\frac{2\nu}{2\nu+d}}}$, and $3\overline{\gamma}\rbr{3\gamma_T(\lambda^2 \bI_T), \lambda^2} = \tilde{O}\rbr{T^{\rbr{\frac{d}{2\nu +d}}^2} (B^2)^{\frac{d(2\nu+2d)}{(2\nu+d)^2}}}$. 
Therefore, for the second term of Eq.~\eqref{eq:general_ub_rkhs}, we see that
\begin{align}
    &8 (1 + \log_2 T) \beta^{1/2} \sbr{\min \cbr{3 \overline{\gamma}\rbr{3 \gamma_{T}(\lambda^2 \bI_{T}), \lambda^2}, 3 \gamma_{T}(\lambda^2 \bI_{T})} + 2\sqrt{\lambda^2 T \gamma_{T}(\lambda^2 \bI_{T})}} \\
    &= \tilde{O} \left(\beta^{1/2} \sbr{\min \cbr{T^{\rbr{\frac{d}{2\nu +d}}^2} (B^2)^{\frac{d(2\nu+2d)}{(2\nu+d)^2}}, (TB^2)^{\frac{d}{2\nu +d}}} + T^{\frac{\nu+d}{2\nu+d}} B^{-\frac{2\nu}{2\nu+d}}} \right).
    \label{eq:cumulative_rkhs_matern_takeno}
\end{align}
Note that, if $B = \Theta\rbr{T^{\frac{\nu}{d}}}$, then $T^{\rbr{\frac{d}{2\nu +d}}^2} (B^2)^{\frac{d(2\nu+2d)}{(2\nu+d)^2}} = \Theta(T)$ and $(TB^2)^{\frac{d}{2\nu +d}} = \Theta(T)$.
Therefore, if $B = O\rbr{T^{\frac{\nu}{d}}}$, $T^{\rbr{\frac{d}{2\nu +d}}^2} (B^2)^{\frac{d(2\nu+2d)}{(2\nu+d)^2}} = O\left( (TB^2)^{\frac{d}{2\nu +d}} \right)$ since $\frac{d}{2\nu +d} < \frac{d(2\nu+2d)}{(2\nu+d)^2}$.
Thus, Eq.~\eqref{eq:cumulative_rkhs_matern_takeno} is $\tilde{O} \left(\beta^{1/2} \sbr{T^{\rbr{\frac{d}{2\nu +d}}^2} (B^2)^{\frac{d(2\nu+2d)}{(2\nu+d)^2}} + T^{\frac{\nu+d}{2\nu+d}} B^{-\frac{2\nu}{2\nu+d}}} \right)$.
Then, we can see that
\begin{align}
    \frac{
            T^{\rbr{\frac{d}{2\nu +d}}^2} (B^2)^{\frac{d(2\nu+2d)}{(2\nu+d)^2}}
        }{
            T^{\frac{\nu+d}{2\nu+d}} B^{-\frac{2\nu}{2\nu+d}}
        }
    &= T^{\frac{d^2 - 2 \nu^2 - 3 \nu d - d^2 }{(2\nu +d)^2}}
    B^{\frac{4d \nu + 4 d^2 + 4\nu^2 + 2\nu d}{(2\nu+d)^2}} \\
    &= T^{\frac{ - 2 \nu^2 - 3 \nu d}{(2\nu +d)^2}}
    B^{\frac{4 d^2 + 4\nu^2 + 6\nu d}{(2\nu+d)^2}}.
\end{align}
Therefore, if $B = O\left(T^{\frac{2 \nu^2 + 3 \nu d }{ 4 d^2 + 4\nu^2 + 6\nu d }}\right)$, then $T^{\rbr{\frac{d}{2\nu +d}}^2} (B^2)^{\frac{d(2\nu+2d)}{(2\nu+d)^2}} = O \left( T^{\frac{\nu+d}{2\nu+d}} B^{-\frac{2\nu}{2\nu+d}} \right)$.
Thus, from the assumption $B = O\rbr{T^{\frac{2 \nu^2 + 3 \nu d }{ 4 d^2 + 4\nu^2 + 6\nu d }}}$, $T^{\rbr{\frac{d}{2\nu +d}}^2} (B^2)^{\frac{d(2\nu+2d)}{(2\nu+d)^2}} = O \left( T^{\frac{\nu+d}{2\nu+d}} B^{-\frac{2\nu}{2\nu+d}} \right)$.
Hence, Eq.~\eqref{eq:cumulative_rkhs_matern_takeno} is $\tilde{O} \left(\beta^{1/2} T^{\frac{\nu+d}{2\nu+d}} B^{-\frac{2\nu}{2\nu+d}} \right) = \tilde{O} \left(T^{\frac{\nu+d}{2\nu+d}} B^{\frac{d}{2\nu+d}} \right)$ because of $\beta^{1/2} = \Theta\rbr{B \sqrt{\ln \frac{|\mX|}{\delta}}}$.
Therefore, we have
\begin{equation}
    R_T = \tilde{O}\rbr{\max\cbr{B, T^{\frac{\nu+d}{2\nu+d}} B^{\frac{d}{2\nu+d}}}}.
\end{equation}
Furthermore, noting that $B = \Theta\rbr{T^{\frac{\nu + d}{2 \nu}}} \Leftrightarrow B = \Theta\rbr{T^{\frac{\nu + d}{2 \nu + d}} B^{\frac{d}{2\nu + d}}}$, since $\frac{\nu + d}{2\nu} = \frac{1}{2} + \frac{d}{2\nu} > \frac{2 \nu^2 + 3 \nu d }{ 4 d^2 + 4\nu^2 + 6\nu d }$, we see $B = O\rbr{T^{\frac{\nu+d}{2\nu+d}} B^{\frac{d}{2\nu+d}}}$.
Consequently, we have
\begin{equation}
    R_T = \tilde{O}\rbr{ T^{\frac{\nu+d}{2\nu+d}} B^{\frac{d}{2\nu+d}}}.
\end{equation}
\end{proof}

\subsection{Proof of \thmref{thm:sr_mvr_rkhs}}
\begin{proof}
    From \lemref{lem:noisy_cb}, we have the following with probability at least $1 - \delta$:
    \begin{align}
        f(\bx^{\ast}) - f(\hat{\bx}_T)
        &\leq \mu_{\lambda^2 \bI_T}(\bx^{\ast}; \bX_T, \by_T) + \beta^{1/2} \sigma_{\lambda^2\bI_T}(\bx^{\ast}; \bX_T) - \mu_{\lambda^2 \bI_T}(\hat{\bx}_T; \bX_T, \by_T) + \beta^{1/2} \sigma_{\lambda^2\bI_T}(\hat{\bx}_T; \bX_T) \\
        &\leq 2 \beta^{1/2} \max_{\bx \in \mX} \sigma_{\lambda^2\bI_T}(\bx; \bX_T) \\
        &= 2 B \rbr{1 + \rho\sqrt{2 \ln \frac{2|\mX|}{\delta}}} \max_{\bx \in \mX} \sigma_{\lambda^2\bI_T}(\bx; \bX_T).
    \end{align}
    Here, when $k = \sek$, note that the condition $B = O(\exp(T^{\frac{1}{d+1}} \ln^{-\alpha}(1 + T)))$ implies $\lambda^2 = O(\exp(-\frac{1}{2}T^{\frac{1}{d+1}}\ln^{-\alpha}(1+T)))$. 
    Therefore, in the SE kernel, from statement 1 in \corref{cor:pvu_mvr}, we have the following inequality for any $T \geq \overline{T}$.
    \begin{equation}
        r_T \leq 
            8\rbr{1 + \rho\sqrt{2 \ln \frac{2|\mX|}{\delta}}}\sqrt{\frac{\overline{C} \ln^{d+1}(TB^2)}{T}} = O\rbr{\sqrt{\frac{\ln^{d+1}(TB^2)}{T}}},
    \end{equation}
    where $\overline{C} > 0$ is the implied constant of the upper bound of MIG.
    As for the Mat\'ern kernel, leveraging statement 2 in \corref{cor:pvu_mvr} by noting the condition of $B$, we have
    \begin{equation}
        r_T \leq 
            \frac{8}{T}\rbr{1 + \rho\sqrt{2 \ln \frac{2|\mX|}{\delta}}} \sqrt{\overline{C} B^{\frac{2d}{2\nu + d}}T^{\frac{2(\nu + d)}{2\nu + d}} \ln^{\frac{2\nu}{2\nu +d}} (TB^2)} = \tilde{O}\rbr{B^{\frac{d}{2\nu +d}} T^{-\frac{\nu}{2\nu+d}}},
    \end{equation}
    for any $T \geq \overline{T}$. 
\end{proof}

\section{Proof in \secref{sec:nsv}}
\label{sec:nsv_proof}
\subsection{Proof of \thmref{thm:vape}}
\begin{lemma}[Non-adaptive confidence bound for non-stationary variance setting, extension of Theorem~1 in \cite{vakili2021optimal}]
    \label{lem:nsv_noisy_cb}
    Fix any $T \in \N_+$, $\delta \in (0, 1)$, any non-negative sequence $(\lambda_t)_{t \in \N_+}$, and suppose Assumptions~\ref{asmp:smoothness} and \ref{asmp:noise} with $\rho_t \leq \lambda_t$.
    Furthermore, assume $\mX$ is finite.
    Then, if the input sequence $(\bx_t)_{t \in [T]}$ is independent of the noise sequence $(\epsilon_t)_{t \in [T]}$, the following event holds with probability at least $1 - \delta$:
    \begin{equation}
        \forall \bx \in \mX,~|f(\bx) - \mu_{\Sigma_T}(\bx; \bX_T, \by_T)| \leq \rbr{B + \sqrt{2 \ln \frac{2|\mX|}{\delta}}}\sigma_{\Sigma_T}(\bx; \bX_T),
    \end{equation}
    where $\bX_T = (\bx_1, \ldots, \bx_T)$, $\by_T = (y_1, \ldots, y_T)^{\top}$, and $\Sigma_T = \mathrm{diag}(\lambda_1^2, \ldots, \lambda_T^2)$.
\end{lemma}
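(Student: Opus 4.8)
The plan is to mirror the proof of Lemma~\ref{lem:noisy_cb} (Theorem~1 of \cite{vakili2021optimal}), carrying the diagonal regularizer $\Sigma_T$ through the argument in place of the scalar $\lambda^2\bI_T$. Writing $\by_T = \bm{f}(\bX_T) + \bm{\epsilon}$ with $\bm{f}(\bX_T) = (f(\bx_1),\dots,f(\bx_T))^\top$ and $\bm{\epsilon}=(\epsilon_1,\dots,\epsilon_T)^\top$, and abbreviating $\bv := (\bK(\bX_T,\bX_T)+\Sigma_T)^{-1}\bk(\bx,\bX_T)$ so that $\mu_{\Sigma_T}(\bx;\bX_T,\by_T)=\bv^\top\by_T$, I would decompose
\begin{equation*}
f(\bx) - \mu_{\Sigma_T}(\bx;\bX_T,\by_T) = \big(f(\bx) - \bv^\top \bm{f}(\bX_T)\big) - \bv^\top\bm{\epsilon},
\end{equation*}
and bound the first term deterministically and the second with a sub-Gaussian tail bound. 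The quantity linking both bounds to $\sigma_{\Sigma_T}(\bx;\bX_T)$ is the inequality $\bv^\top\Sigma_T\bv \le \sigma_{\Sigma_T}^2(\bx;\bX_T)$, which I establish first.

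For this core inequality I would use the fictional GP decomposition: under $f\sim\mathcal{GP}(0,k)$ and $\bm{\epsilon}\sim\mN(\bm{0},\Sigma_T)$ independent, the posterior variance equals the residual variance of the best linear predictor, so
\begin{equation*}
\sigma_{\Sigma_T}^2(\bx;\bX_T) = \Var\big(f(\bx) - \bv^\top\by_T\big) = \Var\big(f(\bx) - \bv^\top\bm{f}(\bX_T)\big) + \bv^\top\Sigma_T\bv \ge \bv^\top\Sigma_T\bv,
\end{equation*}
using the independence of $\bm{f}(\bX_T)$ and $\bm{\epsilon}$ (equivalently, this follows algebraically from $\bI_T - \bK(\bX_T,\bX_T)(\bK(\bX_T,\bX_T)+\Sigma_T)^{-1} = \Sigma_T(\bK(\bX_T,\bX_T)+\Sigma_T)^{-1}$). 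For the bias term, the reproducing property gives $f(\bx) - \bv^\top\bm{f}(\bX_T) = \abr{f,\, k(\bx,\cdot) - \sum_{s} v_s k(\bx_s,\cdot)}_{\mH_k}$, so Cauchy--Schwarz and $\|f\|_{\mH_k}\le B$ reduce matters to the representer norm, and a direct expansion shows
\begin{equation*}
\Big\|k(\bx,\cdot)-\sum_s v_s k(\bx_s,\cdot)\Big\|_{\mH_k}^2 = \sigma_{\Sigma_T}^2(\bx;\bX_T) - \bv^\top\Sigma_T\bv \le \sigma_{\Sigma_T}^2(\bx;\bX_T),
\end{equation*}
whence $|f(\bx)-\bv^\top\bm{f}(\bX_T)| \le B\,\sigma_{\Sigma_T}(\bx;\bX_T)$ holds deterministically.

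For the noise term, I condition on $\bX_T$, which is independent of $\bm{\epsilon}$ by hypothesis; then $\bv^\top\bm{\epsilon} = \sum_t v_t\epsilon_t$ is a weighted sum of independent mean-zero sub-Gaussians, hence sub-Gaussian with variance proxy $\sum_t v_t^2\rho_t^2 = \bv^\top\diag(\rho_1^2,\dots,\rho_T^2)\bv$. Since $\rho_t\le\lambda_t$ yields the matrix domination $\diag(\rho_t^2)\preceq\Sigma_T$, this proxy is at most $\bv^\top\Sigma_T\bv\le\sigma_{\Sigma_T}^2(\bx;\bX_T)$ by the core inequality. A standard sub-Gaussian tail bound then gives $|\bv^\top\bm{\epsilon}|\le \sqrt{2\ln(2|\mX|/\delta)}\,\sigma_{\Sigma_T}(\bx;\bX_T)$ with probability at least $1-\delta/|\mX|$ for each fixed $\bx$, and a union bound over the finite $\mX$ upgrades this to a simultaneous statement over all $\bx\in\mX$. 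Adding the deterministic bias bound to the probabilistic noise bound produces the claimed constant $B + \sqrt{2\ln(2|\mX|/\delta)}$.

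The main obstacle is precisely the pair $\bv^\top\Sigma_T\bv\le\sigma_{\Sigma_T}^2$ and $\diag(\rho_t^2)\preceq\Sigma_T$: these are where the heteroscedastic (diagonal, non-scalar) structure replaces the scalar $\lambda^2$ of Vakili's proof, and where the hypothesis $\rho_t\le\lambda_t$ is used to absorb the factor $\rho/\lambda$ of Lemma~\ref{lem:noisy_cb} into the single constant $1$. Everything else is a routine transcription of the homoscedastic argument, so once these two facts are in place the result follows immediately.
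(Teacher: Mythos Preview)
Your proposal is correct and follows essentially the same route as the paper: the same bias/noise decomposition via $\bv=(\bK+\Sigma_T)^{-1}\bk(\bx,\bX_T)$, the same key identity $\sigma_{\Sigma_T}^2(\bx;\bX_T)=\|k(\bx,\cdot)-\sum_s v_s k(\bx_s,\cdot)\|_{\mH_k}^2+\bv^\top\Sigma_T\bv$ (which is precisely the heteroscedastic extension of Proposition~1 in \cite{vakili2021optimal} that the paper invokes), and the same use of $\diag(\rho_t^2)\preceq\Sigma_T$ in the sub-Gaussian tail bound followed by a union bound over $\mX$. The only cosmetic difference is that you derive the identity directly via the GP variance split rather than stating it as a standalone proposition.
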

\begin{proof}
    The proof almost directly follows by replacing the original Proposition~1 in \cite{vakili2021optimal}. Let $\bZ_T(\bx)$ be $\bZ_T(\bx)^{\top} = \bk(\bx, \bX_T)^{\top} (\bK(\bX_T, \bX_T) + \Sigma_T)^{-1}$. Then, following the proof of 
    Proposition~1 in \cite{vakili2021optimal}, we obtain its extension as 
    \begin{equation}
        \label{eq:ext_prop1}
        \sigma_{\Sigma_T}^2(\bx; \bX_T) = \sup_{f:\|f\|_{\mH_k} \leq 1} \rbr{f(\bx) - \bZ_T(\bx)^{\top} \bm{f}_T}^2 + \|\bZ_T(\bx)\|_{\Sigma_T}^2, 
    \end{equation}
    where $\bm{f}_T = (f(\bx_1), \ldots, f(\bx_T))^{\top}$ and $\|\bZ_T(\bx)\|_{\Sigma_T} = \bZ_T(\bx)^{\top}\Sigma_T \bZ_T(\bx)$.
    Next, we consider replacing Proposition~1 with Eq.~\eqref{eq:ext_prop1} in the original proof of Theorem~1 in \cite{vakili2021optimal}.
    As with the original proof, we decompose $|f(\bx) - \mu_{\Sigma_T}(\bx; \bX_T, \by_T)|$ into two terms:
    \begin{equation}
        \label{eq:decomp_error}
        |f(\bx) - \mu_{\Sigma_T}(\bx; \bX_T, \by_T)| \leq |f(\bx) - \bZ_T(\bx)^{\top} \bm{f}_T| + |\bZ_T(\bx)^{\top} \bm{\epsilon}_T|,
    \end{equation}
    where $\bm{\epsilon}_T = (\epsilon_1, \ldots, \epsilon_T)^{\top}$.
    The first term of r.h.s. is bounded from above as
    \begin{equation}
        \label{eq:first_rhs_ub}
        |f(\bx) - \bZ_T(\bx)^{\top} \bm{f}_T| = B\left|\frac{f(\bx)}{B} - \bZ_T(\bx)^{\top} \rbr{\frac{\bm{f}_T}{B}}\right| \leq B\sigma_{\Sigma_T}(\bx; \bX_T).
    \end{equation}
    The last inequality follows from Eq.~\eqref{eq:ext_prop1} since $\left\|f(\cdot)/B\right\|_{\mH_k} \leq 1$ holds by \asmpref{asmp:smoothness}.
    Regarding the second term, $\bZ_T(\bx)^{\top} \bm{\epsilon}_T$ is the sub-Gaussian random variable from the independence assumption between $(\bx_t)_{t \in [T]}$ and $(\epsilon_t)_{t \in [T]}$, and its concentration inequality is obtained by evaluating the upper bound of the moment generating function. By following the proof of Theorem~1 in \cite{vakili2021optimal}, we have
    \begin{align}
        \Ep\sbr{\exp(\bZ_T(\bx)^{\top} \bm{\epsilon}_T)} 
        &\leq \exp \rbr{\frac{\|\bZ_T(\bx)\|_{\mathrm{diag}(\rho_1^2, \ldots, \rho_t^2)}^2}{2}} \\
        &\leq \exp \rbr{\frac{\|\bZ_T(\bx)\|_{\Sigma_T}^2}{2}} \\
        &\leq \exp \rbr{\frac{\sigma_{\Sigma_T}^2 (\bx; \bX_T)}{2}},
    \end{align}
    where the second inequality follows from the assumption $\rho_t \leq \lambda_t$, and the third inequality follows from Eq.~\eqref{eq:ext_prop1}. 
    The above upper bound for the moment-generating function implies the following 
    concentration inequality for any $\bx\in \mX$ from Chernoff-Hoeffding inequality:
    \begin{equation}
        \label{eq:second_rhs_ub}
        \Pr\rbr{|\bZ_T(\bx)^{\top} \bm{\epsilon}_T| \leq \sqrt{2 \ln \frac{2}{\delta}}\sigma_{\Sigma_T}(\bx; \bX_T)
        } \geq 1 - \delta.
    \end{equation}
    Finally, we obtain the desired result by combining Eq.~\eqref{eq:decomp_error} with Eqs.~\eqref{eq:first_rhs_ub}, \eqref{eq:second_rhs_ub}, and the union bound.
\end{proof}

\begin{proof}[Proof of \thmref{thm:vape}]

From \lemref{lem:nsv_noisy_cb} and the union bound, the confidence bound is valid 
with probability at least $1-\delta$. Hereafter, 
From the assumption $V_T = \Omega(1)$, we have the constant $C > 0$ such that $V_T \geq C$. Below, we prove each statement of the theorem separately based on the setting of the kernel.

\paragraph{For $k = \sek$.}

Let us set $\overline{T}$ as $\overline{T} = \min\{T \in \N_+ \mid \forall t \geq T, t/2 \geq 4\gamma_t(Ct^{-1}\bI_t)\}$. 
Note that the constant $\overline{T}$ is well-defined, since $\gamma_t(Ct^{-1}\bI_t) = O(\ln^{d+1} t^2) = o(t)$. 
Furthermore, let $\overline{i} \in \N_+$ be the first batch index such that $N_{\overline{i}} \geq \overline{T}$ holds.
Then, the cumulative regret before the start of $(\overline{i}+1)$-th batch is bounded from above by $\max\{8B\overline{T}, 2BN_1\}$. 
Next, we consider the cumulative regret after the start of $(\overline{i}+1)$-th batch. We apply statement 2 of \lemref{lem:pvu_mvr} with the following arguments:
\begin{enumerate}
\item Set $\tilde{\lambda}_j^{(i)}$ as $\rbr{\tilde{\lambda}_j^{(i)}}^2 = \max\cbr{\rbr{\lambda_j^{(i)}}^2, \frac{V_T}{N_i}}$. Then, we have $\rbr{\tilde{\lambda}_j^{(i)}}^2 \geq V_T/N_i \geq C/N_i$. Furthermore, we set $\tilde{\Sigma}_{N_i}^{(i)} = \mathrm{diag}\rbr{\rbr{\tilde{\lambda}_1^{(i)}}^2, \ldots, \rbr{\tilde{\lambda}_{N_i}^{(i)}}^2}$.
 \item From the above lower bound of $\tilde{\lambda}_j^{(i)}$, we have $\gamma_{N_i}\rbr{\tilde{\Sigma}_{N_i}^{(i)}} \leq \gamma_{N_i}(CN_i^{-1}\bI_{N_i})$, which implies $N_i/2 \geq 4 \gamma_{N_i}(\tilde{\Sigma}_{N_i}^{(i)})$ if $N_i \geq \overline{T}$. 
 Therefore, applying statement 2 of \lemref{lem:pvu_mvr}, we have 
 \begin{equation}
     \label{eq:non_stat_pvu_pe}
        \max_{\bx \in \mX_{i}} \sigma_{\Sigma_{N_i}^{(i)}}(\bx; \bX_{N_i}^{(i)}) \leq \frac{4}{N_i} \sqrt{\rbr{\sum_{j=1}^{N_i} \rbr{\tilde{\lambda}_{j}^{(i)}}^2} \gamma_{N_i}\rbr{\tilde{\Sigma}_{N_i}^{(i)}}} ~~\mathrm{for~all~i \geq \overline{i}}.
 \end{equation}
\end{enumerate}
The equation inside the square root in r.h.s. of Eq.~\eqref{eq:non_stat_pvu_pe} can be bounded from above further as
\begin{align}
    \rbr{\sum_{j=1}^{N_i} \rbr{\tilde{\lambda}_{j}^{(i)}}^2} \gamma_{N_i}\rbr{\tilde{\Sigma}_{N_i}^{(i)}}
    &\leq \sbr{\sum_{j=1}^{N_i} \rbr{\rho_j^{(i)}}^2 + \sum_{j=1}^{N_i} \frac{V_T}{N_i}} \gamma_{N_i}\rbr{\tilde{\Sigma}_{N_i}^{(i)}} \\
    &\leq 2V_T \gamma_{N_i}\rbr{\frac{V_T}{N_i}\bI_{N_i}} \\
    &\leq 2V_T \gamma_{T}\rbr{\frac{V_T}{T}\bI_{T}} \\
    &\leq O\rbr{V_T \ln^{d+1} \frac{T^2}{V_T}}.
\end{align}
Since $\rho_j^{(i)} = \lambda_j^{(i)}$, the total cumulative regret is bounded from above with probability at least $1-\delta$ as
\begin{align}
    \label{eq:pe_bound_se_nsv}
    R_T &= \sum_{i \leq \overline{i}} \sum_{j=1}^{N_i} f(\bx^{\ast}) - f(\bx_j^{(i)})
    + \sum_{i > \overline{i}} \sum_{j=1}^{N_i} f(\bx^{\ast}) - f(\bx_j^{(i)}) \\
    &\leq \max\{8B\overline{T}, 2BN_1\}
    + 4 \beta^{1/2} \sum_{i > \overline{i}} N_i \max_{\bx \in \mX_{i-1}} \sigma_{\Sigma_{N_{i-1}}^{(i-1)}}(\bx; \bX_{N_{i-1}}^{(i-1)}) \\
    &\leq \max\{8B\overline{T}, 2BN_1\}
    + 32 \beta^{1/2} \sum_{i > \tilde{i}} \sqrt{\rbr{\sum_{j=1}^{N_{i-1}} \rbr{\tilde{\lambda}_{j}^{(i-1)}}^2} \gamma_{N_{i-1}}\rbr{\tilde{\Sigma}_{N_{i-1}}^{(i-1)}}} \\
    &\leq \max\{8B\overline{T}, 2BN_1\}
    + O\rbr{\beta^{1/2} (1 + \log_2 T) \sqrt{V_T \ln^{d+1}\frac{T^2}{V_T}}} \\
    &\leq O\rbr{(\ln T)\sqrt{V_T \rbr{\ln^{d+1}\frac{T^2}{V_T}} \rbr{\ln \frac{|\mX|}{\delta}}}},
\end{align}
where the first inequality follows from the standard analysis of the PE (e.g., Eqs.~\eqref{eq:pe_regret_start}--\eqref{eq:pe_regret_end}) with \lemref{lem:nsv_noisy_cb}. Furthermore, the second inequality uses the fact that the total number of batches is at most $1 + \log_2 T$.

\paragraph{For $k = \matk$.}
The proof is almost the same as that of the SE kernel, while 
we need to introduce the additional lower bound to set $\tilde{\lambda}_j^{(i)}$. 
Let us set $\overline{T}$ as $\overline{T} = \min\{T \in \N_+ \mid \forall t \geq T, t/2 \geq 4\gamma_t(\overline{\lambda}_t^2 \bI_t)\}$, where we set $\overline{\lambda}_t^2 = t^{-\frac{2\nu}{d}} \ln^{\frac{2\nu(1 + \alpha)}{d}} (1 + t)$ here, where $\alpha > 0$ is any fixed constant. 
Next, set $\tilde{\lambda}_j^{(i)}$ as $\rbr{\tilde{\lambda}_j^{(i)}}^2 = \max\cbr{\rbr{\lambda_j^{(i)}}^2, \frac{V_T}{N_i}, \overline{\lambda}_{N_i}^2}$.
Then, as with the proof for the case $k = \sek$, the following statement holds from statement 2 of \lemref{lem:pvu_mvr}:

\begin{equation}
        \max_{\bx \in \mX_{i}} \sigma_{\Sigma_{N_i}^{(i)}}(\bx; \bX_{N_i}^{(i)}) \leq \frac{4}{N_i} \sqrt{\rbr{\sum_{j=1}^{N_i} \rbr{\tilde{\lambda}_{j}^{(i)}}^2} \gamma_{N_i}\rbr{\tilde{\Sigma}_{N_i}^{(i)}}} ~~\mathrm{for~all~i \geq \overline{i}},
 \end{equation}
 where $\overline{i} \in \N_+$ is the first batch index such that $N_{\overline{i}} \geq \overline{T}$ holds. Furthermore, 
\begin{align}
    \rbr{\sum_{j=1}^{N_i} \rbr{\tilde{\lambda}_{j}^{(i)}}^2} \gamma_{N_i} \rbr{\tilde{\Sigma}_{N_i}^{(i)}}
    &\leq \sbr{\sum_{j=1}^{N_i} \rbr{\rho_j^{(i)}}^2 + \sum_{j=1}^{N_i} \frac{V_T}{N_i} + \sum_{j=1}^{N_i} \overline{\lambda}_{N_i}^2} \gamma_{N_i}\rbr{\tilde{\Sigma}_{N_i}^{(i)}} \\
    &\leq \rbr{2V_T + N_i \overline{\lambda}_{N_i}^2} \gamma_{N_i}\rbr{\tilde{\Sigma}_{N_i}^{(i)}}.
\end{align}
In the above inequality, we can see that $\gamma_{N_i}\rbr{\tilde{\Sigma}_{N_i}^{(i)}} \leq \gamma_{N_i}\rbr{\frac{V_T}{N_i}\bI_{N_i}} \leq \gamma_{T}\rbr{\frac{V_T}{T} \bI_{T}}$.
Furthermore, if $V_T/N_i \geq \overline{\lambda}_{N_i}^2$, then, $2V_T + N_i\overline{\lambda}_{N_i}^2 \leq 3V_T$, which 
implies
\begin{equation}
    \sqrt{\rbr{\sum_{j=1}^{N_i} \rbr{\tilde{\lambda}_{j}^{(i)}}^2} \gamma_{N_i} \rbr{\tilde{\Sigma}_{N_i}^{(i)}}} \leq \sqrt{3V_T \gamma_{T}\rbr{\frac{V_T}{T} \bI_{T}}} = \tilde{O}\rbr{V_T^{\frac{\nu}{2\nu+d}} T^{\frac{d}{2\nu+d}}}.
\end{equation}
On the other hand, $\gamma_{N_i}\rbr{\tilde{\Sigma}_{N_i}^{(i)}} \leq \gamma_{N_i}\rbr{\overline{\lambda}_{N_i}^2 \bI_{N_i}}$ also holds.
If $V_T/N_i \leq \overline{\lambda}_{N_i}^2$, then, $2V_T + N_i\overline{\lambda}_{N_i}^2 \leq 3N_i\overline{\lambda}_{N_i}^2$, which 
implies
\begin{equation}
    \sqrt{\rbr{\sum_{j=1}^{N_i} \rbr{\tilde{\lambda}_{j}^{(i)}}^2} \gamma_{N_i} \rbr{\tilde{\Sigma}_{N_i}^{(i)}}} \leq \sqrt{3N_i\overline{\lambda}_{N_i}^2 \gamma_{N_i}\rbr{\overline{\lambda}_{N_i}^2 \bI_{N_i}}}
    = \tilde{O}\rbr{N_i^{\frac{d - \nu}{d}}} \leq \begin{cases}
        \tilde{O}(1) ~~&\mathrm{if}~~d \leq \nu, \\
        \tilde{O}\rbr{T^{\frac{d-\nu}{d}}} ~~&\mathrm{if}~~d > \nu.
    \end{cases} 
\end{equation}
Therefore, since $T^{\frac{d}{2\nu + d}} \geq T^{\frac{d - \nu}{d}}$ if $d \leq 2\nu$ and $V_T = \Omega(1)$, we have
\begin{equation}
       \sqrt{\rbr{\sum_{j=1}^{N_i} \rbr{\tilde{\lambda}_{j}^{(i)}}^2} \gamma_{N_i} \rbr{\tilde{\Sigma}_{N_i}^{(i)}}} = \begin{cases}
        \tilde{O}\rbr{V_T^{\frac{\nu}{2\nu+d}} T^{\frac{d}{2\nu+d}}} ~~&\mathrm{if}~~d \leq 2\nu, \\
        \tilde{O}\rbr{\max\cbr{V_T^{\frac{\nu}{2\nu+d}} T^{\frac{d}{2\nu+d}}, T^{\frac{d-\nu}{d}}}} ~~&\mathrm{if}~~d > 2\nu.
    \end{cases}
\end{equation}
Hence, as with Eq.~\eqref{eq:pe_bound_se_nsv} of the proof for $k = \sek$, 
with probability at least $1- \delta$, we have
\begin{align}
    R_T &\leq 8B\overline{T}
    + 16 \beta^{1/2} (1 + \log_2 T) \sqrt{\rbr{\sum_{j=1}^{N_i} \rbr{\tilde{\lambda}_{j}^{(i)}}^2} \gamma_{N_i}\rbr{\tilde{\Sigma}_{N_i}^{(i)}}} \\
    & = \begin{cases}
        \tilde{O}\rbr{V_T^{\frac{\nu}{2\nu+d}} T^{\frac{d}{2\nu+d}}} ~~&\mathrm{if}~~d \leq 2\nu, \\
        \tilde{O}\rbr{\max\cbr{V_T^{\frac{\nu}{2\nu+d}} T^{\frac{d}{2\nu+d}}, T^{\frac{d-\nu}{d}}}} ~~&\mathrm{if}~~d > 2\nu.
    \end{cases}
\end{align}
Here, $V_T = O(T^{\frac{d-2\nu}{d}})$ implies $V_T^{\frac{\nu}{2\nu+d}} T^{\frac{d}{2\nu+d}} = O(T^{\frac{d-\nu}{d}})$, and $V_T = \Omega(T^{\frac{d-2\nu}{d}})$ implies vise versa; therefore, by combining the condition $V_T = \Omega(1)$, 
we have
\begin{align}
    R_T = \begin{cases}
        \tilde{O}\rbr{V_T^{\frac{\nu}{2\nu+d}} T^{\frac{d}{2\nu+d}}} ~~&\mathrm{if}~~d \leq 2\nu, \\
        \tilde{O}\rbr{V_T^{\frac{\nu}{2\nu+d}} T^{\frac{d}{2\nu+d}}} ~~&\mathrm{if}~~d > 2\nu~\mathrm{and}~V_T = \Omega\rbr{T^{\frac{d - 2\nu}{d}}}, \\
        \tilde{O}\rbr{T^{\frac{d-\nu}{d}}} ~~&\mathrm{if}~~d > 2\nu~\mathrm{and}~V_T = O\rbr{T^{\frac{d - 2\nu}{d}}}.
    \end{cases}
\end{align}
\end{proof}

\subsection{Proof of \thmref{thm:vamvr}}
\begin{proof}
Set $\beta^{1/2} = \rbr{B + \sqrt{2 \ln \frac{2|\mX|}{\delta}}}$.
From \lemref{lem:nsv_noisy_cb}, with probability at least $1-\delta$, we have
\begin{align}
    f(\bx^{\ast}) - f(\hat{\bx}_T)
    &\leq \mu_{\Sigma_T}(\bx^{\ast}; \bX_T, \by_T) + \beta^{1/2} \sigma_{\Sigma_T}(\bx^{\ast}; \bX_T) - \mu_{\Sigma_T}(\hat{\bx}_T; \bX_T, \by_T) + \beta^{1/2} \sigma_{\Sigma_T}(\hat{\bx}_T; \bX_T) \\
    &\leq 2 \beta^{1/2} \max_{\bx \in \mX} \sigma_{\Sigma_T}(\bx; \bX_T).
\end{align}
We first consider the case where $k = \sek$.
Let us respectively define $\overline{T}$, $\tilde{\lambda}_t$, $\tilde{\Sigma}_T$ as $\overline{T} = \min\{T \in \N_+ \mid \forall t \geq T, t/2 \geq 4\gamma_t(Ct^{-1}\bI_t)\}$, $\tilde{\lambda}_t^2 = \max\{\lambda_t^2, V_T/T\}$, 
and $\tilde{\Sigma}_T = \mathrm{diag}(\tilde{\lambda}_1^2, \ldots, \tilde{\lambda}_T^2)$. Here, $C > 0$ is the constant such that $V_T \geq C$. Note that the existence of $C$ is guaranteed by the assumption $V_T = \Omega(1)$.
Then, as with the arguments of the proof of \thmref{thm:vape}, 
\begin{equation}
    \label{eq:nsv_mvr_pv}
        \max_{\bx \in \mX} \sigma_{\Sigma_{T}}(\bx; \bX_{T}) \leq \frac{4}{T} \sqrt{\rbr{\sum_{t=1}^{T} \tilde{\lambda}_t^2} \gamma_{T}\rbr{\tilde{\Sigma}_{T}}} ~~\mathrm{for~all~T \geq \overline{T}}.
\end{equation}
From the definition of $\tilde{\lambda}_t$, the above inequality implies, for all $T \geq \overline{T}$, 
\begin{align}
    f(\bx^{\ast}) - f(\hat{\bx}_T)
    &\leq \frac{8\beta^{1/2}}{T} \sqrt{\rbr{\sum_{t=1}^{T} \tilde{\lambda}_t^2} \gamma_{T}\rbr{\tilde{\Sigma}_{T}}} \\
    &\leq \frac{8\beta^{1/2}}{T} \sqrt{\rbr{\sum_{t=1}^{T} \rho_t^2 + \sum_{t=1}^{T} \frac{V_T}{T}} \gamma_{T}\rbr{\tilde{\Sigma}_{T}}} \\
    &\leq \frac{8\beta^{1/2}}{T} \sqrt{2V_T \gamma_{T}\rbr{\frac{V_T}{T}\bI_T}} \\
    &= O\rbr{ \sqrt{\frac{V_T}{T^2} \rbr{\ln^{d+1} \frac{T^2}{V_T}} \rbr{\ln \frac{|\mX|}{\delta}}}}.
\end{align}

Next, when $k = \matk$, we set 
$\overline{T}$, $\tilde{\lambda}_t$, $\tilde{\Sigma}_T$ as $\overline{T} = \min\{T \in \N_+ \mid \forall t \geq T, t/2 \geq 4\gamma_t(\overline{\lambda}_t^2\bI_t)\}$, $\tilde{\lambda}_t^2 = \max\{\lambda_t^2, V_T/T, \overline{\lambda}_T^2\}$, and $\tilde{\Sigma}_T = \mathrm{diag}(\tilde{\lambda}_1^2, \ldots, \tilde{\lambda}_T^2)$, respectively.
Here, $\overline{\lambda}_t^2 = t^{-\frac{2\nu}{d}} \ln^{\frac{2\nu(1 + \alpha)}{d}} (1 + t)$, where $\alpha > 0$ is any fixed constant. 
Then, as with the arguments of the proof of \thmref{thm:vape}, statement \eqref{eq:nsv_mvr_pv} also holds for $k = \matk$. 
Therefore, 
\begin{align}
    f(\bx^{\ast}) - f(\hat{\bx}_T)
    &\leq \frac{8\beta^{1/2}}{T} \sqrt{\rbr{\sum_{t=1}^{T} \tilde{\lambda}_t^2} \gamma_{T}\rbr{\tilde{\Sigma}_{T}}} \\
    &\leq \frac{8\beta^{1/2}}{T} \sqrt{\rbr{2V_T + T\overline{\lambda}_T^2}\gamma_{T}\rbr{\tilde{\Sigma}_{T}}}.
\end{align}
As with the proof of \thmref{thm:vape}, considering the two cases: $V_T/T \geq \overline{\lambda}_T^2$ or not, we obtain 
\begin{align}
    \sqrt{\rbr{2V_T + T\overline{\lambda}_T^2}\gamma_{T}\rbr{\tilde{\Sigma}_{T}}}
    = \begin{cases}
        \tilde{O}\rbr{V_T^{\frac{\nu}{2\nu+d}} T^{\frac{d}{2\nu+d}}} ~&\mathrm{if}~\frac{V_T}{T} \geq \overline{\lambda}_T^2, \\
        \tilde{O}\rbr{T^{\frac{d-\nu}{d}}} ~&\mathrm{if}~\frac{V_T}{T} < \overline{\lambda}_T^2.
    \end{cases}
\end{align}
Therefore, we have
\begin{equation}
    f(\bx^{\ast}) - f(\hat{\bx}_T)
    = \tilde{O}\rbr{\max\cbr{T^{-\frac{\nu}{d}}, V_T^{\frac{\nu}{2\nu+d}} T^{-\frac{2\nu}{2\nu+d}}}}.
\end{equation}
Finally, since $V_T = O(T^{\frac{d - 2\nu}{d}})$ implies $V_T^{\frac{\nu}{2\nu + d}} T^{-\frac{2\nu}{2\nu +d}} = O(T^{-\frac{\nu}{d}})$, and $V_T = \Omega(1)$, we have
\begin{equation}
    f(\bx^{\ast}) - f(\hat{\bx}_T)
    = 
    \begin{cases}
        \tilde{O}\rbr{V_T^{\frac{\nu}{2\nu+d}} T^{-\frac{2\nu}{2\nu+d}}} ~~&\mathrm{if}~~d \leq 2\nu, \\
        \tilde{O}\rbr{V_T^{\frac{\nu}{2\nu+d}} T^{-\frac{2\nu}{2\nu+d}}} ~~&\mathrm{if}~~d > 2\nu~\mathrm{and}~V_T = \Omega\rbr{T^{\frac{d - 2\nu}{d}}}, \\
        \tilde{O}\rbr{T^{-\frac{\nu}{d}}} ~~&\mathrm{if}~~d > 2\nu~\mathrm{and}~V_T = O\rbr{T^{\frac{d - 2\nu}{d}}}.
    \end{cases}
\end{equation}
\end{proof}

\section{Pseudo Code of PE and MVR}
\label{sec:pseudo_code_pe_mvr}
Algorithms~\ref{alg:pe} and \ref{alg:mvr} show the pseudo-code of PE and MVR, respectively. In the PE algorithm, we denote $\bx_j^{(i)}$, $y_j^{(i)}$, and $\epsilon_j^{(i)}$ as the selected query point, observed output, and the observation noise at step $j$ on $i$-th batch, respectively.

\begin{algorithm}[t!]
    \caption{Phased Elimination (PE)}
    \label{alg:pe}
    \begin{algorithmic}[1]
        \REQUIRE Confidence width parameter $\beta^{1/2} > 0$, initial batch size $N_1$, noise variance parameter $\lambda^2 \geq 0$.
        \STATE Initialize the potential maximizer $\mathcal{X}_{1} \leftarrow \mathcal{X}$.
        \FOR {$i = 1, 2, \ldots$}
            \STATE $\bX_0^{(i)} = \emptyset$.
            \FOR {$j = 1, \ldots, N_i$}
                \STATE $\bm{x}_{j}^{(i)} \leftarrow \mathrm{arg~max}_{\bm{x} \in \mathcal{X}_{i}} \sigma_{\lambda^2\bI_{j-1}}(\bm{x}; \bX_{j-1}^{(i)})$.
                \STATE Observe $y_{j}^{(i)} = f\left(\bm{x}_{j}^{(i)}\right) + \epsilon_{j}^{(i)}$.
                \STATE $\bX_{j}^{(i)} \leftarrow [\bx_m^{(i)}]_{m \in [j]}$.
            \ENDFOR
            \STATE $\bm{y}^{(i)} \leftarrow \sbr{y_{j}^{(i)}}_{k \in [N_i]}$.
            \STATE Calculate $\text{lcb}_{i}(\cdot)$ and $\text{ucb}_{i}(\cdot)$ as
            \begin{align*}
                \text{lcb}_{i}(\bm{x}) &= \mu_{\lambda^2\bI_{N_i}}(\bm{x}; \bX_{N_i}^{(i)}, \by^{(i)}) -\beta^{1/2} \sigma_{\lambda^2\bI_{N_i}}(\bm{x}; \bX_{N_i}^{(i)}), \\
                \text{ucb}_{i}(\bm{x}) &= \mu_{\lambda^2\bI_{N_i}}(\bm{x}; \bX_{N_i}^{(i)}, \by^{(i)}) + \beta^{1/2} \sigma_{\lambda^2\bI_{N_i}}(\bm{x}; \bX_{N_i}^{(i)}).
            \end{align*}
            \STATE $\mathcal{X}_{i+1} \leftarrow \left\{ \bm{x} \in \mathcal{X}_{i} ~\middle|~ \text{ucb}_{i}(\bm{x}) \geq \max\limits_{\tilde{\bm{x}} \in \mathcal{X}_{i}} \text{lcb}_{i}(\tilde{\bm{x}}) \right\}$.
            \STATE Update the batch size $N_{i+1} \leftarrow 2N_i$.
        \ENDFOR
    \end{algorithmic}
\end{algorithm}

\begin{algorithm}[t!]
    \caption{Maximum Variance Reduction (MVR)}
    \label{alg:mvr}
    \begin{algorithmic}[1]
        \REQUIRE Noise variance parameter $\lambda^2 \geq 0$.
        \STATE $\bX_{0} = \emptyset$.
        \FOR {$t = 1, 2, \ldots, T$}
            \STATE $\bm{x}_{t} \leftarrow \mathrm{arg~max}_{\bm{x} \in \mathcal{X}} \sigma_{\lambda^2 \bI_{t-1}}(\bm{x}; \bX_{t-1})$.
            \STATE Observe $y_t$ and construct data $\bX_{t} \coloneqq [\bx_j]_{j \in [t]}$.
        \ENDFOR
        \STATE Return the estimated solution $\hat{\bx}_T \coloneqq \mathrm{arg~max}_{\bm{x} \in \mathcal{X}} \mu_{\lambda^2 \bI_{T}}(\bm{x}; \bX_{T}; \by_T)$, where $\bm{y}_T = \sbr{y_{t}}_{t \in [T]}$.
    \end{algorithmic}
\end{algorithm}

\section{Pseudo Code of VA-PE and VA-MVR}
\label{sec:pseudo_code}
Algorithms~\ref{alg:vape} and \ref{alg:vamvr} show the 
pseudo-code of VA-PE and VA-MVR described in \secref{sec:nsv}, respectively.

\begin{algorithm}[t!]
    \caption{Variance-aware phased elimination (VA-PE)}
    \label{alg:vape}
    \begin{algorithmic}[1]
        \REQUIRE Confidence width parameter $\beta^{1/2} > 0$, finite input set $\mathcal{X}$, initial batch size $N_1 \in \N_+$.
            \STATE Initialize the potential maximizer $\mX_{1} \leftarrow \mX$.
            \FOR {$i = 1, 2, \ldots$}
            \STATE $\bX_0^{(i)} = \emptyset$, $\Sigma_0^{(i)} = \emptyset$.
            \FOR {$j = 1, \ldots, N_i$}
                \STATE $\bm{x}_{j}^{(i)} \leftarrow \mathrm{arg~max}_{\bm{x} \in \mathcal{X}_{i}} \sigma_{\Sigma_{j-1}^{(i)}}(\bm{x}; \bX_{j-1}^{(i)})$.
                \STATE Observe $y_{j}^{(i)} = f\left(\bm{x}_{j}^{(i)}\right) + \epsilon_{j}^{(i)}$.
                \STATE Obtain the variance proxy $\rbr{\rho_j^{(i)}}^2$.
                \STATE $\rbr{\lambda_j^{(i)}}^2 \leftarrow \rbr{\rho_j^{(i)}}^2$.
                \STATE $\bX_{j}^{(i)} \leftarrow [\bx_m^{(i)}]_{m \in [j]}$, $\Sigma_j^{(i)} \leftarrow \mathrm{diag}\rbr{\rbr{\lambda_1^{(i)}}^2, \ldots, \rbr{\lambda_j^{(i)}}^2}$
            \ENDFOR
            \STATE $\bm{y}^{(i)} \leftarrow \sbr{y_{j}^{(i)}}_{k \in [N_i]}$.
            \STATE Calculate $\text{lcb}_{i}(\cdot)$ and $\text{ucb}_{i}(\cdot)$ as
            \begin{align*}
                \text{lcb}_{i}(\bm{x}) &= \mu_{\Sigma_{N_i}^{(i)}}\rbr{\bm{x}; \bX_{N_i}^{(i)}, \by^{(i)}} -\beta^{1/2} \sigma_{\Sigma_{N_i}^{(i)}}\rbr{\bm{x}; \bX_{N_i}^{(i)}}, \\
                \text{ucb}_{i}(\bm{x}) &= \mu_{\Sigma_{N_i}^{(i)}}\rbr{\bm{x}; \bX_{N_i}^{(i)}, \by^{(i)}} + \beta^{1/2} \sigma_{\Sigma_{N_i}^{(i)}}\rbr{\bm{x}; \bX_{N_i}^{(i)}}.
            \end{align*}
            \STATE $\mathcal{X}_{i+1} \leftarrow \left\{ \bm{x} \in \mathcal{X}_{i} ~\middle|~ \text{ucb}_{i}(\bm{x}) \geq \max\limits_{\tilde{\bm{x}} \in \mathcal{X}_{i}} \text{lcb}_{i}(\tilde{\bm{x}}) \right\}$.
            \STATE Update the batch size $N_{i+1} \leftarrow 2N_i$.
        \ENDFOR
    \end{algorithmic}
\end{algorithm}

\begin{algorithm}[t!]
    \caption{Variance-aware maximum variance reduction (VA-MVR)}
    \label{alg:vamvr}
    \begin{algorithmic}[1]
        \REQUIRE Finite input set $\mathcal{X}$.
        \STATE $\bX_{0} = \emptyset$, $\Sigma_0 = \emptyset$.
        \FOR {$t = 1, 2, \ldots, T$}
            \STATE $\bm{x}_{t} \leftarrow \mathrm{arg~max}_{\bm{x} \in \mathcal{X}} \sigma_{\Sigma_{t-1}}(\bm{x}; \bX_{t-1})$.
            \STATE Observe $y_t$, and obtain the variance proxy $\rho_t^2$.
            \STATE Construct data $\bX_{t} \coloneqq [\bx_j]_{j \in [t]}$.
            \STATE $\lambda_t^2 \leftarrow \rho_t^2$, $\Sigma_t \leftarrow \mathrm{diag}(\lambda_1^2, \ldots, \lambda_t^2)$.
        \ENDFOR
        \STATE Return the estimated solution $\hat{\bx}_T \coloneqq \mathrm{arg~max}_{\bm{x} \in \mathcal{X}} \mu_{\Sigma_T}(\bm{x}; \bX_{T}; \by_T)$, where $\bm{y}_T = \sbr{y_{t}}_{t \in [T]}$.
    \end{algorithmic}
\end{algorithm}

\section{VA-GP-UCB Algorithm}
\label{sec:va_gp_ucb_proof}

We consider the GP-UCB-based algorithm as the extension of the variance-aware UCB-style algorithms~\cite{zhou2021nearly,zhang2021improved,zhou2022computationally} in linear bandits.

\paragraph{Algorithm.}
\algoref{alg:vagpucb} shows the pseudo-code of VA-GP-UCB algorithm. 
Overall algorithm construction is almost the same as the GP-UCB algorithm with 
heteroscedastic GP-model. The only difference is the application of the lower threshold $\zeta > 0$ for the variance parameter (Line~7 in \algoref{alg:vagpucb}).
Intuitively, such a lower threshold has a role in preventing an explosion of the MIG in the first term of the elliptical potential count lemma~(\lemref{lem:epcl_nst}) in the analysis. Note that such a lower threshold is also leveraged in the existing variance-aware UCB-style algorithms~\cite{zhou2021nearly,zhang2021improved,zhou2022computationally}. 
We believe this algorithm construction is the most natural kernelized extension of the methods proposed in \cite{zhou2021nearly,zhang2021improved,zhou2022computationally}. 

\paragraph{Theoretical analysis for VA-GP-UCB.}

\begin{lemma}[Adaptive confidence bound in non-stationary variance, e.g., Lemma 7 in \cite{kirschner18heteroscedastic}]
\label{lem:hetero_ada_cb}
Fix any strictly positive sequence $(\lambda_t)_{t\in\N_+}$ and 
define $\Sigma_t$ as $\Sigma_t = \mathrm{diag}(\lambda_1^2, \ldots, \lambda_t^2)$.
Suppose Assumptions~\ref{asmp:smoothness} and \ref{asmp:noise} holds 
with $\rho_t \leq \lambda_t$. Then, for any algorithm, the following event holds with probability at least $1 - \delta$:
\begin{equation}
    \label{eq:ada_cb_ns}
    \forall t \in \N_+,~\forall \bx \in \mX,~|\mu_{\Sigma_{t-1}}(\bx; \bX_{t-1}, \by_{t-1}) - f(\bx)| \leq \rbr{B + \sqrt{2 \gamma_t(\Sigma_t) + 2 \ln \frac{1}{\delta}}} \sigma_{\Sigma_{t-1}}(\bx; \bX_{t-1}).
\end{equation}
\end{lemma}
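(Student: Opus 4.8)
The plan is to lift the heteroscedastic estimator into the feature space of $k$ and reduce the claim to the standard unit-variance self-normalized concentration inequality. Let $\phi: \mX \to \Hilbert$ be a feature map with $k(\bx,\tilde\bx) = \langle\phi(\bx),\phi(\tilde\bx)\rangle$ and write $f(\cdot) = \langle\theta,\phi(\cdot)\rangle$ with $\|\theta\|_{\Hilbert} = \|f\|_{\mH_k} \leq B$. Define the precision-weighted design operator $V_{t-1} = \mathrm{Id} + \sum_{s=1}^{t-1}\lambda_s^{-2}\phi(\bx_s)\phi(\bx_s)^{\top}$ and the weighted noise $\bm{S}_{t-1} = \sum_{s=1}^{t-1}\lambda_s^{-2}\phi(\bx_s)\epsilon_s$. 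First I would record the two feature-space identities, both obtained from the push-through (Woodbury/Sylvester) identity exactly as in the homoscedastic case: the posterior mean equals $\mu_{\Sigma_{t-1}}(\bx;\bX_{t-1},\by_{t-1}) = \langle\phi(\bx),\hat\theta_{t-1}\rangle$ with $\hat\theta_{t-1} = V_{t-1}^{-1}\sum_{s=1}^{t-1}\lambda_s^{-2}\phi(\bx_s)y_s$, and the posterior variance equals $\sigma^2_{\Sigma_{t-1}}(\bx;\bX_{t-1}) = \langle\phi(\bx),V_{t-1}^{-1}\phi(\bx)\rangle = \|\phi(\bx)\|_{V_{t-1}^{-1}}^2$.

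Next, substituting $y_s = \langle\theta,\phi(\bx_s)\rangle + \epsilon_s$ and using $\sum_{s}\lambda_s^{-2}\phi(\bx_s)\phi(\bx_s)^{\top}\theta = (V_{t-1}-\mathrm{Id})\theta$, the estimation error collapses to $\hat\theta_{t-1}-\theta = V_{t-1}^{-1}\bm{S}_{t-1} - V_{t-1}^{-1}\theta$. Pairing with $\phi(\bx)$ and applying Cauchy--Schwarz in the $V_{t-1}^{-1}$ inner product gives
\begin{equation*}
\abs{\mu_{\Sigma_{t-1}}(\bx;\bX_{t-1},\by_{t-1}) - f(\bx)} \leq \sigma_{\Sigma_{t-1}}(\bx;\bX_{t-1})\rbr{\|\theta\|_{V_{t-1}^{-1}} + \|\bm{S}_{t-1}\|_{V_{t-1}^{-1}}}.
\end{equation*}
Since $V_{t-1} \succeq \mathrm{Id}$, the bias term is controlled by $\|\theta\|_{V_{t-1}^{-1}} \leq \|\theta\|_{\Hilbert} \leq B$, which supplies the $B$ summand in the claimed width.

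For the noise term I would rescale: set $\tilde\phi_s = \lambda_s^{-1}\phi(\bx_s)$ and $\tilde\epsilon_s = \lambda_s^{-1}\epsilon_s$. By \asmpref{asmp:noise} together with $\rho_s \leq \lambda_s$, each $\tilde\epsilon_s$ is $1$-sub-Gaussian, and one checks $V_{t-1} = \mathrm{Id} + \sum_s\tilde\phi_s\tilde\phi_s^{\top}$ and $\bm{S}_{t-1} = \sum_s\tilde\phi_s\tilde\epsilon_s$; that is, the weighted problem is exactly a unit-variance-proxy self-normalized problem with unit regularization. Invoking the time-uniform self-normalized bound (method of mixtures in $\Hilbert$, the kernelized inequality underlying Lemma~7 of \cite{kirschner18heteroscedastic}) yields, with probability at least $1-\delta$ and simultaneously for all $t$,
\begin{equation*}
\|\bm{S}_{t-1}\|_{V_{t-1}^{-1}}^2 \leq \ln\det\rbr{V_{t-1}} + 2\ln\frac{1}{\delta}.
\end{equation*}
It then remains to identify the log-determinant with the information gain: Sylvester's identity gives $\ln\det(V_{t-1}) = \ln\det\rbr{\bI_{t-1} + \Sigma_{t-1}^{-1}\bK(\bX_{t-1},\bX_{t-1})} = 2\,I_{\Sigma_{t-1}}(\bm{f}(\bX_{t-1}),\by_{t-1}) \leq 2\gamma_{t-1}(\Sigma_{t-1}) \leq 2\gamma_t(\Sigma_t)$, where the final inequality is the monotonicity of the MIG in the number of observations. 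Combining the three displays delivers $\abs{\mu_{\Sigma_{t-1}}(\bx) - f(\bx)} \leq \rbr{B + \sqrt{2\gamma_t(\Sigma_t) + 2\ln(1/\delta)}}\,\sigma_{\Sigma_{t-1}}(\bx;\bX_{t-1})$.

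The main obstacle I anticipate is the rigorous justification of the time-uniform self-normalized inequality in the infinite-dimensional $\Hilbert$: the Laplace/method-of-mixtures argument must be carried out with a Gaussian mixing measure on $\Hilbert$ and a supermartingale (stopping-time) argument to obtain the ``for all $t$'' guarantee, with the determinant interpreted as a Fredholm determinant that reduces to the finite-dimensional $\ln\det(\bI + \Sigma^{-1}\bK)$. Once the rescaling reduction to the unit-variance case is in place this is entirely standard, so the only genuinely new ingredient is the rescaling step enabled by $\rho_s \leq \lambda_s$, which is why citing \cite{kirschner18heteroscedastic} suffices.
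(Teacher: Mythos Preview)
Your proposal is correct and is precisely the standard argument behind the cited result: the paper does not supply its own proof of this lemma but simply invokes Lemma~7 of \cite{kirschner18heteroscedastic}, whose proof proceeds exactly via the feature-space rewriting, the bias/noise split with $\|\theta\|_{V_{t-1}^{-1}}\le B$, the rescaling $\tilde\phi_s=\lambda_s^{-1}\phi(\bx_s)$, $\tilde\epsilon_s=\lambda_s^{-1}\epsilon_s$ to reduce to unit variance proxy, and the kernelized self-normalized inequality followed by the Sylvester identity $\ln\det V_{t-1}=\ln\det(\bI_{t-1}+\Sigma_{t-1}^{-1}\bK)\le 2\gamma_t(\Sigma_t)$. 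There is nothing to add; your sketch matches the intended route and the one technical caveat you flag (the method-of-mixtures supermartingale in $\Hilbert$) is exactly what the cited reference establishes.
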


\begin{lemma}[General regret bound of VA-GP-UCB]
    \label{lem:gen_reg_vagpucb}
    Fix any $\zeta > 0$ and $\delta \in (0, 1)$.
    Suppose Assumptions~\ref{asmp:smoothness} and \ref{asmp:noise} hold. Then, when running \algoref{alg:vagpucb} with $\beta_t^{1/2} = B + \sqrt{2 \gamma_t(\Sigma_t) + 2 \ln \frac{1}{\delta}}$, 
    the following two statements hold with probability at least $1-\delta$:
    \begin{itemize}
        \item The cumulative regret $R_T$ of VA-GP-UCB satisfies
        \begin{equation}
            R_T \leq 2B \min \cbr{4 \overline{\gamma}\rbr{4\gamma_T(\Sigma_T), \zeta^2}, 4\gamma_T(\Sigma_T)} + 4 \beta_T^{1/2} \sqrt{\rbr{V_T + \zeta^2 T} \gamma_T(\Sigma_T)}.
        \end{equation}
        \item The simple regret $r_T$ of VA-GP-UCB satisfies
        \begin{equation}
            r_T \leq \frac{2B}{T} \min \cbr{4 \overline{\gamma}\rbr{4\gamma_T(\Sigma_T), \zeta^2}, 4\gamma_T(\Sigma_T)} + \frac{4\beta_T^{1/2}}{T} \sqrt{\rbr{V_T + \zeta^2 T} \gamma_T(\Sigma_T)}
        \end{equation}
        In the above upper bound, the estimated solution $\hat{\bx}_T$ is defined as 
        $\hat{\bx}_T = \bx_{\tilde{t}}$ with $\tilde{t} \in \mathrm{arg~max}_{t \in [T]} \mathrm{lcb}_t(\bx_t)$. Here, $\mathrm{lcb}_t(\bx_t)$ is defined as $\mathrm{lcb}_t(\bx_t) = \mu_{\Sigma_{t-1}}(\bm{x}_t; \bX_{t-1}, \by_{t-1}) - \beta^{1/2}_t \sigma_{\Sigma_{t-1}}(\bm{x}_t; \bX_{t-1})$.
    \end{itemize}
\end{lemma}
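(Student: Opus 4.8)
The plan is to condition on the high-probability event of \lemref{lem:hetero_ada_cb} (which holds with probability at least $1-\delta$ and is applicable because the lower-thresholding $\lambda_t^2 = \max\cbr{\rho_t^2, \zeta^2}$ guarantees $\rho_t \le \lambda_t$), and then run the standard GP-UCB regret decomposition fused with the elliptical potential count lemma \lemref{lem:epcl_nst}. On this event, the rule $\bx_t \in \argmax_{\bx \in \mX}\mathrm{ucb}_t(\bx)$ together with the validity of the confidence bounds first yields the per-round estimate $f(\bx^{\ast}) - f(\bx_t) \le \mathrm{ucb}_t(\bx_t) - \mathrm{lcb}_t(\bx_t) = 2\beta_t^{1/2}\sigma_{\Sigma_{t-1}}(\bx_t; \bX_{t-1})$, using $f(\bx^{\ast}) \le \mathrm{ucb}_t(\bx^{\ast}) \le \mathrm{ucb}_t(\bx_t)$.

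Next I would partition $[T]$ exactly as in \lemref{lem:epcl_nst}, namely $\mT = \cbr{t \in [T] \mid \lambda_t^{-1}\sigma_{\Sigma_{t-1}}(\bx_t; \bX_{t-1}) \le 1}$ and $\mT^c = [T]\setminus\mT$. \lemref{lem:epcl_nst} bounds $|\mT^c| \le \min\cbr{4\overline{\gamma}(4\gamma_T(\Sigma_T), \underline{\lambda}_T^2), 4\gamma_T(\Sigma_T)}$; since the threshold forces $\underline{\lambda}_T^2 = \min_t \lambda_t^2 \ge \zeta^2$ and the standard MIG upper bound $\overline{\gamma}(\cdot, \lambda^2)$ is non-increasing in its noise argument, this gives the count $\min\cbr{4\overline{\gamma}(4\gamma_T(\Sigma_T), \zeta^2), 4\gamma_T(\Sigma_T)}$ appearing in the claim. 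For the $\mT$ part I would reuse the chain \eqref{eq:nst_mvr}--\eqref{eq:nst_mig}: Cauchy--Schwarz, then $\min\cbr{1, a} \le 2\ln(1+a)$, then the information-gain identity, giving $\sum_{t\in\mT}\sigma_{\Sigma_{t-1}}(\bx_t; \bX_{t-1}) \le 2\sqrt{\rbr{\sum_t \lambda_t^2}\gamma_T(\Sigma_T)} \le 2\sqrt{(V_T + \zeta^2 T)\gamma_T(\Sigma_T)}$, where the final step uses $\lambda_t^2 = \max\cbr{\rho_t^2,\zeta^2} \le \rho_t^2 + \zeta^2$.

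For the cumulative bound I would then combine: over $\mT^c$ I bound each instantaneous regret by the trivial $f(\bx^{\ast})-f(\bx_t) \le 2B$ (from $\norm{f}_\infty \le \norm{f}_{\mH_k} \le B$ by \asmpref{asmp:smoothness}), contributing $2B|\mT^c|$, and over $\mT$ I use $\beta_t^{1/2} \le \beta_T^{1/2}$ (monotonicity in $t$) with the sum bound above, contributing $4\beta_T^{1/2}\sqrt{(V_T+\zeta^2 T)\gamma_T(\Sigma_T)}$; adding gives the stated $R_T$ bound. The simple-regret bound is the delicate part. Here I would note that for $\hat\bx_T = \bx_{\tilde t}$ with $\tilde t \in \argmax_t \mathrm{lcb}_t(\bx_t)$, the inequalities $f(\bx_{\tilde t}) \ge \mathrm{lcb}_{\tilde t}(\bx_{\tilde t}) \ge \mathrm{lcb}_t(\bx_t)$ and $f(\bx^{\ast}) \le \mathrm{ucb}_t(\bx_t)$ hold for \emph{every} $t$, so $r_T \le 2\beta_t^{1/2}\sigma_{\Sigma_{t-1}}(\bx_t; \bX_{t-1})$ for all $t$, while trivially $r_T \le 2B$. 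Thus $r_T \le \min\cbr{2B,\, 2\beta_t^{1/2}\sigma_{\Sigma_{t-1}}(\bx_t; \bX_{t-1})}$ for each $t$, and averaging over $t$ gives $r_T \le \frac1T\sum_t \min\cbr{2B,\, 2\beta_t^{1/2}\sigma_{\Sigma_{t-1}}(\bx_t; \bX_{t-1})}$; splitting this average by applying the $2B$ bound on $\mT^c$ and the posterior-std bound on $\mT$ reproduces exactly the cumulative bound divided by $T$.

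The step I expect to be the main obstacle is securing the $B$ (rather than $\beta_T^{1/2}$) factor in the first term of the simple-regret bound: a naive lcb-only argument gives $2\beta_t^{1/2}\sigma \le 2\beta_T^{1/2}$ on $\mT^c$, which is strictly weaker. The resolution is that $r_T$ is a single scalar dominated simultaneously by $2B$ and by every per-round gap $2\beta_t^{1/2}\sigma$, so one may take the cheaper of the two bounds index-by-index inside the average. A secondary point requiring care is the passage from $\underline{\lambda}_T^2$ to $\zeta^2$ in the count term, which hinges on the thresholding ensuring $\underline{\lambda}_T^2 \ge \zeta^2$ together with monotonicity of $\overline{\gamma}$ in its noise argument.
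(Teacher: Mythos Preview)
Your proposal is correct and follows essentially the same route as the paper: condition on the confidence event of \lemref{lem:hetero_ada_cb}, split $[T]$ into $\mT$ and $\mT^c$ via \lemref{lem:epcl_nst}, bound $\mT^c$ rounds by $2B$ each and $\mT$ rounds by the Cauchy--Schwarz/information-gain chain, and for the simple regret average the per-round bounds using the $\mathrm{lcb}$-maximizer property. Your explicit treatment of the passage $\underline{\lambda}_T^2 \ge \zeta^2 \Rightarrow \overline{\gamma}(\cdot,\underline{\lambda}_T^2) \le \overline{\gamma}(\cdot,\zeta^2)$ is in fact slightly more careful than the paper, which applies \lemref{lem:epcl_nst} with $\zeta^2$ directly without comment.
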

\begin{proof}
    From the construction of $\lambda_t^2$ in \algoref{alg:vagpucb}
    and the definition of $\beta^{1/2}$, the event \eqref{eq:ada_cb_ns} holds with probability at least $1-\delta$.
    Hereafter, we suppose the event \eqref{eq:ada_cb_ns} holds.
    Furthermore, we set $\mT = \{t \in [T] \mid \lambda_t^{-1} \sigma_{\Sigma_{t-1}}(\bx_{t}; \bX_{t-1}) \leq 1\}$
    and $\mT^c = \{t \in [T] \mid \lambda_t^{-1} \sigma_{\Sigma_{t-1}}(\bx_{t}; \bX_{t-1}) > 1\}$. 
    \paragraph{Cumulative regret upper bound.}
    We decompose $R_T$ as $R_T = \sum_{t \in \mT} f(\bx^{\ast}) - f(\bx_t) + \sum_{t \in \mT^c} f(\bx^{\ast}) - f(\bx_t)$, 
    and consider the upper bound of each term separately.
    First, the first term satisfies
    \begin{align}
        \sum_{t \in \mT} f(\bx^{\ast}) - f(\bx_t)
        &\leq 2 \sum_{t \in \mT} \beta_t^{1/2} \sigma_{\Sigma_{t-1}}(\bx; \bX_{t-1}) \\
        &\leq 2 \beta_T^{1/2} \sum_{t \in \mT}  \sigma_{\Sigma_{t-1}}(\bx; \bX_{t-1}) \\
        \label{eq:ub_ft}
        &\leq 4 \beta_T^{1/2} \sqrt{\rbr{\sum_{t \in [T]} \lambda_t^2} \gamma_T(\Sigma_T)} \\
        &\leq 4 \beta_T^{1/2} \sqrt{\rbr{V_T + \zeta^2 T} \gamma_T(\Sigma_T)},
    \end{align}
    where the first inequality follows from the event \eqref{eq:ada_cb_ns} and the UCB-selection rule of $\bx_t$, 
    and the third inequality follows from the same arguments as Eqs.~\eqref{eq:nst_mT}--\eqref{eq:nst_mig}. 
    Regarding the second term, from the extension of elliptical potential count lemma (\lemref{lem:epcl_nst}), 
    we have
    \begin{align}
        \sum_{t \in \mT^c} f(\bx^{\ast}) - f(\bx_t)
        &\leq 2B |\mT^c| \\
        \label{eq:ub_st}
        &\leq 2B \min \cbr{4 \overline{\gamma}\rbr{4\gamma_T(\Sigma_T), \zeta^2}, 4\gamma_T(\Sigma_T)}.
    \end{align}
    From Eqs.~\eqref{eq:ub_ft} and \eqref{eq:ub_st}, 
    we obtain the desired result.

    \paragraph{Simple regret upper bound.}
    From the definition of $\hat{\bx}_T$, for any $t \in \mT$, 
    we have
    \begin{align}
        f(\bx^{\ast}) - f(\hat{\bx}_T)
        &\leq \mathrm{ucb}_t(\bx_t) - \mathrm{lcb}_{\tilde{t}}(\bx_{\tilde{t}}) \\
        &\leq \mathrm{ucb}_t(\bx_t) - \mathrm{lcb}_{t}(\bx_{t}) \\
        &\leq 2\beta_T^{1/2} \sigma_{\Sigma_{t-1}}(\bx; \bX_{t-1}).
    \end{align}
    Furthermore, for any $t \in \mT^c$, 
    we have $f(\bx^{\ast}) - f(\hat{\bx}_T) \leq 2B$.
    By taking the average of the above inequalities, 
    we have
    \begin{align}
        f(\bx^{\ast}) - f(\hat{\bx}_T) 
        &\leq \frac{1}{T} \sbr{\sum_{t \in \mT} 2\beta_T^{1/2} \sigma_{\Sigma_{t-1}}(\bx; \bX_{t-1}) + \sum_{t \in \mT^c} 2B }\\
        &\leq \frac{4\beta_T^{1/2}}{T} \sqrt{\rbr{V_T + \zeta^2 T} \gamma_T(\Sigma_T)} + \frac{2B}{T} \min \cbr{4 \overline{\gamma}\rbr{4\gamma_T(\Sigma_T), \zeta^2}, 4\gamma_T(\Sigma_T)}.
    \end{align}
\end{proof}

\begin{theorem}[Regret bound of VA-GP-UCB for $k=\sek$ and $k = \matk$.]
    \label{thm:cr_sr_vagpucb}
    Let us assume the same setting as that of \lemref{lem:gen_reg_vagpucb}. Furthermore, suppose $V_T = \Omega(1)$.
    Then, the following statements hold 
    with probability at least $1-\delta$, 
    \begin{itemize}
        \item If $k = \sek$, by setting $\zeta^2$ as $\zeta^2 = 1/T$, we have
        \begin{equation}
            R_T = O\rbr{\sqrt{V_T} \ln^{d+1} T}~~\mathrm{and}~~r_T = O\rbr{\sqrt{\frac{V_T}{T^2}} \ln^{d+1} T}.
        \end{equation}
        \item If $k = \matk$, by setting $\zeta^2$ as $\zeta^2 = 1/T$, we have
        \begin{equation}
            R_T = \tilde{O}\rbr{T^{\frac{2d}{2\nu +d}} \sqrt{V_T}}~~\mathrm{and}~~r_T = \tilde{O}\rbr{T^{-\frac{2\nu-d}{2\nu +d}} \sqrt{V_T}}.
        \end{equation}
    \end{itemize}
\end{theorem}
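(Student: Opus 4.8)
The plan is to treat \thmref{thm:cr_sr_vagpucb} as a direct corollary of the general regret bound in \lemref{lem:gen_reg_vagpucb}: I would set the threshold parameter to $\zeta^2 = 1/T$, control the maximum information gain $\gamma_T(\Sigma_T)$ through the lower threshold together with the monotone property of MIG, and then substitute the explicit kernel-dependent MIG upper bounds and simplify. Since \lemref{lem:gen_reg_vagpucb} already delivers both the cumulative and simple regret bounds on the same $1-\delta$ event, the high-probability guarantee is inherited for free, and the whole argument reduces to order-of-magnitude bookkeeping.

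The key step is the bound on $\gamma_T(\Sigma_T)$. In \algoref{alg:vagpucb} the variance parameter is thresholded as $\lambda_t^2 = \max\cbr{\rho_t^2, \zeta^2}$, so $\underline{\lambda}_T^2 \coloneqq \min_{t \in [T]} \lambda_t^2 \geq \zeta^2$. Using the monotone property of MIG established in \appref{sec:MIG_monotone_proof} (the data processing inequality), I would write $\gamma_T(\Sigma_T) \leq \gamma_T(\underline{\lambda}_T^2 \bI_T) \leq \gamma_T(\zeta^2 \bI_T) \leq \overline{\gamma}(T, \zeta^2)$. With $\zeta^2 = 1/T$ the argument of the MIG bound becomes $T/\zeta^2 = T^2$, and the known bounds give $\overline{\gamma}(T, 1/T) = O(\ln^{d+1} T)$ for $k = \sek$ and $\overline{\gamma}(T, 1/T) = \tilde{O}(T^{\frac{2d}{2\nu+d}})$ for $k = \matk$. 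Consequently $\beta_T^{1/2} = B + \sqrt{2 \gamma_T(\Sigma_T) + 2 \ln (1/\delta)}$ is $O(\ln^{(d+1)/2} T)$ and $\tilde{O}(T^{\frac{d}{2\nu+d}})$ in the two cases, respectively, treating $B$, $d$, $\ell$, $\nu$, and $\ln(1/\delta)$ as $\Theta(1)$.

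It then remains to plug these into the two displayed inequalities of \lemref{lem:gen_reg_vagpucb}. The only simplifications needed are $\zeta^2 T = 1$ and $V_T = \Omega(1)$, which give $V_T + \zeta^2 T = O(V_T)$, so the dominant contribution is the second term $4 \beta_T^{1/2} \sqrt{(V_T + \zeta^2 T)\gamma_T(\Sigma_T)}$. For $k = \sek$ this yields $O(\ln^{(d+1)/2} T \cdot \sqrt{V_T}\, \ln^{(d+1)/2} T) = O(\sqrt{V_T}\, \ln^{d+1} T)$, and the first term $2B \cdot 4 \gamma_T(\Sigma_T) = O(\ln^{d+1} T)$ is absorbed because $V_T = \Omega(1)$; for $k = \matk$ the same computation gives $\tilde{O}(T^{\frac{2d}{2\nu+d}}\sqrt{V_T})$. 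Dividing by $T$ (which is exactly the relation between the cumulative and simple regret bounds in \lemref{lem:gen_reg_vagpucb}) produces $r_T = O(\sqrt{V_T/T^2}\,\ln^{d+1} T)$ and $r_T = \tilde{O}(T^{-\frac{2\nu-d}{2\nu+d}}\sqrt{V_T})$, after using $\frac{2d}{2\nu+d} - 1 = -\frac{2\nu-d}{2\nu+d}$.

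The argument is essentially routine bookkeeping, so there is no serious obstacle; the one point requiring care is the MIG control in the second paragraph, namely recognizing that the lower threshold $\zeta$ is what prevents $\gamma_T(\Sigma_T)$ (and hence $\beta_T^{1/2}$ and the first regret term) from blowing up when some $\rho_t^2$ are tiny, and choosing $\zeta^2 = 1/T$ so that the resulting $T/\zeta^2 = T^2$ inflates the MIG bound only by a harmless constant (for $\sek$) or $\tilde{O}(1)$ (for $\matk$) factor relative to $\gamma_T(\bI_T)$.
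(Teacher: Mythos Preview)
Your proposal is correct and follows essentially the same route as the paper's proof: both invoke \lemref{lem:gen_reg_vagpucb}, bound $\gamma_T(\Sigma_T) \leq \gamma_T(\zeta^2 \bI_T)$ via the monotone property, plug in the kernel-specific MIG bounds with $\zeta^2 = 1/T$, use $\zeta^2 T = 1$ together with $V_T = \Omega(1)$ to simplify $V_T + \zeta^2 T = O(V_T)$, and finally obtain the simple regret by dividing the cumulative bound by $T$. The only cosmetic difference is that the paper also evaluates $\overline{\gamma}(4\gamma_T(\Sigma_T), \zeta^2)$ explicitly in the Mat\'ern case, whereas you (correctly) just bound the $\min$ by $4\gamma_T(\Sigma_T)$, which is already enough.
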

\begin{proof}
    When $k = \sek$, $\gamma_T(\Sigma_T) = O\rbr{\ln^{d+1}\frac{T}{\zeta^2}} = O\rbr{\ln^{d+1} T^2}$ and $4 \beta_T^{1/2} \sqrt{\rbr{V_T + \zeta^2 T} \gamma_T(\Sigma_T)} = O\rbr{\gamma_T(\Sigma_T) \sqrt{V_T}} = O\rbr{\sqrt{V_T} \ln^{d+1} T^2}$. By noting $V_T = \Omega(1)$, we obtain
    \begin{equation}
    R_T = O\rbr{\sqrt{V_T} \ln^{d+1} T}
    \end{equation}
    from \lemref{lem:gen_reg_vagpucb}.
    
    When $k = \matk$, $\gamma_T(\Sigma_T) \leq \gamma_T(\zeta^2 \bI_T) = \tilde{O}\rbr{\rbr{\frac{T}{\zeta^2}}^{\frac{d}{2\nu+d}}} = \tilde{O}\rbr{T^{\frac{2d}{2\nu+d}}}$, $\overline{\gamma}\rbr{4\gamma_T(\Sigma_T), \zeta^2} = \tilde{O}\rbr{T^{\rbr{\frac{d}{2\nu+d}}^2} (\zeta^2)^{-\frac{d}{2\nu+d} - \rbr{\frac{d}{2\nu+d}}^2}} = \tilde{O}\rbr{T^{\sbr{2\rbr{\frac{d}{2\nu+d}}^2 + \frac{d}{2\nu+d}}}}$, and $4 \beta_T^{1/2} \sqrt{\rbr{V_T + \zeta^2 T} \gamma_T(\Sigma_T)} = O\rbr{\gamma_T(\Sigma_T) \sqrt{V_T}} = \tilde{O}\rbr{ T^{\frac{2d}{2\nu+d}} \sqrt{V_T}}$. 
    By noting $V_T = \Omega(1)$, we can conclude the following from \lemref{lem:gen_reg_vagpucb}:
    \begin{equation}
    R_T = \tilde{O}\rbr{T^{\frac{2d}{2\nu +d}} \sqrt{V_T}}.
    \end{equation}
    Finally, by comparing the upper bound of $r_T$ and $R_T$ in \lemref{lem:gen_reg_vagpucb},
    we can also confirm that the simple regret bounds are followed by multiplying $1/T$ to the above cumulative regret upper bounds. 
\end{proof}

\begin{remark}
    In \thmref{thm:cr_sr_vagpucb}, we suppose that the learner 
    does not have prior knowledge about $V_T$ before running the algorithm.
    On the other hand, if we assume the learner knows $V_T$, 
    the regret upper bound in \thmref{thm:cr_sr_vagpucb} 
    can be smaller by setting the lower threshold $\zeta^2$ depending on $V_T$. Specifically, when $k = \matk$, the setting $\zeta^2 = V_T/T$ improves the polynomial dependence of $V_T$.
\end{remark}

\begin{algorithm}[t!]
    \caption{Variance-aware Gaussian process upper confidence bound (VA-GP-UCB)}
    \label{alg:vagpucb}
    \begin{algorithmic}[1]
        \REQUIRE Confidence width parameter $\beta_t^{1/2} > 0$, lower threshod $\zeta > 0$ of variance parameters.
        \STATE $\bX_0 = \emptyset$, $\by_0 = \emptyset$, $\Sigma_0 = \emptyset$.
        \FOR {$t = 1, \ldots, T$}
            \STATE Compute $\mathrm{ucb}_{t}(\bx) \coloneqq \mu_{\Sigma_{t-1}}(\bm{x}; \bX_{t-1}, \by_{t-1}) + \beta^{1/2} \sigma_{\Sigma_{t-1}}(\bm{x}; \bX_{t-1})$.
            \STATE $\bm{x}_t \leftarrow \mathrm{arg~max}_{\bm{x} \in \mathcal{X}} \mathrm{ucb}_{t}(\bx)$.
            \STATE Observe $y_t = f\left(\bm{x}_t\right) + \epsilon_t$.
            \STATE Obtain the variance proxy $\rho_t^2 \geq 0$.
            \STATE $\lambda_t^2 \leftarrow \max\cbr{\rho_t^2, \zeta^2}$.
            \STATE $\bX_{t} \leftarrow [\bx_m]_{m \in [t]}$, $\Sigma_t \leftarrow \mathrm{diag}\rbr{\lambda_1^2, \ldots, \lambda_t^2}$.
        \ENDFOR
    \end{algorithmic}
\end{algorithm}

\section{Potential Applications for Non-Stationary Variance Setting in GP-Bandits}
\label{sec:pot_app_nsv}
In this section, We discuss potential applications for the non-stationary variance setting in GP-bandits.

\begin{itemize}
    \item \textbf{Reinforcement Learning:} The existing variance dependent algorithms in linear bandits~\cite{zhou2021nearly,zhang2021improved,zhou2022computationally,zhao2023variance}
    motivates the non-stationary variance setting as one of the online reinforcement learning problems. 
    Specifically, \citet{zhou2021nearly} consider the regret minimization problem under a specific Markov decision process (MDP), which is called linear mixture MDP, and subsumes the linear bandit problem when the length of the horizon is $1$. In the decision-making under linear mixture MDP, \citet{zhao2023variance} show the algorithm with variance-dependent regret guarantees.
    If we assume the linear mixture MDP whose feature map of the transition probability is infinite-dimensional and induced by some kernel function, the kernelized extension of the setting of the existing works~\cite{zhou2021nearly,zhang2021improved,zhao2023variance} is naturally derived. Extensions in such a reinforcement learning setting are an interesting direction for our future research.
    \item \textbf{Experimental Design:} In scientific experiments, the observation noise level of the result of the experiment may vary over time. Specifically, the observation noise level can increase over extended periods due to factors such as environmental fluctuations, material degradation, or systematic drifts in measurement instruments. For example, such factors in measurement accuracy have been observed in studies of chemical analysis~\cite{hickstein2018rapid}.
    \item \textbf{Stationary Setting with Heteroscedastic Variance:} \citet{kirschner18heteroscedastic} consider the heteroscedastic variance setting where the variance proxy $\rho^2(\bx_t)$ may depend on the selected input $\bx_t$. If we apply our algorithm to a heteroscedastic setting, the resulting regret of our non-stationary variance algorithm is quantified by the cumulative variance proxy $V_T = \sum_{t=1}^T \rho^2(\bx_t)$ of the selected inputs\footnote{Our VA-GP-UCB algorithm can be applied in the same conditionally sub-Gaussian assumption as used in \cite{kirschner18heteroscedastic} (Eq.~(1) in \cite{kirschner18heteroscedastic}). On the other hand, VA-PE and VA-MVR algorithms require a more restricted independence assumption that the noise $(\epsilon_t)$ are conditionally independent given the MVR input sequence.}. The further precise quantification of the increasing speed of $V_T$ in this setting requires an additional structural assumption about $\rho^2(\cdot)$. For example, we expect $V_T$ is increasing sublinearly with our algorithm design of $\bx_t$ if there exists a unique maximizer $\bx^{\ast}$, and $\rho^2(\bx) \rightarrow \rho^2(\bx^{\ast})$ (as $\bx \rightarrow \bx^{\ast}$) and $\rho^2(\bx^{\ast}) = 0$ holds. We believe this research direction is an interesting application of our analysis to the heteroscedastic setting.
\end{itemize}

\end{document}